\newcommand{\alg}{$\mathsf{FAT}$-$\mathsf{Clipping}~$}
\newcommand{\algns}{$\mathsf{FAT}$-$\mathsf{Clipping}$}
\newcommand{\algpr}{$\mathsf{FAT}$-$\mathsf{Clipping}$-$\mathsf{PR}~$}
\newcommand{\algprns}{$\mathsf{FAT}$-$\mathsf{Clipping}$-$\mathsf{PR}$}
\newcommand{\algpi}{$\mathsf{FAT}$-$\mathsf{Clipping}$-$\mathsf{PI}~$}
\newcommand{\algpins}{$\mathsf{FAT}$-$\mathsf{Clipping}$-$\mathsf{PI}$}
\newcommand{\x}{\mathbf{x}}
\newcommand{\y}{\mathbf{y}}
\newtheorem{thm}{Theorem}
\newtheorem{cor}[thm]{Corollary}
\newtheorem{lem}{Lemma}
\newtheorem{rem}{Remark}
\newtheorem{assum}{Assumption}
\title{Taming Fat-Tailed (``Heavier-Tailed'' with Potentially Infinite Variance) Noise in Federated Learning}
\author{
Haibo Yang \\
Dept. of ECE\\
The Ohio State University\\
Columbus, OH 43210 \\
\texttt{yang.5952@osu.edu} \\
\And
Peiwen Qiu \\
Dept. of ECE\\
The Ohio State University\\
Columbus, OH 43210 \\
\texttt{qiu.617@osu.edu} \\ 
\And
Jia Liu \\
Dept. of ECE\\
The Ohio State University\\
Columbus, OH 43210 \\
\texttt{liu@ece.osu.edu} \\ 
}
\begin{document}

\maketitle
 

\begin{abstract}
In recent years, federated learning (FL) has emerged as an important distributed machine learning paradigm to collaboratively learn a global model with multiple clients, while keeping data local and private.
However, a key assumption in most existing works on FL algorithms' convergence analysis is that the noise in stochastic first-order information has a finite variance.
Although this assumption covers all light-tailed (i.e., sub-exponential) and some heavy-tailed noise distributions (e.g., log-normal, Weibull, and some Pareto distributions), it fails for many fat-tailed noise distributions (i.e., ``heavier-tailed'' with potentially infinite variance) that have been empirically observed in the FL literature.
To date, it remains unclear whether one can design convergent algorithms for FL systems that experience fat-tailed noise.
This motivates us to fill this gap in this paper by proposing an algorithmic framework called \alg (\ul{f}ederated \ul{a}veraging with \ul{t}wo-sided learning rates and \ul{clipping}), which contains two variants: \alg per-round (\algprns) and \alg per-iteration (\algpins).
Specifically, for the largest tail-index $\alpha \in (1,2]$ such that the fat-tailed noise in FL still has a bounded $\alpha$-moment, we show that both variants achieve $\mathcal{O}((mT)^{\frac{2-\alpha}{\alpha}})$ and $\mathcal{O}((mT)^{\frac{1-\alpha}{3\alpha-2}})$ convergence rates in the strongly-convex and general non-convex settings, respectively, where $m$ and $T$ are the numbers of clients and communication rounds.
Moreover, with more clipping operations compared to \algprns, \algpi further enjoys a linear speedup effect with respect to the number of local updates at each client and being lower-bound-matching (i.e., order-optimal).
Collectively, our results advance the understanding of designing efficient algorithms for FL systems that exhibit fat-tailed first-order oracle information.
%
\end{abstract}

\section{Introduction} \label{sec:intro}
In recent years, federated learning (FL) has emerged as an important distributed machine learning paradigm, where, coordinated by a server, a set of clients collaboratively learn a global model, while keeping their training data local and private.
With intensive research in recent years, researchers have developed many FL algorithms (e.g., FedAvg~\cite{mcmahan2017communication} and many follow-ups~\cite{Li2020fedprox,Karimireddy2020SCAFFOLD,wang2020fednova,zhang2020fedpd,acar2021feddyn,yang2021linearspeedup,reddi2021adaptive,khanduri2021stem,gu2021fast,luo2021nofear,karimireddy2021mime}) that  have been theoretically shown to achieve fast convergence rates in the presence of various types of randomness and heterogeneity resulted from training data, network environments, computing resources at clients, etc.
Moreover, many of these algorithms enjoy the so-called ``linear speedup'' effect, i.e., the convergence time to a first-order stationary point is inversely proportional to the number of workers and local update steps.


However, despite the recent advances in FL algorithm design and theoretical understanding,
a ``cloud that remains obscures the sky of FL'' is a common assumption that can be found in almost all works on performance analysis of FL algorithms, which states that the random noise in stochastic first-order oracles (e.g., stochastic gradients or associated estimators) has a {\em finite variance.}
Although this assumption is not too restrictive and can cover all light-tailed (i.e., sub-exponential) and some heavy-tailed noise distributions (e.g., log-normal, Weibull, and some Pareto distributions), it fails for many ``fat-tailed'' distributions (i.e., ``heavier-tailed'' with potentially infinite variance\footnote{In the literature, the terminologies ``heavy-tailed'' and ``fat-tailed''  are not universally defined and could be interchangeable sometimes.
In this paper, we follow the convention of those authors who reserve the term ``fat-tailed'' to mean the subclass of heavy-tailed distributions that exhibit power law decay behavior as well as infinite variance (see, e.g., \cite{nair2022fundamentals,taylor2016}).
Thus, every fat-tailed distribution is heavy-tailed, but the reverse is not true. 
}).
In fact, fat-tailed distributions have already been empirically observed under centralized learning settings~\cite{simsekli2019tail,gurbuzbalaban2021heavy,Zhang2020Why,gorbunov2020stochastic,panigrahi2019non}, let alone in the more heterogeneous FL environments. 
Later in Section~\ref{sec:mot}, we will also provide empirical evidence that shows that fat-tailed noise distributions can be easily induced by FL systems with non-i.i.d. datasets and heterogeneous local updates across clients.

The presence of fat-tailed noise poses two major challenges in FL algorithm design and analysis:
i) Experimentally, it has been shown in~\cite{charles2021large} that many existing FL algorithms suffer severely from fat-tailed noise and frequently exhibit the so-called ``catastrophic failure of model performance'' (i.e., sudden and dramatic drops of learning accuracy during the training phase);
ii) Theoretically, the infinite variance of the random noise in the stochastic first-order oracles renders most of the proof techniques in existing FL algorithmic convergence analysis inapplicable, which necessitates new algorithmic ideas and proof strategies.
In light of these empirical and theoretical challenges, two foundational questions naturally emerge in FL algorithm design and analysis: {\em 1) Can we develop FL algorithms with convergence guarantee under fat-tailed noise? 2) If the answer to 1) is ``yes,'' could we characterize their finite-time convergence rates?}
In this paper, we provide affirmative answer to the above questions.
Our major contributions in this paper are highlighted as follows:

\begin{list}{\labelitemi}{\leftmargin=1em \itemindent=0em \itemsep=.2em}

\item To address the challenges of the fat-tailed noise in FL algorithm design, we propose an algorithmic framework called \alg (\ul{f}ederated \ul{a}veraging with \ul{t}wo-sided learning rates and \ul{clipping}), which leverages a clipping technique to mitigate the impact of fat-tailed noise and uses a two-sided learning rate mechanism to lower communication complexity.
Our \alg framework contains two variants: \alg per-round (\algprns) and \alg per-iteration (\algpins).
We show that, for the largest tail-index $\alpha \in (1,2]$ such that the fat-tailed noise in FL still has a bounded $\alpha$-moment, both \alg variants achieve $\mathcal{O}((mT)^{\frac{2-\alpha}{\alpha}})$ and $\mathcal{O}((mT)^{\frac{1-\alpha}{3\alpha-2}})$ convergence rates in the strongly-convex and general non-convex settings, respectively, where $m$ and $T$ are the numbers of clients and communication rounds. 

\item Between the proposed \alg variants, \algpr only performs one clipping operation in each communication round before client communicates to the server, while \algpi performs clipping in each iteration of local model update.
We show that, at the expense of more clipping operations compared to \algprns, \algpi further achieves a linear speedup effect with respect to the number local model updates at each client and is lower-bound matching in terms of convergence rate.

%

    
\item In addition to theoretical analysis, we also conduct extensive numerical experiments to study the fat-tailed phenomenon in FL systems and verify the efficacy of our proposed \alg algorithms for FL systems with fat-tailed noise.
We first provide concrete empirical evidence that fail-tailed noise distributions are not uncommon in FL systems with non-i.i.d. datasets and heterogeneous local updates.
We show that our \alg algorithms render a much smoother FL training process, which effectively prevents the ``catastrophic failure'' in various FL settings.
\end{list}

\begin{table}[t!]
	\centering
	\caption{Convergence rate comparisons under fat-tailed noise distributions (shaded parts are our results; metrics: $f(\x) - f(\x^*) \leq \epsilon$ and $\| \nabla f(\x) \| \leq \epsilon$ for strongly-convex and non-convex functions, respectively):
	$\alpha = 2$ and $\alpha \in (1, 2)$ correspond to non-fat-tailed and fat-tailed noises, respectively. 
	Here, $R$ is the total number of iterations for centralized algorithms (SGD and GClip); $K$ and $T$ are local update steps and communication rounds in the FL setting, respectively; $m$ is the number of clients. 
	N/A means no theoretical guarantee for convergence. 
	Note that the total number of iterations $R$ in FL can be computed as $R = KT$, which relates to that in the centralized setting. 
	}
     \renewcommand{\arraystretch}{1.2}
     {\scriptsize
	\begin{tabular}{p{2cm}<{\centering} | p{2.6cm}<{\centering} | p{2.15cm}<{\centering} ||p{2.86cm}<{\centering} | p{2.15cm}<{\centering}} 
		\hline
          \multirow{2}{*}{Methods} & \multicolumn{2}{c ||}{Strongly Convex Objective Functions} & \multicolumn{2}{c}{Nonconvex Objective Functions} \\
          \cline{2-5}
            & Fat-Tailed & Non-Fat-Tailed & Fat-Tailed & Non-Fat-Tailed \\ \hline 
		SGD\cite{ghadimi2013stochastic} & N/A & $\mathcal{O}(R^{-1})$ & N/A & $\mathcal{O}(R^{-\frac{1}{4}})$ \\
		GClip\cite{zhang2020adaptive} & $\mathcal{O}(R^{\frac{ 2 - 2\alpha}{\alpha}})$ & $\mathcal{O}(R^{-1})$ & $\mathcal{O}(R^{\frac{1 - \alpha}{3 \alpha - 2}})$ & $\mathcal{O}(R^{-\frac{1}{4}})$ \\
          \hline
          FedAvg\cite{Karimireddy2020SCAFFOLD,yang2021linearspeedup} & N/A & $\tilde{\mathcal{O}}((mKT)^{-1})$ & N/A & $\mathcal{O}((mKT)^{-\frac{1}{4}})$ \\
          \rowcolor{lightgray!50}
		$\boldsymbol{\mathsf{FAT}}${\bf-}$\boldsymbol{\mathsf{Clipping}}${\bf-}$\boldsymbol{\mathsf{PR}}$ & $\mathcal{O}((mT)^{\frac{2 - 2\alpha}{\alpha}} K^{\frac{2}{\alpha}})$ & $\tilde{\mathcal{O}}((mKT)^{-1})$ & $\mathcal{O}((mT)^{\frac{ 1 - \alpha}{3 \alpha - 2}} K^{\frac{2 - \alpha}{3 \alpha - 2}})$ & $\mathcal{O}((mKT)^{-\frac{1}{4}})$  \\
          \rowcolor{lightgray!50}
		$\boldsymbol{\mathsf{FAT}}${\bf-}$\boldsymbol{\mathsf{Clipping}}${\bf-}$\boldsymbol{\mathsf{PI}}$ & $\tilde{\mathcal{O}}((mKT)^{\frac{ 2 - 2 \alpha }{\alpha}})$ & $\tilde{\mathcal{O}}((mKT)^{-1})$ & $\mathcal{O}((mKT)^{\frac{ 1 - \alpha}{3 \alpha - 2}})$ & $\mathcal{O}((mKT)^{-\frac{1}{4}})$  \\
          \hline
          \rowcolor{lightgray!50}
		{\bf Lower Bound} & $\Omega((mKT)^{\frac{ 2 - 2 \alpha }{\alpha}})$ & $\Omega((mKT)^{-1})$ & $\Omega((mKT)^{\frac{ 1 - \alpha}{3 \alpha - 2}})$ & $\Omega((mKT)^{-\frac{1}{4}})$  \\
          \hline
	\end{tabular}}
    \label{tab:bound}
\end{table}

For quick reference and easy comparisons, we summarize all convergence rate results in Table~\ref{tab:bound}.
The rest of the paper is organized as follows.
In Section~\ref{sec:related_work}, we review the literature to put our work in comparative perspectives.
In Section~\ref{sec:mot}, we provide empirical fat-tailed evidence for FL to further motivate this work.
Section~\ref{sec:alg} presents our \alg algorithms and their convergence analyses.
Section~\ref{sec:numerical} presents numerical results and Section~\ref{sec:conclusion} concludes this paper.
Due to space limitation, all proof details and some experiments are provided in the supplementary material.

\section{Related work} \label{sec:related_work}

In this section, we will provide a quick overview on three related topics in the literature: i) federated learning, ii) heavy-tailed noise in learning, and iii) the clipping techniques, thus putting our work into comparative perspective to highlight our novelty and differences.

\smallskip
\textbf{1) Federated Learning:}
As mentioned earlier, FL has recently emerged as an important distributed learning paradigm. 
The first and perhaps the most popular FL method, the federated averaging (FedAvg) algorithm~\cite{mcmahan2017communication}, was initially proposed as a heuristic to improve communication efficiency and data privacy.
Since then, FedAvg has sparked many follow-ups to further address the challenges of data/system heterogeneity and further reduce iteration and communication complexities.
Notable approaches include adding regularization for the local loss function~\cite{Li2020fedprox,acar2021feddyn,zhang2020fedpd}, using variance reduction techniques~\cite{Karimireddy2020SCAFFOLD}, taking adaptive learning rate strategy~\cite{reddi2021adaptive} or adaptive communication strategy~\cite{Wang2019adapCom,yang2022anarchic}, and many momentum variants~\cite{wang2020fednova,khanduri2021stem,gu2021fast}.
Empirically, these algorithms are shown to be communication-efficient~\cite{mcmahan2017communication} and enjoy better generalization performance ~\cite{lin2018don}.
Moreover, many state-of-the-art algorithms enjoy the ``linear speedup'' effect in terms of the numbers of clients and local update steps in different FL settings~\cite{Karimireddy2020SCAFFOLD,yang2021linearspeedup,yang2022anarchic,Zhang2022NETFLEETAL}.
We note, however, that all these theoretical results are built upon the finite variance assumption of stochastic gradient noise.
Unfortunately, when the stochastic gradient noise is fat-tailed, the finite variance assumption no longer holds, and hence the associated theoretical analysis is also invalid.
This motivates us to fill this gap in this paper and conduct the first theoretical analysis for FL systems that experience fat-tailed noise.

\smallskip
\textbf{2) Heavy-Tailed Noise in Learning:}
Recently, heavy-tailed noise has been empirically observed in modern machine learning systems and theoretically analyzed~\cite{nguyen2019first,simsekli2019tail,simsekli2020fractional,zhang2020adaptive,gorbunov2020stochastic,hodgkinson2021multiplicative,gurbuzbalaban2021heavy,wang2021convergence}. 
Heavy-tailed noise significantly affects the learning dynamics and computational complexity, such as the first exit time escaping from saddle point~\cite{nguyen2019first} and iteration complexity~\cite{zhang2020adaptive}.
This is dramatically different from classic dynamic analysis often based on sub-Gaussian noise assumption~\cite{yaida2018fluctuation,hu2019diffusion} and algorithmic convergence analysis with bounded variance assumption~\cite{bottou2018optimization,ghadimi2013stochastic}.
However, for FL, there exist few investigations about heavy-tailed behaviors.
In this paper, we first demonstrate through extensive experiments that fat-tailed (i.e., heavier-tailed) noise in FL can be easily induced by data heterogeneity and local update steps.
We then propose efficient algorithms to mitigate the impacts of fat-tails.

\smallskip
\textbf{3) The Clipping Technique:}
Since our \alg algorithms are based on the idea of clipping, here we provide an overview on this technique.
As far as we know, dating back to at least 1985~\cite{Shor1985MinimizationMF}, gradient clipping has been an effective technique to ensure convergence for optimization problems with fast-growing objective functions.
In deep learning, clipping is a widely adopted technique to address the exploding gradient problem.
Recently, gradient clipping was theoretically shown to be able to accelerate the training of centralized learning~\cite{zhang2020improved,Zhang2020Why,chen2020understanding,qian2021understanding}.
Also, clipping is an effective approach to mitigate heavy-tailed noise~\cite{Zhang2020Why,gorbunov2020stochastic} in centralized learning.
In FL, clipping has been used as the preconditioning step for preserving differential privacy (DP)~\cite{Zhang2021UnderstandingCF,das2021privacy,andrew2021differentially}.
Unlike these works, in this paper, we utilize clipping to address algorithmic divergence caused by fat-tailed noise in FL.


\section{Fat-tailed noise phenomenon in federated learning} \label{sec:mot}
In this section, we first introduce the basic FL problem statement and the standard FedAvg algorithm for FL.  
Then, we provide some necessary background of fat-tailed distributions and provide empirical evidence to show that fat-tailed noise can be easily induced by heterogeneity of data and local updates in FL, which further motivates this work.
Lastly, we demonstrate the algorithmic divergence and frequently catastrophic model failure under fat-tailed noise.


\begin{algorithm}[t!]
    \caption{Generalized FedAvg Algorithm (GFedAvg).} \label{alg:fedavg} 
    \begin{algorithmic}[1]
    \STATE $\text{Initialize } \x_1$. 
    \FOR{$t = 1, \cdots, T$ (communication round)} 
        \FOR{each client $i \in [m]$ in parallel} 
            \STATE {Update local model: $\x_{t, i}^1 = \x_{t}$.}
            \FOR{$k = 1, \cdots, K$ (local update step)}
                \STATE {Compute an unbiased estimate $ \nabla f_i(\x_{t, i}^k, \xi_{t, i}^k)$ of $\nabla f_i(\x_{t, i}^k)$}. 
                \STATE {Local update: $\x_{t,i}^{k+1} = \x_{t, i}^k - \eta_L \nabla f_i(\x_{t, i}^k, \xi_{t, i}^k)$.}
            \ENDFOR \\
            \STATE {Send $\Delta_t^i = \sum_{k \in [K]} \nabla f_i(\x_{t, i}^k, \xi_{t, i}^k)$ to the server.}
        \ENDFOR
        \STATE {Global Aggregation At Server:}
            \STATE { \hspace{20pt} Receive $\Delta_t^i, i \in [m]$.}
            \STATE {\hspace{20pt} Server Update: $\x_{t+1} = \x_t - \frac{\eta \eta_L}{m} \sum_{i \in [m]} \Delta_t^i$.}
            \STATE {\hspace{20pt} Broadcasting $\x_{t+1}$ to clients.}   
    \ENDFOR
    \end{algorithmic}
\end{algorithm}

{\bf1) Problem Statement of Federated Learning and the FedAvg Algorithm:} 
The goal of FL is to solve the following optimization problem:
\begin{equation}\label{objective}
    \min_{\x \in \mathbb{R}^d} f(\x) := \frac{1}{m} \sum_{i=1}^{m} f_i(\x),
\end{equation}
where $m$ is the number of clients and $f_i(\x) \triangleq \mathbb{E}_{\xi_i \sim D_i}[f(\x, \xi_i)]$ is the local loss function associated with a local data distribution $D_i$.
A key challenge in FL stems from data heterogeneity, i.e., $D_i \neq D_j, \forall i \neq j$.
In FL, the standard and perhaps the most popular algorithm is the federated averaging (FedAvg) method.
Here in Algorithm~\ref{alg:fedavg}, we illustrate a more generalized version of the original FedAvg (GFedAvg) with separate learning rates on the client and server sides~\cite{Karimireddy2020SCAFFOLD,yang2021linearspeedup,reddi2021adaptive}.
Note that when $\eta = 1$, GFedAvg reduces to the original FedAvg~\cite{mcmahan2017communication}.
In each communication round of GFedAvg, each client performs local update steps and returns the update difference $\Delta_t^i$.
The server then aggregates these results and update the global model~\footnote{We assume all clients participate in the training at each communication round, but the results can be extended to that with (uniformly random sampled) subset of clients in each communication round\cite{yang2021linearspeedup,Karimireddy2020SCAFFOLD}.}
and the updated model parameters will then be retrieved by the clients to start the next round of local updates.

{\bf 2) Empirical Evidence of Fat-Tailed Noise Phenomenon in Federated Learning:} 
With the basics of FL and the FedAvg algorithm, we are now in a position to demonstrate the empirical evidence of the existence of fat-tailed noise in FL systems.
As mentioned earlier, in most performance analyses of FL algorithms, a common assumption is the bounded variance assumption of the local stochastic gradients: $\mathbb{E}[\| \nabla f_i(\x, \xi) - \nabla f_i(\x) \|^2] \leq \sigma^2$.
This assumption holds for all light-tailed noise distributions (i.e., the sub-exponential family) and some heavy-tailed distributions (e.g., log-normal, Weibull, and some Pareto distributions).

However, the finite-variance assumption fails to hold for many fat-tailed noise distributions.
For instance, for a random variable $X$, if its density $p(x)$ has a power-law tail decreasing as $1/|x|^{\alpha+1}$ with $\alpha \in (0,2)$, then only the $\alpha$-moment of this noise exists with $\alpha <2$.
To more precisely characterize fat-tailed distributions, in this paper, we adopt the notion of tail-index $\alpha$~\cite{simsekli2019tail} to parameterize fat-tailed and heavy-tailed distributions.
More specifically, if the density of a random variable $X$'s distribution decays with a power law tail as $1/|x|^{\alpha+1}$ where $\alpha \in (0,2]$, then $\alpha$ is called the {\em tail-infex}.
This $\alpha$-parameter determines the behavior of the distribution:
the smaller the $\alpha$-value, the heavier the tail of the distribution.
Also, the $\alpha$-parameter also determines the moments:
$\mathbb{E}[|X|^r] < \infty$ if and only if $r<\alpha$, which implies that $X$ has infinite variance when $\alpha <2$, i.e., being {\em fat-tailed}.


Next, we investigate the tail property of model updates returned by clients in the GFedAvg algorithm.
Due to multiple local steps in the GFedAvg algorithm, we view the whole update vector $\Delta_t^i$ returned by each client, which we called ``pseudo-gradient,'' as a random vector and then analyze its statistical properties.
Note that in the special case with the number of local update $K=1$, $\Delta_t^i$ coincides with a single stochastic gradient of a random sample, (i.e., $\Delta_t^i = \nabla f_i(\x_{t}, \xi_{t})$).




\begin{figure*}[t!]
\begin{minipage}{0.32\textwidth}
	\includegraphics[width=1\linewidth]{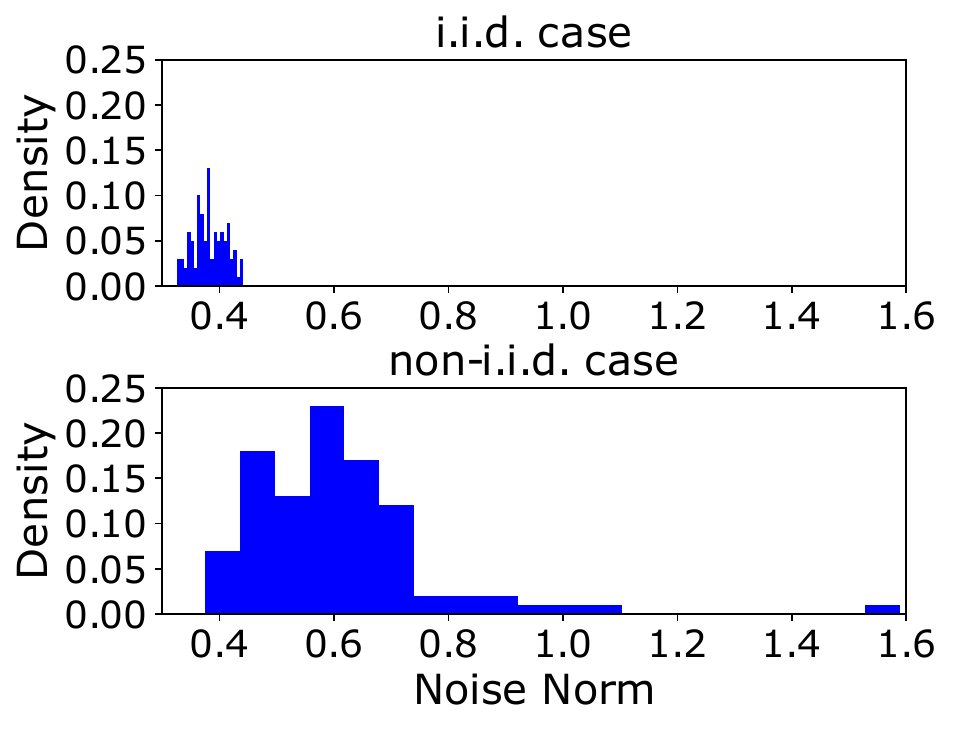}
	\caption{Distributions of the norms of the pseudo-gradient noises computed with CNN on CIFAR-10 dataset in i.i.d. case (top) and non-i.i.d. case (bottom). $m=100$ clients participate in the training.}
	\label{fig:noise norm density}
  \end{minipage}
  \hspace{0.01\textwidth}
\begin{minipage}{0.32\textwidth}
	\includegraphics[width=1\linewidth]{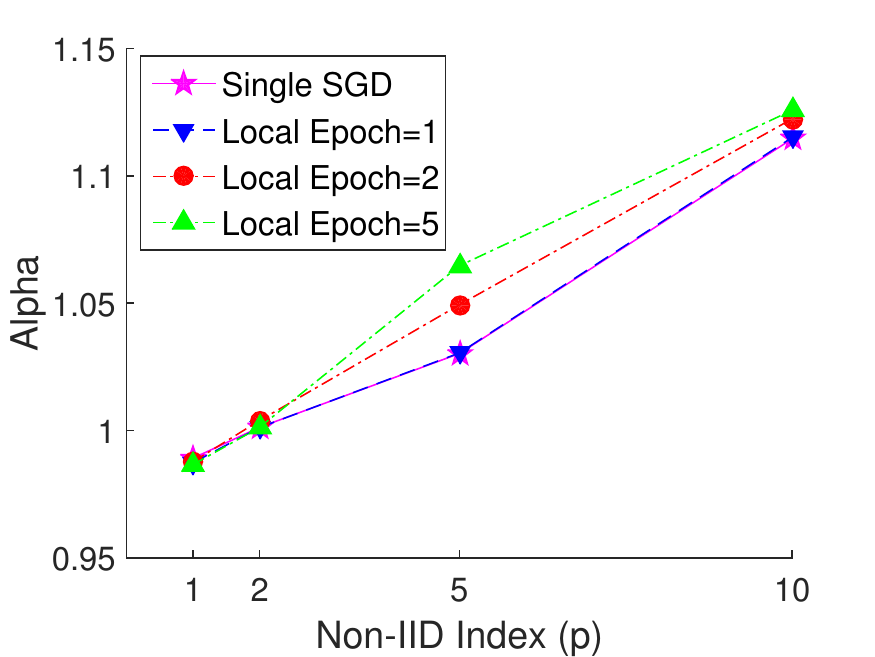}
	\caption{Estimation of $\alpha$ for CIFAR-10 dataset. The non-IID index $p$ represents the data heterogeneity level, and $p=10$ is the IID case. The smaller the $p$, the more heterogeneous the data across clients.}
	\label{alpha}
\end{minipage}
  \hspace{0.01\textwidth}
\begin{minipage}{0.32\textwidth}
	\includegraphics[width=1\linewidth]{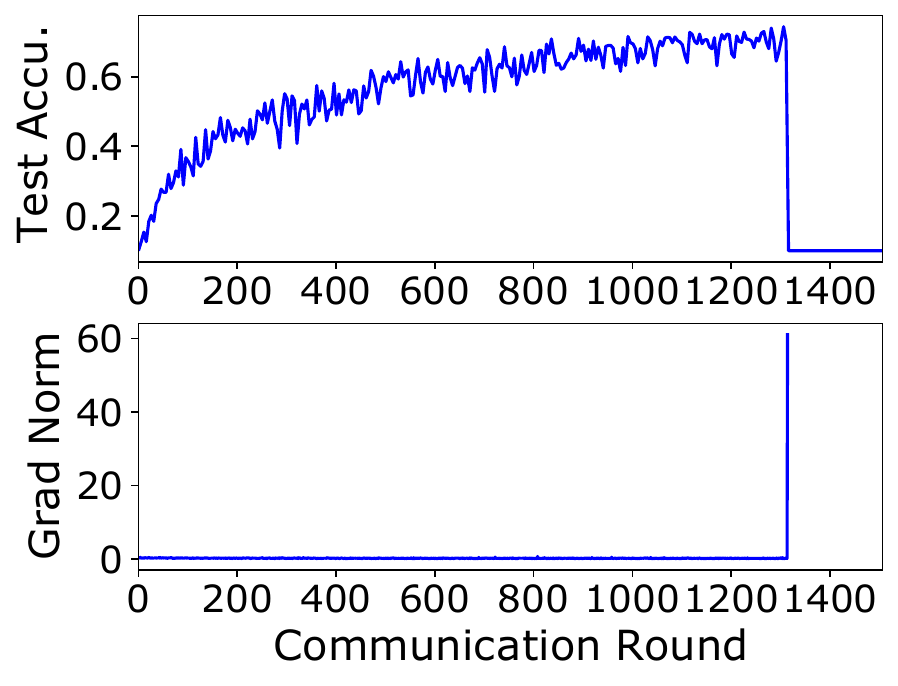}
	\caption{Catastrophic training failures happen when applying GFedAvg on CIFAR-10 dataset, where the test accuracy experiences a sudden and dramatic drop and the pseudo-gradient norm increases substantially.} 
	\label{fig:training failure}
\end{minipage}
\end{figure*}

We study the mismatch between the ``non-fat-tailed'' condition ($\alpha=2$) and the empirical behavior of the stochastic psudo-gradient noise. 
In Fig.~\ref{fig:noise norm density}, we illustrate the distributions of the norms of the stochastic pseudo-gradient noises computed with convolutional neural network (CNN) on the CIFAR-10 dataset in both i.i.d. and non-i.i.d. client dataset settings. 
We can clearly observe that the non-i.i.d. case exhibits a rather fat-tailed behavior, where the pseudo-gradient norm could be as large as $1.6$.
Although the i.i.d. case appears to have a much lighter tail, our detailed analysis shows that it still exhibits a fat-tailed behavior.
To see this, in Fig.~\ref{alpha}, we estimate $\alpha$-value for the CIFAR-10 dataset in different scenarios: 1) different local update steps, and 2) different data heterogeneity. 
We use a parameter $p$ to characterize the data heterogeneity level, with $p=10$ corresponding to the i.i.d. case. 
The smaller the $p$, the more heterogeneous the data among clients. 
Fig.~\ref{alpha} shows that the $\alpha$-value is smaller than $1.15$ in all scenarios, and $\alpha$ increases as the non-i.i.d. index $p$ increases (i.e., closer to the i.i.d. case). 
This implies that the stochastic pseudo-gradient noise is fat-tailed and the ``fatness'' increases as the clients' data become more heterogeneous.

{\bf 3) The Impacts of Fat-Tailed Noise on Federated Learning:} 
Next, we show that the fat-tailed noise could lead to a ``catastrophic model failure'' (i.e., a sudden and dramatic drop of learning accuracy), consistent with previous observations in the FL literature~\cite{charles2021large}. 
To demonstrate this, we apply GFedAvg on the CIFAR-10 dataset and randomly sample five clients among $m=10$ clients in each communication round. 
In Fig.~\ref{fig:training failure}, we illustrate a trial where a catastrophic training failure occurred. 
Correspondingly, we can observe in Fig.~\ref{fig:training failure} a spike in the norm of the pseudo-gradient. 
This exceedingly large pseudo-gradient norm motivates us to apply the clipping technique to curtail the gradient updates.
%
It is also worth noting that even if the squared norm of stochastic gradient may not be infinitely large in practice (i.e., having a bounded support empirically), it could still be too large and cause catastrophic model failures.
In fact, under fat-tailed noise, the FedAvg algorithm could {\em diverge}, which follows from the fact that there exists one function that SGD diverges under heavy-tailed noise (see Remark 1 in ~\cite{zhang2020adaptive}).
As a result, the returned value by one client might be exceedingly large, 
leading to divergence of the FedAvg-type algorithms.

It is worth pointing out that, although we have empirically shown heavy/fat-tailed noise in FL for the first time in this paper, we are by no means the only one to have observed heavy-tailed or fat-tailed noise phenomenon property in learning. 
Previous works have also found heavy/fat-tailed noise phenomenon in centralized training with SGD-type algorithms. 
For example, the work in~\cite{simsekli2019tail} showed the heavy-tailed noise phenomenon while (centralized) training the AlexNet on CIFAR-10. 
Here, we adopt a procedure similar to that in~\cite{simsekli2019tail} to evaluate the tail index $\alpha$ of the noise norm distribution in FL. 
As indicated above, we also observe that the (pseudo-)stochastic gradient noise is heavy/fat-tailed rather than Gaussian. 

It is also worth noting that it remains controversial whether the heavy/fat-tailed noise phenomenon exists in all models and datasets.
For example, the work in~\cite{panigrahi2019non} showed that the stochastic gradient noise is Gaussian at least in the early phases of training, while \cite{meng2020dynamic} showed that the stationary distribution of stochastic gradient noise is heavy-tailed and state-dependent. 
Also, the evaluation methodologies of $\alpha$ could be different in different works with different statistical errors, thus leading to different observations~\cite{panigrahi2019non,zhang2020adaptive}. 
We believe that the phenomenon of heavy/fat-tailed noise in training with SGD-type methods is an under-explored area that deserves more efforts from the community.

To conclude this section, we would also like to leave a caveat regarding catastrophic training failures. 
In this section, we have shown that, under heavy/fat-tailed noises, catastrophic training failures happen in FL training, which is consistent with the observations in large-cohort FL training~\cite{charles2021large}. 
However, this does not necessarily mean that all FL trainings will suffer from catastrophic failures. 
Sometimes, such catastrophic failures may not happen at all (see the appendix for such empirical evidence). 
Here, we hypothesize that the heavy/fat-tailed noise phenomenon in FL is highly correlated with catastrophic failures in FL.
This is based on our subsequent observations that such catastrophic failures in FL can be effectively mitigated by employing clipping methods.
However, whether or not the heavy/fat-tailed noise phenomenon is truly the culprit for catastrophic failures still needs further investigations.
Nonetheless, the mere existence of such a correlation between heavy/fat-tailed noise and catastrophic failures in FL warrants our study on mitigating heavy/fat-tailed noise in this paper.


\section{The \alg algorithmic framework for fat-tailed federated learning} \label{sec:alg}
Given the evidence of fat-tailed noise in FL and its potential catastrophic training failure as shown in Section~\ref{sec:mot}, there is a compelling need to design an efficient FL algorithm with provable convergence guarantee under fat-tailed noise in FL.
Interestingly, the observation of an exceedingly large pseudo-gradient norm in Fig.~\ref{fig:training failure} suggests a natural idea to mitigate fat-tailed noise: {\em clipping}.
Toward this end, in Section~\ref{subsec:alg} we first propose a clipping-based algorithmic framework called \algns, which contains two variants: \alg per-round (\algprns) and \alg per-iteration (\algpins). 
Then in Section~\ref{subsec:analysis}, we analyze their convergence rate performances.

\subsection{The \algpr and \algpi algorithms} \label{subsec:alg}

\begin{algorithm}[t!]
    \caption{The \algpr Algorithm.} \label{alg:cpr} 
    \begin{algorithmic}[1]
    \STATE $\text{Initialize } \x_1$. 
    \FOR{$t = 1, \cdots, T$ (communication round)} 
        \FOR{each client $i \in [m]$ in parallel} 
            \STATE {Update local model: $\x_{t, i}^1 = \x_{t}$.}
            \FOR{$k = 1, \cdots, K$ (local update step)}
                \STATE {Compute an unbiased estimate $ \nabla f_i(\x_{t, i}^k, \xi_{t, i}^k)$ of $\nabla f_i(\x_{t, i}^k)$}.
                \STATE {Local update: $\x_{t, i}^{k+1} = \x_{t, i}^k - \eta_L \nabla f_i(\x_{t, i}^k, \xi_{t, i}^k)$.}
            \ENDFOR \\
            \STATE {Let $\Delta_t^i = \sum_{k \in [K]} \nabla f_i(\x_{t, i}^k, \xi_{t, i}^k)$ .}
            \STATE \colorbox{lightgray}{ {\bf Clipping:} $\tilde{\Delta}_t^i = \min{ \{1, \frac{\lambda}{\| \Delta_t^i \|} \}} \Delta_t^i$, where $\Delta_t^i = \sum_{k \in [K]} \nabla f_i(\x_{t, i}^k, \xi_{t, i}^k)$.}
            \STATE { Send $\tilde{\Delta}_t^i$ to the server.}
        \ENDFOR
        \STATE {Global Aggregation At Server:}
            \STATE { \hspace{20pt} Receive $\tilde{\Delta}_t^i, i \in [m]$.}
            \STATE {\hspace{20pt} Server Update: $\x_{t+1} = \x_t - \frac{\eta \eta_L}{m} \sum_{i \in [m]} \tilde{\Delta}_t^i$.}
            \STATE {\hspace{20pt} Broadcasting $\x_{t+1}$ to clients.}   
    \ENDFOR
    \end{algorithmic}
\end{algorithm}

\begin{algorithm}[t!]
    \caption{The \algpi Algorithm.} \label{alg:cpi} 
    \begin{algorithmic}[1]
    \STATE $\text{Initialize } \x_1$. 
    \FOR{$t = 1, \cdots, T$ (communication round)} 
        \FOR{each client $i \in [m]$ in parallel} 
            \STATE {Update local model: $\x_{t, i}^1 = \x_{t}$.}
            \FOR{$k = 1, \cdots, K$ (local update step)}
                \STATE {Compute an unbiased estimate $ \nabla f_i(\x_{t, i}^k, \xi_{t, i}^k)$ of $\nabla f_i(\x_{t, i}^k)$}.
                \STATE \colorbox{lightgray}{{\bf Clipping:} $\tilde{\nabla} f_i(\x_{t, i}^k, \xi_{t, i}^k) = \min{ \{1, \frac{\lambda}{\| \nabla f_i(\x_{t, i}^k, \xi_{t, i}^k) \|}} \} \nabla f_i(\x_{t, i}^k, \xi_{t, i}^k)$.} 
                \STATE {Local update: $\x_{t, i}^{k+1} = \x_{t, i}^k - \eta_L \tilde{\nabla} f_i(\x_{t, i}^k, \xi_{t, i}^k)$.}
            \ENDFOR \\
            \STATE {Send $\tilde{\Delta}_t^i = \sum_{k \in [K]} \tilde{\nabla} f_i(\x_{t, i}^k, \xi_{t, i}^k)$ to the server.}
        \ENDFOR
        \STATE {Global Aggregation At Server:}
            \STATE { \hspace{20pt} Receive $\tilde{\Delta}_t^i, i \in [m]$.}
            \STATE {\hspace{20pt} Server Update: $\x_{t+1} = \x_t - \frac{\eta \eta_L}{m} \sum_{i \in [m]} \tilde{\Delta}_t^i$.}
            \STATE {\hspace{20pt} Broadcasting $\x_{t+1}$ to clients.}   
    \ENDFOR
    \end{algorithmic}
\end{algorithm}

We illustrate the \algpr and \algpi algorithms in Algorithms~\ref{alg:cpr} and \ref{alg:cpi}, respectively.
It can be seen that both \algpr and \algpi share a similar algorithmic structure with GFedAvg, with the key differences lying in the additional clipping operations.
In \algprns, each client performs a clipping in each communication round on the returned $\Delta_t^i$:
\begin{align}
    \tilde{\Delta}_t^i = \min\bigg\{ 1, \frac{\lambda}{\| \Delta_t^i \|}  \bigg\} \Delta_t^i, \label{eq:cpr_clipping}
\end{align}
and then sends $\tilde{\Delta}_t^i$ instead of $\Delta_t^i$ to the server (Line~10 in Algorithm~\ref{alg:cpr}).
By contrast, in \algpins, each client clips the stochastic gradient before each local update step (Line~7 in Algorithm~\ref{alg:cpi}):
\begin{align}
    &\tilde{\nabla} f_i(\x_{t, i}^k, \xi_{t, i}^k) = \min \bigg\{1, \frac{\lambda}{\| \nabla f_i(\x_{t, i}^k, \xi_{t, i}^k) \|} \bigg\} \nabla f_i(\x_{t, i}^k, \xi_{t, i}^k), \\
    &\x_{t, i}^{k+1} = \x_{t, i}^k - \eta_L \tilde{\nabla} f_i(\x_{t, i}^k, \xi_{t, i}^k). \label{eq:cpi_clipping}
\end{align}
Then, $\tilde{\Delta}_t^i = \sum_{k \in [K]} \tilde{\nabla} f_i(\x_{t, i}^k, \xi_{t, i}^k)$ is sent to the server for aggregation (Line~10 in Algorithm~\ref{alg:cpi}).

\subsection{Convergence analysis of the \alg algorithms} \label{subsec:analysis}
Before conducting the convergence analysis for the \alg algorithms, we first state two standard assumptions that are commonly used in the literature of first-order stochastic methods.

\begin{assum}[$L$-Lipschitz Continuous Gradient] \label{assum_smooth}
	There exists a constant $L > 0$, such that $ \| \nabla f_i(\x) - \nabla f_i(\y) \| \leq L \| \x - \y \|,
	\forall \x, \y \in \mathbb{R}^d, and \ i \in [m]$.
\end{assum}
\begin{assum}[Unbiased Local Gradient Estimator] \label{assum_unbias}
	The local gradient estimator is unbiased, i.e.,
	$\mathbb{E} [\nabla f_i(\x, \xi)] = \nabla f_i(\x)$, $\forall i \in [m]$, where $\xi$ is a random local data sample at the $i$-th worker.
\end{assum}

Next, we state the key bounded $\alpha$-moment assumption for {\em fat-tailed} the stochastic first-order oracle, which leverages the notion of tail-index introduced in Section~\ref{sec:mot}:
\begin{assum}[Bounded $\alpha$-Moment] \label{assum_moment}
	There exists a real number $\alpha \in (1, 2]$ and a constant $G \geq 0$, such that $\mathbb{E} [\| \nabla f_i(\x, \xi)||^\alpha] \leq G^\alpha$, $\forall i \in [m], \x \in \mathbb{R}^d$.
\end{assum}

\textbf{1) Convergence Rates of the \algpr Algorithm:}
We first state the convergence rates of \algpr for $\mu$-strongly convex and non-convex objective functions.
\begin{restatable}{theorem}{ClippingPerRoundBoundedConvex}{\em (Convergence Rate of \algpr in the Strongly Convex Case)}
	\label{thm:ClippingPerRoundBoundedConvex}
	Suppose that $f(\cdot)$ is a $\mu$-strongly convex function. Under Assumptions~\ref{assum_smooth}--\ref{assum_moment}, if $\eta \eta_L K \geq \frac{2}{\mu T}$,
	then the output $\bar{\x}_T $ of \algpr being chosen in such a way that $\bar{\x}_T = \x_t$ with probability $\frac{w_t}{\sum_{j \in [T]} w_j}$, where $w_t = (1 - \frac{1}{2} \mu \eta \eta_L K)^{1 - t} $, satisfies:
		\begin{align*}
			f(\bar{\x}_T) - f(\x^*)
			&\leq \frac{\mu}{2} \exp{\left( - \frac{1}{2} \mu \eta \eta_L K T \right)} \!+\! \frac{\eta \eta_L K}{2} G^\alpha \lambda^{2-\alpha} \!+\! \frac{4}{\mu} \left[2 G^{2 \alpha} \lambda^{2 - 2 \alpha} \!+\! 2 L^2 \eta_L^2 K^2 G^{\alpha} \lambda^{2 - \alpha} \right],
		\end{align*}
	where $\x^*$ denotes the global optimal solution.
	Further, let $\eta \eta_L K = \frac{2c}{\mu} \frac{\ln(T)}{mKT}$, where $c \geq 1$ is a constant satisfying $m^{\frac{2-2\alpha}{\alpha}} K^{\frac{2}{\alpha}} T^{c + \frac{2-2\alpha}{\alpha}}  \geq 1 $, and let $\eta_L \leq (mKT)^{\frac{1-\alpha}{\alpha}}$. It then follows that $$f(\bar{\x}_T) - f(\x^*)
		= \mathcal{O}((mT)^{\frac{2-2\alpha}{\alpha}} K^{\frac{2}{\alpha}}).$$
\end{restatable}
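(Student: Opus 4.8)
### Proof Proposal

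\textbf{Overall approach.} The plan is to run a standard descent-lemma analysis for the global iterates $\x_t$ of \algpr, but carefully track the two error sources created by clipping: (i) the \emph{bias} introduced by clipping the pseudo-gradient $\Delta_t^i$ (the clipped estimator is no longer unbiased for $K\nabla f_i$), and (ii) the \emph{second-moment} of the clipped pseudo-gradient. The key observation enabling everything is that for the clipping map $C_\lambda(v)=\min\{1,\lambda/\|v\|\}v$ one has deterministically $\|C_\lambda(v)\|\le\lambda$, and one can bound both $\mathbb{E}\|C_\lambda(v)-v\|$ (the bias) and $\mathbb{E}\|C_\lambda(v)\|^2$ in terms of the $\alpha$-moment $\mathbb{E}\|v\|^\alpha$. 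Concretely, $\mathbb{E}\|C_\lambda(v)-\mathbb{E}[C_\lambda(v)]\|$-type bounds and the bias bound $\|\mathbb{E}[C_\lambda(v)] - \mathbb{E}[v]\|\le \mathbb{E}[\|v\|\,\mathbf{1}\{\|v\|>\lambda\}]\le \mathbb{E}\|v\|^\alpha/\lambda^{\alpha-1}$, together with $\mathbb{E}\|C_\lambda(v)\|^2\le \lambda^{2-\alpha}\mathbb{E}\|v\|^\alpha$, are the workhorses. These give error terms scaling like $G^\alpha\lambda^{2-\alpha}$ and $G^{2\alpha}\lambda^{2-2\alpha}$, which is exactly the shape appearing in the theorem statement.

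\textbf{Key steps in order.} First I would fix a communication round $t$ and write the one-step update $\x_{t+1}=\x_t-\frac{\eta\eta_L}{m}\sum_i\tilde\Delta_t^i$. Using $\mu$-strong convexity (which gives $\langle \nabla f(\x_t),\x_t-\x^*\rangle \ge f(\x_t)-f(\x^*)+\tfrac\mu2\|\x_t-\x^*\|^2$ plus a quadratic-growth / PL-type inequality) I would expand $\mathbb{E}\|\x_{t+1}-\x^*\|^2$. The cross term splits as $-2\eta\eta_L\langle \nabla f(\x_t), \tfrac1m\sum_i \bar\Delta_t^i\rangle$ where $\bar\Delta_t^i=\mathbb{E}[\tilde\Delta_t^i]$; I write $\bar\Delta_t^i = K\nabla f_i(\x_t) + (\text{local-drift error}) + (\text{clipping bias})$. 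The local-drift error is controlled exactly as in vanilla FedAvg analyses using Assumption~\ref{assum_smooth} and the bounded $\alpha$-moment to bound $\mathbb{E}\|\x_{t,i}^k-\x_t\|$ (this produces the $L^2\eta_L^2 K^2 G^\alpha\lambda^{2-\alpha}$ term); the clipping bias is handled by the bound above (producing the $G^{2\alpha}\lambda^{2-2\alpha}$ term after squaring and using Young's inequality to absorb it against the $\tfrac\mu2\|\x_t-\x^*\|^2$ contraction). The second-moment term $\mathbb{E}\|\tfrac1m\sum_i\tilde\Delta_t^i\|^2$ is bounded using $\|\tilde\Delta_t^i\|\le K\lambda$ or more tightly $\mathbb{E}\|\tilde\Delta_t^i\|^2\lesssim K^2\lambda^{2-\alpha}G^\alpha$ (and, across clients, an $m$-averaging gain where independence is available), producing the $\eta\eta_L K G^\alpha\lambda^{2-\alpha}$ term. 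Assembling, I get a recursion of the form
\[
\mathbb{E}\,e_{t+1} \le \Bigl(1-\tfrac12\mu\eta\eta_L K\Bigr)\mathbb{E}\,e_t + \eta\eta_L K\cdot B,
\]
where $e_t$ mixes $\|\x_t-\x^*\|^2$ and a function-value gap and $B=\Theta\bigl(G^\alpha\lambda^{2-\alpha} + \tfrac1{\mu^2}G^{2\alpha}\lambda^{2-2\alpha} + \tfrac1{\mu^2}L^2\eta_L^2K^2G^\alpha\lambda^{2-\alpha}\bigr)$. Unrolling with the weights $w_t=(1-\tfrac12\mu\eta\eta_L K)^{1-t}$ (a standard weighted-average trick that converts the recursion into a bound on $\sum_t w_t(f(\x_t)-f(\x^*))/\sum_t w_t$, i.e.\ on $\mathbb{E}[f(\bar\x_T)-f(\x^*)]$), and using $\eta\eta_L K\ge 2/(\mu T)$ so that $(1-\tfrac12\mu\eta\eta_L K)^T\le e^{-\tfrac12\mu\eta\eta_L KT}$, yields the first displayed inequality.

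\textbf{Final optimization.} The second claim is then pure bookkeeping: substitute $\lambda$ as the free parameter and balance $G^\alpha\lambda^{2-\alpha}$ against $G^{2\alpha}\lambda^{2-2\alpha}$ — these balance at a $\lambda$ making each term $\Theta((\text{something})^{\tfrac{2-2\alpha}{\alpha}}\cdots)$. Plugging $\eta\eta_L K=\tfrac{2c}{\mu}\tfrac{\ln T}{mKT}$ kills the exponential term at rate $T^{-c}$ (negligible under the stated constraint $m^{\frac{2-2\alpha}{\alpha}}K^{\frac2\alpha}T^{c+\frac{2-2\alpha}{\alpha}}\ge1$), and the constraint $\eta_L\le(mKT)^{\frac{1-\alpha}{\alpha}}$ forces the drift term $L^2\eta_L^2K^2G^\alpha\lambda^{2-\alpha}$ to be dominated, leaving $f(\bar\x_T)-f(\x^*)=\mathcal{O}((mT)^{\frac{2-2\alpha}{\alpha}}K^{\frac2\alpha})$.

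\textbf{Main obstacle.} The delicate part is the \emph{interaction between clipping and local steps}: unlike the centralized single-clip case, here the clipped quantity $\Delta_t^i=\sum_k\nabla f_i(\x_{t,i}^k,\xi_{t,i}^k)$ is a sum over $K$ correlated iterates, and controlling $\mathbb{E}\|\Delta_t^i\|^\alpha$ (needed to invoke the clipping estimates) requires bounding the local-drift recursion $\x_{t,i}^{k+1}-\x_{t,i}^k=-\eta_L\nabla f_i(\cdot)$ under only an $\alpha$-moment bound (not a variance bound), which forces the use of $\|\cdot\|^\alpha$ triangle/Minkowski-type inequalities with $\alpha<2$ and prevents the usual orthogonality decompositions — this is why the bound degrades by the $K^{2/\alpha}$ factor and why \algpr does \emph{not} get a full linear speedup in $K$. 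I expect this drift-under-$\alpha$-moment estimate, together with correctly keeping the bias term $O(\lambda^{-(\alpha-1)})$ separate from the second-moment term $O(\lambda^{2-\alpha})$ throughout, to be where the real work lies.
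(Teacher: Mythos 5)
Your proposal is correct and follows essentially the same route as the paper: a one-step strong-convexity/Young's-inequality recursion for $\|\x_t-\x^*\|^2$, with the clipped pseudo-gradient's second moment bounded by $\lambda^{2-\alpha}G^\alpha$, the clipping bias bounded via $\mathbb{E}[\|\Delta_t^i\|\mathbf{1}\{\|\Delta_t^i\|\ge\lambda\}]\le\mathbb{E}\|\Delta_t^i\|^\alpha\lambda^{1-\alpha}$, the local drift controlled by smoothness, and the same weighted averaging $w_t=(1-\tfrac12\mu\eta\eta_LK)^{1-t}$ followed by the same tuning of $\eta\eta_LK$, $\eta_L$, and $\lambda=(mKT)^{1/\alpha}$. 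The only minor deviations are cosmetic: the paper bounds the drift term as $L^2\eta_L^2K^2G^2$ (first moments of unclipped local gradients via Jensen) rather than $L^2\eta_L^2K^2G^\alpha\lambda^{2-\alpha}$, and the condition $\eta\eta_LK\ge 2/(\mu T)$ is used to lower-bound the weight normalization $\sum_t w_t$ rather than for the exponential bound itself.
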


\begin{restatable}{theorem}{ClippingPerRoundBoundedNonconvex}{\em (Convergence Rate of \algpr in the Nonconvex Case)}
	\label{thm:ClippingPerRoundBoundedNonconvex}
	Suppose that $f(\cdot)$ is a nonconvex function. Under Assumptions~\ref{assum_smooth}--\ref{assum_moment}, if $\eta \eta_L K L \leq 1$,
	then the sequence of outputs $\{ \x_k \}$ generated by \algpr satisfies:
	\begin{align*}
		\min_{t \in [T]} \mathbb{E} \| \nabla f(\x_t) \|^2 &\leq \frac{2 \left( f(\x_1) - f(x_T) \right)}{\eta \eta_L K T} \!+\! \left( L^2 \eta_L^2 K^2 G^2 \!+\! K^2 G^{2\alpha} \lambda^{-2(\alpha -1)} \!+\! L \eta_L K^2 G^{1+\alpha} \lambda^{1-\alpha} \right) \\
		&\quad + \frac{L \eta \eta_L}{m} \left(K G^\alpha \lambda^{2-\alpha} \right).
	\end{align*}
	Further, choosing learning rates and clipping parameter in such a way that $\eta \eta_L = m^{\frac{2 \alpha - 2}{3 \alpha - 2}} K^{\frac{- \alpha - 2}{3 \alpha - 2}} T^{\frac{-\alpha}{3 \alpha - 2}}, \eta_L \leq (mT)^{\frac{1 - \alpha}{3\alpha - 2}} K^{\frac{4 - 4 \alpha}{3 \alpha - 2}}$, and $\lambda = (mK^4T)^{\frac{1}{3 \alpha - 2}}$, we have 
	\begin{align*}
		\min_{t \in [T]} \mathbb{E} \| \nabla f(\x_t) \|^2 = \mathcal{O}((mT)^{\frac{2 - 2 \alpha}{3 \alpha - 2}} K^{\frac{4 - 2 \alpha}{3 \alpha - 2}}).
	\end{align*}
\end{restatable}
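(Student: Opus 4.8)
The plan is to run the standard ``smoothness descent inequality $+$ telescoping'' argument used for GFedAvg-type methods, but to replace \emph{every} step that would ordinarily invoke a bounded-variance bound by one that uses only the bounded $\alpha$-moment of Assumption~\ref{assum_moment} together with the single elementary clipping fact $(\|v\|-\lambda)_+ \le \|v\|^{\alpha}\lambda^{1-\alpha}$ (which holds because $1-\alpha\le 0$ and $\|v\|^{1-\alpha}\le\lambda^{1-\alpha}$ exactly when $\|v\|\ge\lambda$). Write the server step as $\x_{t+1}=\x_t-\eta\eta_L\bar\Delta_t$ with $\bar\Delta_t:=\frac1m\sum_{i\in[m]}\tilde\Delta_t^i$, let $\mathbb{E}_t[\cdot]$ be the conditional expectation given $\x_t$, and decompose $\tilde\Delta_t^i=\Delta_t^i-e_t^i$ with clipping residual $\|e_t^i\|=(\|\Delta_t^i\|-\lambda)_+$ and $\Delta_t^i=\sum_{k\in[K]}\nabla f_i(\x_{t,i}^k,\xi_{t,i}^k)$. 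Assumption~\ref{assum_smooth} applied to $\x_t,\x_{t+1}$ gives
\[
f(\x_{t+1})\le f(\x_t)-\eta\eta_L\langle\nabla f(\x_t),\bar\Delta_t\rangle+\tfrac{L}{2}\eta^2\eta_L^2\|\bar\Delta_t\|^2,
\]
and I will take $\mathbb{E}_t$ and treat the linear and quadratic terms separately, keeping a positive multiple of $\|\nabla f(\x_t)\|^2$.

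For the linear term, unbiasedness (Assumption~\ref{assum_unbias}) gives $\mathbb{E}_t\Delta_t^i=\sum_k\mathbb{E}_t\nabla f_i(\x_{t,i}^k)$; substituting $\nabla f_i(\x_{t,i}^k)=\nabla f_i(\x_t)+(\nabla f_i(\x_{t,i}^k)-\nabla f_i(\x_t))$ and averaging over $i$ produces $K\|\nabla f(\x_t)\|^2$ plus (i) a client-drift error controlled by $\|\nabla f(\x_t)\|\cdot L\sum_k\frac1m\sum_i\mathbb{E}_t\|\x_{t,i}^k-\x_t\|$ and (ii) a clipping-bias error controlled by $\|\nabla f(\x_t)\|\cdot\frac1m\sum_i\mathbb{E}_t\|e_t^i\|$. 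The drift needs only a \emph{first}-moment bound: since $\mathbb{E}\|\nabla f_i(\x,\xi)\|\le(\mathbb{E}\|\nabla f_i(\x,\xi)\|^\alpha)^{1/\alpha}\le G$ (and $\|\nabla f_i(\x)\|\le G$ deterministically), unrolling the local recursion gives $\mathbb{E}_t\|\x_{t,i}^k-\x_t\|\le\eta_L K G$. For the bias, the clipping fact yields $\mathbb{E}_t\|e_t^i\|\le\mathbb{E}_t\|\Delta_t^i\|^\alpha\lambda^{1-\alpha}$, and $\mathbb{E}_t\|\Delta_t^i\|^\alpha$ is bounded by a sum over $k$ of the per-step $\alpha$-moments (via convexity of $t\mapsto t^\alpha$, or a von~Bahr--Esseen-type bound on the martingale part), i.e.\ $\mathcal{O}(\mathrm{poly}(K)\,G^\alpha)$. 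Finally, apply Young's inequality to the two cross terms with weights proportional to $\eta\eta_L K$, so a constant fraction of $\eta\eta_L K\|\nabla f(\x_t)\|^2$ survives and the squared errors contribute, after the eventual division by $\eta\eta_L K$, the terms $L^2\eta_L^2K^2G^2$, $K^2G^{2\alpha}\lambda^{2-2\alpha}$, and the cross term $L\eta_L K^2G^{1+\alpha}\lambda^{1-\alpha}$.

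For the quadratic term, split $\bar\Delta_t=\mathbb{E}_t\bar\Delta_t+(\bar\Delta_t-\mathbb{E}_t\bar\Delta_t)$. Conditional independence of the clients gives $\mathbb{E}_t\|\bar\Delta_t-\mathbb{E}_t\bar\Delta_t\|^2=\frac1{m^2}\sum_i\mathbb{E}_t\|\tilde\Delta_t^i-\mathbb{E}_t\tilde\Delta_t^i\|^2\le\frac1{m^2}\sum_i\mathbb{E}_t\|\tilde\Delta_t^i\|^2$, and since $\tilde\Delta_t^i$ is clipped I interpolate $\|\tilde\Delta_t^i\|^2\le\lambda^{2-\alpha}\|\tilde\Delta_t^i\|^\alpha\le\lambda^{2-\alpha}\|\Delta_t^i\|^\alpha$ — avoiding any second moment of an unclipped vector — which is the source of the $\frac{L\eta\eta_L}{m}KG^\alpha\lambda^{2-\alpha}$ term. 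The mean part $\|\mathbb{E}_t\bar\Delta_t\|^2$ is bounded via $\|\mathbb{E}_t\tilde\Delta_t^i\|\le KG+\mathbb{E}_t\|e_t^i\|$; after multiplication by $\frac{L}{2}\eta^2\eta_L^2$ and use of $\eta\eta_L K L\le1$ it is dominated by the Step-2 terms (leaving enough slack in the Young weights to also absorb its $\|\nabla f(\x_t)\|^2$ piece). Combining the pieces, summing over $t=1,\dots,T$, dividing by $\eta\eta_L K T$, and bounding $\min_t\mathbb{E}\|\nabla f(\x_t)\|^2$ by the average yields the displayed inequality (with $f(\x_{T+1})\ge f(\x^*)$ in place of the $f(x_T)$ in the statement). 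For the explicit rate, substitute $\eta\eta_L=m^{\frac{2\alpha-2}{3\alpha-2}}K^{\frac{-\alpha-2}{3\alpha-2}}T^{\frac{-\alpha}{3\alpha-2}}$, $\eta_L\le(mT)^{\frac{1-\alpha}{3\alpha-2}}K^{\frac{4-4\alpha}{3\alpha-2}}$, $\lambda=(mK^4T)^{\frac{1}{3\alpha-2}}$, verify $\eta\eta_L KL\le1$ for $T$ large, and check term by term that each of the four groups equals $\mathcal{O}((mT)^{\frac{2-2\alpha}{3\alpha-2}}K^{\frac{4-2\alpha}{3\alpha-2}})$.

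The main obstacle is the middle two steps: one must never form $\mathbb{E}\|\cdot\|^2$ of a raw stochastic gradient or of the client drift, since these can be infinite under fat-tailed noise, so drift is handled with first moments only, clipped quantities with the interpolation $\|\cdot\|^2\le\lambda^{2-\alpha}\|\cdot\|^\alpha$, and the clipping bias with $\mathbb{E}\|e_t^i\|\le\mathbb{E}\|\Delta_t^i\|^\alpha\lambda^{1-\alpha}$. Once those three substitutions are in place, what remains is bookkeeping: tracking the powers of $K$, $G$, and $\lambda$ through the Young's-inequality constants and choosing the weights so that both the clipping-bias contributions and the $\|\nabla f(\x_t)\|^2$ part of the quadratic term can be absorbed into the descent term, leaving the clean four-term bound.
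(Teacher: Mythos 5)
Your handling of the fat-tail-specific difficulties is exactly what the paper does in its variance lemma for \algprns: client drift controlled by first moments only ($\mathbb{E}\|\nabla f_i(\x,\xi)\|\le G$), the clipping bias via $\mathbb{E}\bigl[\|\Delta_t^i\|\mathbf{1}_{\{\|\Delta_t^i\|\ge\lambda\}}\bigr]\le\mathbb{E}\|\Delta_t^i\|^{\alpha}\lambda^{1-\alpha}$, the clipped second moment via the interpolation $\|\tilde\Delta_t^i\|^2\le\lambda^{2-\alpha}\|\tilde\Delta_t^i\|^{\alpha}$, and client independence for the $1/m$ variance reduction. The genuine gap is in how you dispose of the squared conditional mean $\|\mathbb{E}_t\bar\Delta_t\|^2$ from the quadratic smoothness term. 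You bound it through $\|\mathbb{E}_t\tilde\Delta_t^i\|\le KG+\mathbb{E}_t\|e_t^i\|$ and assert it ``is dominated by the Step-2 terms.'' It is not: the $KG$ part contributes, after multiplying by $\tfrac{L}{2}\eta^2\eta_L^2$, telescoping, and dividing by $\eta\eta_L KT$, a term of order $L\eta\eta_L KG^2$ (or a non-vanishing constant $G^2$ if you first invoke $\eta\eta_L KL\le1$). With the prescribed $\eta\eta_L=m^{\frac{2\alpha-2}{3\alpha-2}}K^{\frac{-\alpha-2}{3\alpha-2}}T^{\frac{-\alpha}{3\alpha-2}}$ this scales like $m^{\frac{2\alpha-2}{3\alpha-2}}$, i.e.\ it \emph{grows} with $m$, while the target rate decays as $m^{\frac{2-2\alpha}{3\alpha-2}}$; nor is it dominated by $L^2\eta_L^2K^2G^2$, since that would require $\eta\le L\eta_L K$, which the learning-rate choices do not guarantee ($\eta_L$ is only upper-bounded, so $\eta=(\eta\eta_L)/\eta_L$ can be large). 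So the displayed bound and the final rate do not follow from the proof as written.

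The fix is to never bound the magnitude of $\mathbb{E}_t\bar\Delta_t$ crudely. The paper uses the exact identity
\begin{align*}
-\bigl\langle\nabla f(\x_t),\mathbb{E}_t[\tilde\Delta_t]\bigr\rangle
= -\tfrac{K}{2}\|\nabla f(\x_t)\|^2-\tfrac{1}{2K}\|\mathbb{E}_t[\tilde\Delta_t]\|^2
+\tfrac{K}{2}\bigl\|\nabla f(\x_t)-\tfrac{1}{K}\mathbb{E}_t[\tilde\Delta_t]\bigr\|^2,
\end{align*}
so the linear term supplies $-\tfrac{\eta\eta_L}{2K}\|\mathbb{E}_t[\tilde\Delta_t]\|^2$, which cancels $+\tfrac{L\eta^2\eta_L^2}{2}\|\mathbb{E}_t[\tilde\Delta_t]\|^2$ exactly under $\eta\eta_L KL\le1$; what survives is precisely the bias/drift term $\|\nabla f(\x_t)-\tfrac{1}{K}\mathbb{E}_t[\tilde\Delta_t]\|^2$ and the variance, i.e.\ the quantities you already bound correctly. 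Alternatively, your parenthetical hints at a workable variant: keep $\mathbb{E}_t\bar\Delta_t=K\nabla f(\x_t)+E_t$ explicit inside the quadratic term as well, and verify that the Young constant $\gamma$ on the linear cross term and the split constant $\beta$ in $\|K\nabla f(\x_t)+E_t\|^2\le(1+\beta)K^2\|\nabla f(\x_t)\|^2+(1+\beta^{-1})\|E_t\|^2$ satisfy $\gamma+\beta<1$, so that a negative multiple of $\eta\eta_L K\|\nabla f(\x_t)\|^2$ survives under $\eta\eta_L KL\le1$. Either repair yields the stated bound; the $KG$ shortcut you wrote forfeits the cancellation and with it the claimed rate.
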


\begin{rem}{\em 
    We note that the above convergence rates for \algpr does not generalize the results of FedAvg when $\alpha = 2$ (non-fat-tailed noise).
    Specifically, FedAvg is able to achieve $\tilde{\mathcal{O}}((mKT)^{-1})$ and $\mathcal{O}((mKT)^{-\frac{1}{4}})$ convergence rates for strongly convex ($f(\x) - f(\x^*) \leq \epsilon$) and non-convex function ($\| \nabla f(\x) \| \leq \epsilon$), respectively~\cite{Karimireddy2020SCAFFOLD,arjevani2019lower}.
    In contrast, \algpr achieves $\mathcal{O}((mT)^{-1}K)$ and $\mathcal{O}((mT)^{-\frac{1}{4}})$ for strongly-convex and non-convex functions, respectively.
    These two rates are consistent with those of FedAvg in terms of $m$ and $T$, but not in terms of $K$.
   }
\end{rem}

Interestingly, with a separate proof for non-fat-tailed noise ($\alpha = 2$), we can show that clipping does not affect the dependence on $K$ in the convergence rates.
Thus, \algpr has the {\em same} convergence rates as those of FedAvg.
Due to space limitation, we state an informal version of these theorems here.
The full versions of Theorem
~\ref{thm:ClippingPerRoundBoundedGaussian} ~\ref{thm:ClippingPerRoundBoundedConvexGaussian}
and their proofs are formally stated in Appendix.

\textbf{Theorem 6 $\&$ 7 (informal)} (Convergence Rates of \algpr for Non-Fat-Tailed Noise):
{\em For $\alpha = 2$, CPR-FedAvg achieves convergence rate $\tilde{\mathcal{O}}((mKT)^{-1})$ for strongly-convex and $\mathcal{O}((mKT)^{-\frac{1}{4}})$ for non-convex functions, respectively.}


\textbf{2) Convergence Rate of the \algpi Algorithm:}
Next, we provide the convergence rates of \algpi for $\mu$-strongly convex and non-convex objective functions.
\begin{restatable} {theorem}{ClippingPerIterationBoundedConvex}{\em (Convergence Rate of \algpi in the Strongly Convex Case)}
	\label{thm:ClippingPerIterationBoundedConvex}
	Suppose that $f(\cdot)$ is a $\mu$-strongly convex function.
	Under Assumptions~\ref{assum_smooth}--\ref{assum_moment}, if $\eta \eta_L K \geq \frac{2}{\mu T}$,
	then the output $\bar{\x}_T $ of \algpi being chosen in such a way that $\bar{\x}_T = \x_t$ with probability $\frac{w_t}{\sum_{j \in [T]} w_j}$, where $w_t = (1 - \frac{1}{2} \mu \eta \eta_L K)^{1 - t}$, satisfies:
		\begin{align*}
			f(\bar{\x}_T) - f(\x^*)
			&\leq \frac{\mu}{2} \exp{\left( - \frac{1}{2} \mu \eta \eta_L K T \right)} + \frac{\eta \eta_L K}{2} G^\alpha \lambda^{2-\alpha} \\
			& \quad + \frac{4}{\mu} [2 G^{2 \alpha} \lambda^{-2(\alpha - 1)} + 2 L^2 \eta_L^2 K^2 G^{\alpha} \lambda^{2 - \alpha}],
		\end{align*}
	where $\x^*$ denotes the global optimal solution.
	Further, let $\eta \eta_L K = \frac{2c}{\mu} \frac{\ln(T)}{mKT}$, where $c \geq 1$ is a constant satisfying $(mK)^{\frac{2-2\alpha}{\alpha}} T^{c + \frac{2-2\alpha}{\alpha}} \geq 1$, and let $\lambda = (mKT)^{\frac{1}{\alpha}}$, and $\eta_L \leq (mT)^{-\frac{1}{2}} K^{- \frac{3}{2}}$). It then follows that
	\begin{align*}
		f(\bar{\x}_T) - f(\x^*)
		&= \tilde{\mathcal{O}}((mKT)^{\frac{2 - 2 \alpha}{\alpha}}).
	\end{align*}
\end{restatable}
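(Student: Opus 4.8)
The plan is to follow the template of the standard strongly-convex \texttt{FedAvg}-type analysis (as in \cite{Karimireddy2020SCAFFOLD,yang2021linearspeedup}), replacing every step that uses a bounded-variance assumption by two clipping estimates that hold under Assumption~\ref{assum_moment}. First I would establish, for the per-iteration clipping operator $\tilde{\nabla} f_i(\x,\xi)=\min\{1,\lambda/\|\nabla f_i(\x,\xi)\|\}\,\nabla f_i(\x,\xi)$: (i) a bias bound $\|\mathbb{E}[\tilde{\nabla} f_i(\x,\xi)]-\nabla f_i(\x)\|\le\mathbb{E}\|\tilde{\nabla} f_i(\x,\xi)-\nabla f_i(\x,\xi)\|\le G^\alpha\lambda^{1-\alpha}$, obtained by applying Markov's inequality to the $\alpha$-moment on the event $\{\|\nabla f_i(\x,\xi)\|>\lambda\}$; and (ii) a bounded-second-moment bound $\mathbb{E}\|\tilde{\nabla} f_i(\x,\xi)\|^2\le\lambda^{2-\alpha}\,\mathbb{E}\|\tilde{\nabla} f_i(\x,\xi)\|^\alpha\le G^\alpha\lambda^{2-\alpha}$, which uses the deterministic bound $\|\tilde{\nabla} f_i(\x,\xi)\|\le\lambda$. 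That uniform boundedness is also what keeps every quantity below integrable even though the raw oracle may have infinite variance.

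Next I would derive the one-round progress inequality for $\mathbb{E}\|\x_{t+1}-\x^*\|^2$. With $d_t=\frac{1}{m}\sum_{i\in[m]}\sum_{k\in[K]}\tilde{\nabla} f_i(\x_{t,i}^k,\xi_{t,i}^k)$ so that $\x_{t+1}=\x_t-\eta\eta_L d_t$, expanding gives $\mathbb{E}\|\x_{t+1}-\x^*\|^2=\mathbb{E}\|\x_t-\x^*\|^2-2\eta\eta_L\,\mathbb{E}\langle d_t,\x_t-\x^*\rangle+\eta^2\eta_L^2\,\mathbb{E}\|d_t\|^2$. The conditional mean splits as $\mathbb{E}[d_t\mid\mathcal{F}_t]=K\nabla f(\x_t)+\frac{1}{m}\sum_{i,k}\mathbb{E}[\nabla f_i(\x_{t,i}^k)-\nabla f_i(\x_t)\mid\mathcal{F}_t]+\frac{1}{m}\sum_{i,k}\mathbb{E}[b_{t,i}^k\mid\mathcal{F}_t]$, where $b_{t,i}^k$ is the per-step clipping bias with $\|b_{t,i}^k\|\le G^\alpha\lambda^{1-\alpha}$ by (i), and the client-drift term is controlled by $L$-smoothness together with the drift bound $\mathbb{E}\|\x_{t,i}^k-\x_t\|^2\le\eta_L^2 k\sum_{j<k}\mathbb{E}\|\tilde{\nabla} f_i(\x_{t,i}^j,\xi_{t,i}^j)\|^2\le\eta_L^2 K^2 G^\alpha\lambda^{2-\alpha}$ (here (ii) is essential). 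For the inner product I use $\mu$-strong convexity, $\langle\nabla f(\x_t),\x_t-\x^*\rangle\ge f(\x_t)-f(\x^*)+\frac{\mu}{2}\|\x_t-\x^*\|^2$, and bound the drift and bias inner products by Young's inequality, choosing the coefficient of $\|\x_t-\x^*\|^2$ to be a small constant fraction of $\frac{\mu}{2}$ so that it is reabsorbed into the contraction factor. For $\mathbb{E}\|d_t\|^2$ I split into the conditional-variance part — a sum of martingale differences bounded by $\frac{1}{m}K G^\alpha\lambda^{2-\alpha}$ using (ii) and independence across clients — and the squared conditional mean, whose $\|\nabla f(\x_t)\|^2$ contribution becomes $2L(f(\x_t)-f(\x^*))$ by smoothness and is absorbed once $\eta\eta_L K L\le 1$ (which the final step sizes satisfy). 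Collecting everything yields $\mathbb{E}\|\x_{t+1}-\x^*\|^2\le(1-\frac{1}{2}\mu\eta\eta_L K)\,\mathbb{E}\|\x_t-\x^*\|^2-\eta\eta_L K\,\mathbb{E}[f(\x_t)-f(\x^*)]+E$, where $E$ collects, up to absolute constants, the terms $\eta^2\eta_L^2 K^2 G^\alpha\lambda^{2-\alpha}$, $\frac{\eta\eta_L K}{\mu}G^{2\alpha}\lambda^{2-2\alpha}$ and $\frac{\eta\eta_L K}{\mu}L^2\eta_L^2 K^2 G^\alpha\lambda^{2-\alpha}$.

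Then comes the standard weighted-telescoping step: divide the $t$-th inequality by $\eta\eta_L K$, multiply by $w_t=(1-\frac{1}{2}\mu\eta\eta_L K)^{1-t}$, and sum over $t\in[T]$ so the weights telescope against the contraction; the hypothesis $\eta\eta_L K\ge\frac{2}{\mu T}$ ensures that $(1-\frac{1}{2}\mu\eta\eta_L K)^{T}$ and the geometric partial sums $\sum_t w_t$ are controlled, so the initial-condition term collapses to the $\frac{\mu}{2}\exp(-\frac{1}{2}\mu\eta\eta_L K T)$ term of the statement (with $\|\x_1-\x^*\|^2$ treated as a bounded constant). Since $\bar{\x}_T=\x_t$ with probability $w_t/\sum_{j\in[T]}w_j$, taking expectations gives $f(\bar{\x}_T)-f(\x^*)$ bounded by that term plus $E/(\eta\eta_L K)$, and $E/(\eta\eta_L K)=\frac{\eta\eta_L K}{2}G^\alpha\lambda^{2-\alpha}+\frac{4}{\mu}\big[2G^{2\alpha}\lambda^{-2(\alpha-1)}+2L^2\eta_L^2 K^2 G^\alpha\lambda^{2-\alpha}\big]$ once the constants are tracked, which is exactly the claimed inequality. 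Finally, substituting $\eta\eta_L K=\frac{2c}{\mu}\frac{\ln T}{mKT}$, $\lambda=(mKT)^{1/\alpha}$ and $\eta_L\le(mT)^{-1/2}K^{-3/2}$ and simplifying each surviving term gives $\tilde{\mathcal{O}}((mKT)^{\frac{2-2\alpha}{\alpha}})$: the constraint on $c$ is precisely what forces the exponential term below this order, and the bound on $\eta_L$ is precisely what forces the smoothness--drift term $L^2\eta_L^2 K^2 G^\alpha\lambda^{2-\alpha}$ below it.

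The crux — and what distinguishes \algpi from \algpr — is propagating the clipping bias through the $K$-step local trajectory without it being amplified by $K$. Because clipping acts per iteration on a single stochastic gradient, whose $\alpha$-moment is only $G^\alpha$ (rather than on the length-$K$ pseudo-gradient, whose $\alpha$-moment degrades to $(KG)^\alpha$ by Minkowski's inequality), the per-step bias stays at $G^\alpha\lambda^{1-\alpha}$ and the per-step second moment at $G^\alpha\lambda^{2-\alpha}$, so these accumulate only linearly in $K$ over the $K$ local steps; this is what lets the final rate absorb a full factor of $K$ into $(mKT)$ and be lower-bound matching. The delicate bookkeeping that I expect to be the main source of effort is the joint tuning of all the Young's-inequality constants so that the several extraneous $\|\x_t-\x^*\|^2$ terms — arising from the bias inner product, the client-drift inner product, and the expansion of $\mathbb{E}\|d_t\|^2$ — collectively fit within the $\frac{1}{2}\mu\eta\eta_L K$ contraction budget while leaving the surviving additive errors at exactly the displayed orders.
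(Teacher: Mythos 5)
Your proposal is correct and follows essentially the same route as the paper: the same per-iteration clipping estimates (bias $G^\alpha\lambda^{1-\alpha}$, second moment $G^\alpha\lambda^{2-\alpha}$, drift $\eta_L^2K^2G^\alpha\lambda^{2-\alpha}$), the same one-step recursion on $\|\x_t-\x^*\|^2$ via strong convexity plus Young's inequality, and the same weighted telescoping with $w_t$ before substituting the stated parameter choices. The only cosmetic difference is that you split $\mathbb{E}\|d_t\|^2$ into conditional variance plus squared mean (invoking smoothness and $\eta\eta_L K L\le 1$), whereas the paper simply uses the crude bound $\mathbb{E}\|\tilde{\Delta}_t\|^2\le K^2G^\alpha\lambda^{2-\alpha}$, which needs no extra step-size condition; this does not change the final bound.
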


\begin{restatable}{theorem}{ClippingPerIterationBoundedNonconvex}{\em (Convergence Rate of \algpi in the Nonconvex Case)}
	\label{thm:ClippingPerIterationBoundedNonconvex}
	Suppose that $f(\cdot)$ is a non-convex function.
	Under Assumptions~\ref{assum_smooth}--\ref{assum_moment}, if $\eta \eta_L K L \leq 1$,
	then the sequence of outputs $\{ \x_k \}$ generated by \algpi satisfies:
		\begin{align*}
			\min_{t \in [T]} \mathbb{E} \| \nabla f(\x_t) \|^2 &\leq \frac{2 \left( f(\x_1) - f(x_T) \right)}{\eta \eta_L K T} + \left( 2 G^{2 \alpha} \lambda^{-2(\alpha - 1)} + 2 L^2 \eta_L^2 K^2 G^\alpha \lambda^{2 - \alpha} \right) \nonumber \\
			&\quad + \frac{L \eta \eta_L}{m} \left(G^\alpha \lambda^{2-\alpha} \right).
		\end{align*}
		Further, choosing learning rates and clipping parameter in such a way that $\eta \eta_L = m^{\frac{2 \alpha - 2}{3 \alpha - 2}} (KT)^{\frac{-\alpha}{3 \alpha - 2}}, \eta_L \leq (mKT)^{\frac{- \alpha}{6 \alpha - 4}}$, and $\lambda = (mKT)^{\frac{1}{3 \alpha - 2}}$, we have
		\begin{align*}
			\min_{t \in [T]} \mathbb{E} \| \nabla f(\x_t) \|^2 \leq \mathcal{O}((mKT)^{\frac{2 - 2 \alpha}{3 \alpha - 2}}).
		\end{align*}
\end{restatable}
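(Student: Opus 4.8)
The plan for Theorem~\ref{thm:ClippingPerIterationBoundedNonconvex} is to adapt the standard two-sided-learning-rate descent analysis of federated averaging to the setting where each locally aggregated update is a sum of \emph{biased} clipped gradients, and to control both that bias and the second moment of the clipped gradients solely through the bounded $\alpha$-moment (Assumption~\ref{assum_moment}). Throughout I write $\tilde\nabla f_i^k := \tilde\nabla f_i(\x_{t,i}^k,\xi_{t,i}^k)$, $\hat\nabla f_i^k := \mathbb{E}_{\xi}[\tilde\nabla f_i^k]$ for the expected clipped gradient, and $\bar{\tilde\Delta}_t := \frac1m\sum_{i\in[m]}\tilde\Delta_t^i$, so the server step is $\x_{t+1}=\x_t-\eta\eta_L\bar{\tilde\Delta}_t$; the argument parallels that of Theorem~\ref{thm:ClippingPerRoundBoundedNonconvex} but now with clipping applied inside every local step.

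First I would record three elementary clipping facts from $\tilde\nabla f_i=\min\{1,\lambda/\|\nabla f_i\|\}\nabla f_i$: (i) $\|\tilde\nabla f_i\|\le\lambda$ almost surely; (ii) $\mathbb{E}\|\tilde\nabla f_i\|^2\le\lambda^{2-\alpha}\,\mathbb{E}\|\nabla f_i\|^{\alpha}\le G^{\alpha}\lambda^{2-\alpha}$, using $\min\{s^2,\lambda^2\}\le\lambda^{2-\alpha}s^{\alpha}$ for $s\ge0$ when $\alpha\in(1,2]$; and (iii) the bias bound $\|\hat\nabla f_i(\x)-\nabla f_i(\x)\|\le\mathbb{E}[(\|\nabla f_i(\x,\xi)\|-\lambda)_+]\le\lambda^{1-\alpha}\,\mathbb{E}\|\nabla f_i(\x,\xi)\|^{\alpha}\le G^{\alpha}\lambda^{1-\alpha}$. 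Fact (i) gives a deterministic per-step size $\eta_L\lambda$ and hence, combined with (ii), the client-drift bound $\mathbb{E}\|\x_{t,i}^k-\x_t\|^2\le\eta_L^2K^2G^{\alpha}\lambda^{2-\alpha}$; this is the point where clipping compensates for the missing bounded-variance assumption.

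Next I would apply $L$-smoothness to the server update and take expectations:
\begin{align*}
\mathbb{E}f(\x_{t+1})\ \le\ \mathbb{E}f(\x_t)\ -\ \eta\eta_L\,\mathbb{E}\langle\nabla f(\x_t),\bar{\tilde\Delta}_t\rangle\ +\ \tfrac{L\eta^2\eta_L^2}{2}\,\mathbb{E}\|\bar{\tilde\Delta}_t\|^2.
\end{align*}
For the cross term I expand $\mathbb{E}\langle\nabla f(\x_t),\bar{\tilde\Delta}_t\rangle=\sum_{k=1}^{K}\frac1m\sum_i\mathbb{E}\langle\nabla f(\x_t),\hat\nabla f_i^k\rangle$ and split $\hat\nabla f_i^k=\nabla f_i(\x_t)+(\nabla f_i(\x_{t,i}^k)-\nabla f_i(\x_t))+(\hat\nabla f_i^k-\nabla f_i(\x_{t,i}^k))$: the first summand produces $-\eta\eta_L K\|\nabla f(\x_t)\|^2$, the second is handled by $L$-smoothness, the drift bound, and Young's inequality, and the third by the bias bound (iii) and Young's; together the off-diagonal pieces cost at most $\tfrac12\eta\eta_L K\,\mathbb{E}\|\nabla f(\x_t)\|^2$ plus the error terms $\eta\eta_L K\big(G^{2\alpha}\lambda^{-2(\alpha-1)}+L^2\eta_L^2K^2G^{\alpha}\lambda^{2-\alpha}\big)$. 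For the quadratic term I write $\tilde\Delta_t^i=\sum_k(\tilde\nabla f_i^k-\hat\nabla f_i^k)+\sum_k\hat\nabla f_i^k$; the first sum is a sum of independent zero-mean vectors across clients and a martingale-difference sequence within a client, so $\mathbb{E}\|\frac1m\sum_i\sum_k(\tilde\nabla f_i^k-\hat\nabla f_i^k)\|^2\le\frac1{m^2}\sum_i\sum_k\mathbb{E}\|\tilde\nabla f_i^k\|^2\le\frac{KG^{\alpha}\lambda^{2-\alpha}}{m}$, which is the origin of the $1/m$ factor in the last error term; the second sum is again expanded via smoothness/drift/bias (using $\frac1m\sum_i\nabla f_i(\x_t)=\nabla f(\x_t)$), and the residual $\|\nabla f(\x_t)\|^2$ piece it generates is absorbed into the descent thanks to the restriction $\eta\eta_L KL\le1$. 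Rearranging yields $\tfrac{\eta\eta_L K}{2}\,\mathbb{E}\|\nabla f(\x_t)\|^2\le\mathbb{E}f(\x_t)-\mathbb{E}f(\x_{t+1})+\eta\eta_L K(\mathrm{errors})+\tfrac{L\eta^2\eta_L^2 KG^{\alpha}\lambda^{2-\alpha}}{2m}$.

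Finally I would telescope over $t=1,\dots,T$, divide by $\tfrac{\eta\eta_L KT}{2}$, and bound $\min_t$ by the average, which gives exactly the displayed inequality of the theorem; substituting $\eta\eta_L=m^{\frac{2\alpha-2}{3\alpha-2}}(KT)^{\frac{-\alpha}{3\alpha-2}}$, $\eta_L\le(mKT)^{\frac{-\alpha}{6\alpha-4}}$ and $\lambda=(mKT)^{\frac1{3\alpha-2}}$ balances the three error contributions so that each collapses to $\mathcal{O}((mKT)^{\frac{2-2\alpha}{3\alpha-2}})$. I expect the main obstacle to be the bookkeeping of the clipping bias in the non-convex regime: unlike the strongly convex case there is no contraction available, so every appearance of $\hat\nabla f_i^k-\nabla f_i(\x_{t,i}^k)$ and $\nabla f_i(\x_{t,i}^k)-\nabla f_i(\x_t)$ must be paid for by a genuine fraction of the $\|\nabla f(\x_t)\|^2$ descent term, which both forces the precise stepsize condition $\eta\eta_L KL\le1$ and requires carefully tracking the constants from the several Young's-inequality applications (in the cross term and inside $\mathbb{E}\|\bar{\tilde\Delta}_t\|^2$) so that a strictly negative net coefficient on $\mathbb{E}\|\nabla f(\x_t)\|^2$ survives.
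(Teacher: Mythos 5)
Your plan is essentially the paper's proof: the same three per-iteration clipping estimates (almost-sure bound $\lambda$, second moment $\mathbb{E}\|\tilde{\nabla} f_i\|^2 \le G^\alpha\lambda^{2-\alpha}$, bias $\le G^\alpha\lambda^{1-\alpha}$ plus the $L$-smoothness drift term), the same across-client martingale-difference argument producing the $\frac{K}{m}G^\alpha\lambda^{2-\alpha}$ variance and hence the $\frac{L\eta\eta_L}{m}G^\alpha\lambda^{2-\alpha}$ term, the same smoothness descent, telescoping, and the same parameter substitutions. These are exactly Lemma~\ref{lem: variance_clipping_updates_cpi} and the displayed recursion in the paper, so the error terms you list match term by term.

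The one place you diverge is the treatment of the descent step, and it is worth flagging. The paper does not use Young's inequality on the cross term; it uses the exact identity $-\langle \nabla f(\x_t), \mathbb{E}[\tilde{\Delta}_t]\rangle = -\frac{K}{2}\|\nabla f(\x_t)\|^2 - \frac{1}{2K}\|\mathbb{E}[\tilde{\Delta}_t]\|^2 + \frac{K}{2}\|\nabla f(\x_t)-\frac{1}{K}\mathbb{E}[\tilde{\Delta}_t]\|^2$, so that $\|\mathbb{E}[\tilde{\Delta}_t]\|^2$ appears with coefficient $-\frac{\eta\eta_L}{2K}$ and is cancelled outright against the $+\frac{L\eta^2\eta_L^2}{2}\|\mathbb{E}[\tilde{\Delta}_t]\|^2$ piece of the quadratic term precisely when $\eta\eta_L K L \le 1$; no re-expansion of the squared mean is ever needed. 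In your route (Young on the cross term, then expanding $\|\mathbb{E}[\bar{\tilde\Delta}_t]\|^2$ into $\|\nabla f(\x_t)\|^2$ plus bias/drift errors), the $\|\nabla f(\x_t)\|^2$ residue carries coefficient up to $L\eta^2\eta_L^2 K^2 = (\eta\eta_L K L)\cdot\eta\eta_L K$, which under the stated condition $\eta\eta_L K L \le 1$ can be as large as $\eta\eta_L K$ and thus exceeds the $\frac{\eta\eta_L K}{2}$ of descent you retained — exactly the constant-tracking worry you raise yourself. So your argument closes only with a tighter splitting or a slightly stronger stepsize condition (e.g.\ $\eta\eta_L K L \le 1/4$), and it would not reproduce the theorem's intermediate inequality with its stated constants verbatim, although the final $\mathcal{O}((mKT)^{\frac{2-2\alpha}{3\alpha-2}})$ rate is unaffected. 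Adopting the exact inner-product expansion removes this obstacle and is the reason the paper's condition $\eta\eta_L K L \le 1$ suffices cleanly.
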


\begin{rem}{\em
	In comparison to \algprns, convergence rates of \algpi generalize the results of FedAvg for the non-fat-tailed noise case (i.e., $\alpha = 2$).
	Specifically, when $\alpha=2$, \algpi achieves $\tilde{\mathcal{O}}((mKT)^{-1})$ and $\mathcal{O}((mKT)^{-\frac{1}{4}})$ convergence rates for strongly convex and nonconvex objective functions, respectively.
	These two convergence rates are consistent with those of FedAvg in terms of $m$, $K$ and $T$ (ignoring logarithmic factors in the strongly-convex case).
}
\end{rem}

Next, we show that the convergence rates for \algpi is {\em order-optimal} for $\alpha \in (1, 2]$ by proving the following lower bounds.
\begin{cor}[Convergence Rate Lower Bound]
	Given any $\alpha \in (1, 2]$, for any potentially randomized algorithm, there exists a stochastic strongly-convex function satisfying Assumption~\ref{assum_moment} with $G \leq 1$, such that the output of $\x_T$ after $T$ communication rounds has an expected error lower bounded by 
	\begin{align*}
		\mathbb{E}[f(\x_t)] - f(\x_*) = \Omega((mKT)^{\frac{2 - 2 \alpha}{\alpha}}).
	\end{align*}
Also, there exists a non-convex function satisfying Assumption~\ref{assum_moment}, such that the output of $\x_T$ after $T$ communication rounds has an expected error lower bounded by 
	\begin{align*}
		\mathbb{E} [\| \nabla f(\x_t) \|]^2 = \Omega((mKT)^{\frac{2 - 2 \alpha}{3 \alpha - 2}}).
	\end{align*}
\end{cor}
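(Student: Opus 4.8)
The plan is to derive both bounds by reduction to the corresponding centralized (single-machine) lower bounds for stochastic optimization under a bounded $\alpha$-moment first-order oracle, instantiated with a total query budget $R = mKT$. Two reductions make this work. First, it suffices to construct \emph{homogeneous} hard instances ($f_i \equiv f$ for all $i\in[m]$), because any lower bound for the homogeneous problem is also a lower bound for the general heterogeneous objective~\eqref{objective}. Second, on a homogeneous instance the $m$-client / $K$-local-step / $T$-round protocol can only help an algorithm through the total number of stochastic gradient evaluations it performs, which is exactly $mKT$: the clients' parallel local trajectories followed by server aggregation can be serialized into a single adaptive sequence of $mKT$ calls to a stochastic first-order oracle for $f$, so any algorithm of GFedAvg or \algns type is a restricted special case of a centralized algorithm making $mKT$ queries. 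Consequently, a centralized lower bound evaluated at $R = mKT$ transfers verbatim, and it only remains to establish (or invoke) the two centralized bounds.

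For the strongly convex claim, I would take the one-parameter quadratic family $f^{(v)}(\x)=\tfrac{\mu}{2}\|\x - v\,\delta\,\e_1\|^2$ with $v\in\{-1,+1\}$ and equip it with the stochastic gradient $\nabla f^{(v)}(\x)+\zeta$, where $\zeta$ is a symmetric additive noise supported on $\mathrm{span}(\e_1)$ whose law is the ``least favorable'' one for estimating a location parameter subject to $\mathbb{E}[\|\zeta\|^{\alpha}]\le G^{\alpha}$ with $G\le 1$. Since the minimax rate of that location problem is $\Theta(R^{-(\alpha-1)/\alpha})$, a Le Cam two-point argument shows that after $R$ oracle calls no (possibly randomized) algorithm can identify $v$ with probability bounded away from $1/2$ once $\delta \lesssim R^{-(\alpha-1)/\alpha}$; on the instance whose $v$ is misjudged, the returned point incurs $f^{(v)}(\x_T)-f^{(v)}(\x_*)\gtrsim \mu\delta^2 \asymp R^{(2-2\alpha)/\alpha}$. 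Substituting $R=mKT$ yields $\Omega((mKT)^{\frac{2-2\alpha}{\alpha}})$.

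For the non-convex claim, I would adapt the standard ``progress-along-a-chain'' hard instance for smooth non-convex stochastic optimization (the bounded-variance version of which produces the $\Omega(R^{-1/4})$ barrier, see~\cite{arjevani2019lower}), replacing its bounded-variance oracle by a bounded-$\alpha$-moment oracle in the spirit of the centralized heavy-tailed analyses~\cite{zhang2020adaptive}. Rescaling the per-coordinate barrier height and the noise magnitude so that simultaneously (i) $f$ is $L$-smooth, (ii) the oracle obeys Assumption~\ref{assum_moment}, and (iii) each successive coordinate requires a growing number of queries to ``activate,'' and then accounting for how the weaker concentration of $\alpha$-moment noise slows any algorithm's advance through the chain, gives $\mathbb{E}\|\nabla f(\x_T)\| = \Omega(R^{-(\alpha-1)/(3\alpha-2)})$, hence $\mathbb{E}[\|\nabla f(\x_T)\|]^2 = \Omega(R^{(2-2\alpha)/(3\alpha-2)})$; with $R=mKT$ this is the stated bound.

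I expect the non-convex construction to be the main obstacle: obtaining the exponent $3\alpha-2$ (rather than, say, $2\alpha$ or $\alpha$) requires a delicate three-way balance between the coordinate-barrier scale, the $\alpha$-moment budget, and the smoothness constant, together with an argument that no adaptivity --- including the extra within-round adaptivity that local updates grant each client --- lets an algorithm beat the per-query information limit. A secondary but necessary point is to make the serialization reduction airtight: within a round a client's $K$ local steps are adaptive while queries across clients are simultaneous, so one must check that granting \emph{full} adaptivity across all $mKT$ queries only enlarges the class of algorithms being bounded, ensuring the inequality points in the direction required for a lower bound.
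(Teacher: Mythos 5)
Your proposal is correct and follows essentially the same route as the paper: the paper's entire argument is to observe that $T$ rounds with $m$ clients and $K$ local steps consume at most $mKT$ stochastic gradient queries, so the centralized fat-tailed lower bounds of \cite[Theorems~5 and 6]{zhang2020adaptive} evaluated at $R = mKT$ apply directly (on a homogeneous instance), which is exactly your serialization reduction. The additional detail you give on re-deriving the centralized two-point and chain constructions is not needed by the paper, which simply cites those results.
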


With $T$ communication rounds, the total number of stochastic gradients is $mKT$. 
Thus, the lower bounds above can be obtained from the centralized SGD with fat-tailed noise~\cite[Theorems~5 and 6]{zhang2020adaptive}).
Clearly, the above lower bounds imply the optimality of the convergence rates of \algpi.

\section{Numerical results} \label{sec:numerical}

In this section, we conduct numerical experiments to verify the theoretical findings in Section~\ref{sec:alg} using 1) a synthetic function, 2) a convolutional neural network (CNN) with two convolutional layers on CIFAR-10 dataset~\cite{krizhevsky2009learning}, and 3) RNN on Shakespeare dataset. 
Due to space limitation, we relegate experiment details and extra experimental results to the supplementary material.


\begin{figure*}[t!]
\begin{minipage}{0.32\textwidth}
	\includegraphics[width=1\linewidth]{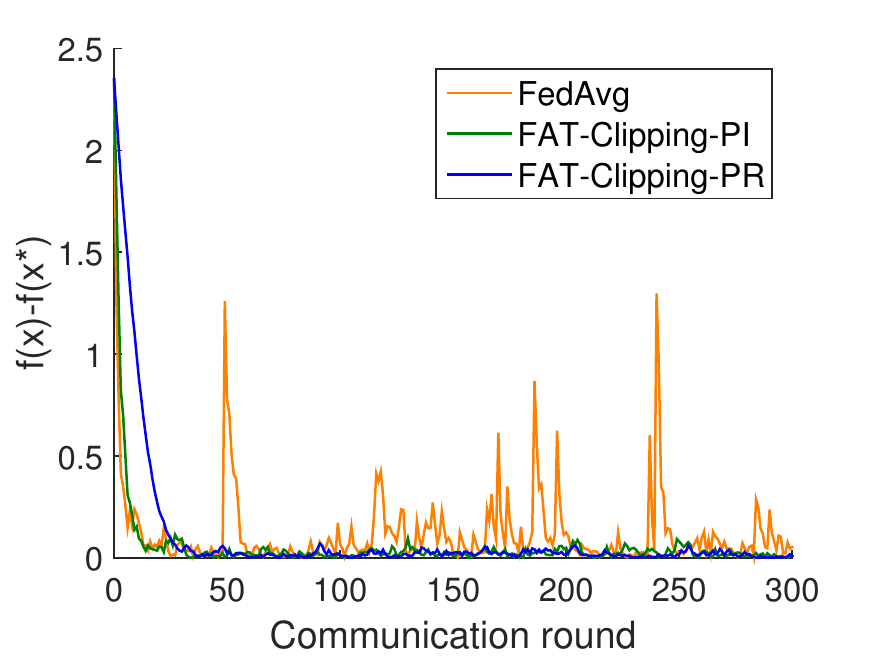}
	\caption{Convergence comparisons of FedAvg, \algpins, and \algpr for solving strongly convex models: synthetic data with $\xi$ having Cauchy tails (fat). }
	\label{fig:Cauchy}
  \end{minipage}
  \hspace{0.01\textwidth}
\begin{minipage}{0.32\textwidth}
	\includegraphics[width=1\linewidth]{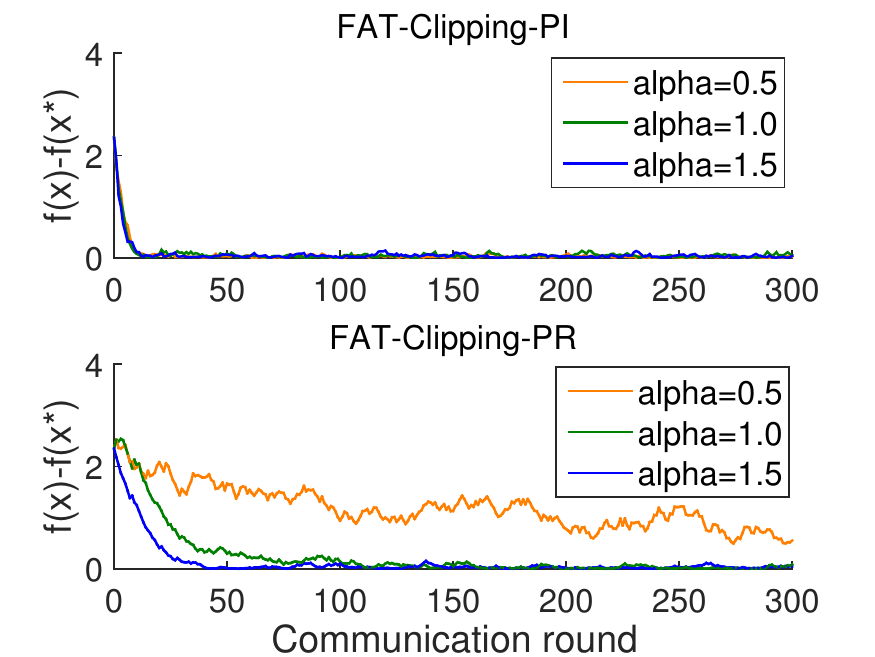}
	\caption{Convergence comparisons of \algpins, and \algpr for solving strongly convex models: synthetic data with $\xi$ having different fat tails represented by $\alpha$.}
	\label{fig:FAT_Clipping_alpha}
\end{minipage}
  \hspace{0.01\textwidth}
\begin{minipage}{0.32\textwidth}
	\includegraphics[width=1\linewidth]{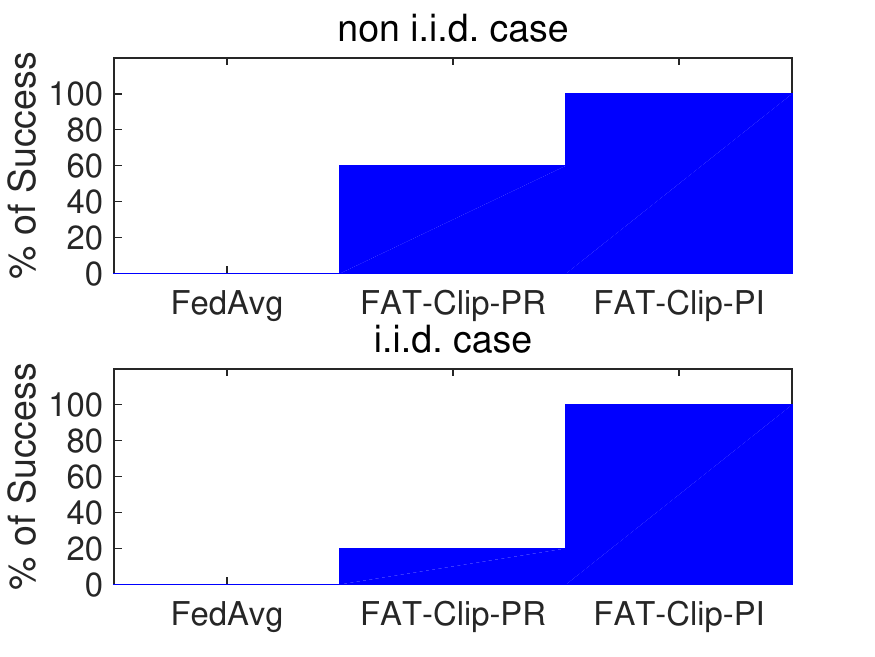}
	\caption{Percentage of successful training over 5 trials when applying FedAvg, \algpr and \algpi to CIFAR-10 dataset in non-i.i.d. case and i.i.d. case.} 
	\label{fig:cifar-10}
\end{minipage}
\end{figure*}

\textbf{1) Strongly Convex Model with Synthetic Data:} 
We consider a strongly convex model for Problem~(\ref{objective}) as follows:
$f_{i}\left( x \right)=\mathbb{E}_{\xi}\left[ f\left( x,\xi \right) \right]$ and $f\left( x,\xi \right)=\frac{1}{2}\left\| x \right\|^{2}+\left\langle \xi,x \right\rangle$,
where $\xi$ is a random vector. 
We compare FedAvg, \algpins, and \algprns, where the noise $\xi$ is Cauchy distributed (fat-tailed). 
Also, we compare \algpi and \algpr with $\xi$ having different tail-indexes ($\alpha=0.5, 1.0, \text{ and }1.5$).
For each distribution, we use the same experimental setup, and $m=5$ clients participate in the training.
We show the trajectories of FedAvg, \algpins, and \algpr for solving Problem~(\ref{objective}) with $\xi$ having Cauchy tails in Fig.~\ref{fig:Cauchy} and with $\xi$ having different $\alpha$-values in Fig.~\ref{fig:FAT_Clipping_alpha}.
We can clearly observe from Fig.~\ref{fig:Cauchy} that \algpi and \algpr converge rapidly in the Cauchy case, and \algpi converges faster than \algpr as our theoretical results predict. 
In contrast, FedAvg is not convergent in the Cauchy case.
In Fig.~\ref{fig:FAT_Clipping_alpha}, we can see that the convergence processes of \algpr and \algpi become slower as the $\alpha$-value increases as our theoretical results predict, but the differences in \algpi are much less obvious compared to those of \algprns.

\textbf{2) CNN (Non-convex Model) on the CIFAR-10:}
This setting has $m=10$ clients in total, and five clients are randomly selected to participate in each round of the training. 
We compare \alg algorithms with FedAvg under different data heterogeneity.
To simulate data heterogeneity across clients, we distribute the data to each client in a label-based partition following the same procedure as in existing works (e.g.,~\cite{mcmahan2017communication, li2020convergence, yang2021linearspeedup}): we use a parameter $p$ to represent the number of labeled classes in each client, with $p=10$ corresponding to the i.i.d. case and the rest corresponding to non-i.i.d. cases. 
The smaller the $p$-value, the more heterogeneous the data across clients. 
In Fig.~\ref{fig:cifar-10}, we present the percentage of successful training over 5 trials when applying FedAvg, \algpr and \algpi on CIFAR-10 in non-i.i.d. case ($p=2$) and i.i.d. case ($p=10$). 
\algpi has 100\% successful rates (i.e., no catastrophic model failures) in both non-i.i.d. and i.i.d cases, and \algpr has 60\% and 20\% successful rates in non-i.i.d. and i.i.d. cases, respectively. 
However, FedAvg fails in all 5 trials. 
Thus, compared to FedAvg, \alg methods (\algpi in particular) significantly reduce catastrophic training failures. 

\section{Conclusions and future work} \label{sec:conclusion}
In this paper, we investigated the problem of designing efficient federated learning algorithms with convergence performance guarantee in the presence of fat-tailed noise in the stochastic first-order oracles.
We first showed empirical evidence that fat-tailed noise in federated learning can be induced by data heterogeneity and local update steps.
To address the fat-tailed noise challenge in FL algorithm design, we proposed a clipping-based algorithmic framework called \alg.
The \alg framework contains two variants \algpr and \algpi, which perform clipping operations in each communication round and in each local update step, respectively.
Then, we derived the convergence rate bounds of \algpr and \algpi for strongly convex and non-convex loss functions under fat-tailed noise.
Not only does our work shed light on theoretical understanding of FL under fat-tailed noise, it also opens the doors to many new interesting questions in FL systems that experience fat-tailed noise.

\section*{Acknowledgements}
This work has been supported in part by NSF grants CAREER CNS-2110259, CNS-2112471, ECCS-2140277, and CCF-2110252.

%
%
%
\bibliographystyle{IEEEtran}{}
\bibliography{BIB/FederatedLearning,BIB/CommunicationEfficiency,BIB/HeavytailNoise,BIB/Teams}


\section*{Checklist}

\begin{enumerate}

\item For all authors...
\begin{enumerate}
  \item Do the main claims made in the abstract and introduction accurately reflect the paper's contributions and scope?
    \answerYes{}
  \item Did you describe the limitations of your work?
    \answerYes{}
  \item Did you discuss any potential negative societal impacts of your work?
    \answerNo{As a theoretical paper towards further understanding of federated optimization, we do not see a direct path to any negative applications.}
  \item Have you read the ethics review guidelines and ensured that your paper conforms to them?
    \answerYes{}
\end{enumerate}

\item If you are including theoretical results...
\begin{enumerate}
  \item Did you state the full set of assumptions of all theoretical results?
    \answerYes{See Section~\ref{subsec:analysis}.}
        \item Did you include complete proofs of all theoretical results?
    \answerYes{See Appendix.}
\end{enumerate}

\item If you ran experiments...
\begin{enumerate}
  \item Did you include the code, data, and instructions needed to reproduce the main experimental results (either in the supplemental material or as a URL)?
    \answerNo{The dataset used in this paper is widely-used public datasets and we have provided detailed instruction for the partition about datasets.}
  \item Did you specify all the training details (e.g., data splits, hyperparameters, how they were chosen)?
    \answerYes{See Section~\ref{sec:numerical} and Appendix.}
  \item Did you report error bars (e.g., with respect to the random seed after running experiments multiple times)?
    \answerNo{We run the experiments multiple times and report the failure rates instead.}
  \item Did you include the total amount of compute and the type of resources used (e.g., type of GPUs, internal cluster, or cloud provider)?
    \answerNo{}
\end{enumerate}

\item If you are using existing assets (e.g., code, data, models) or curating/releasing new assets...
\begin{enumerate}
  \item If your work uses existing assets, did you cite the creators?
    \answerYes{}
  \item Did you mention the license of the assets?
    \answerNo{}
  \item Did you include any new assets either in the supplemental material or as a URL?
    \answerNo{}
  \item Did you discuss whether and how consent was obtained from people whose data you're using/curating?
    \answerNo{}
  \item Did you discuss whether the data you are using/curating contains personally identifiable information or offensive content?
    \answerNo{}
\end{enumerate}

\item If you used crowdsourcing or conducted research with human subjects...
\begin{enumerate}
  \item Did you include the full text of instructions given to participants and screenshots, if applicable?
    \answerNA{}
  \item Did you describe any potential participant risks, with links to Institutional Review Board (IRB) approvals, if applicable?
    \answerNA{}
  \item Did you include the estimated hourly wage paid to participants and the total amount spent on participant compensation?
    \answerNA{}
\end{enumerate}

\end{enumerate}

\appendix
\newpage
\allowdisplaybreaks

\section{Proofs for Fat-Tailed Federated Learning} \label{proof_heavytail}

\subsection{Proof of \algpr} \label{proof_cpr}

For notional clarity, we have the following update:
\begin{align*}
	\text{Local update: } \x_{t, i}^{k+1} &= \x_{t, i}^{k} - \eta_L \nabla f_i(\x_{t, i}^k, \xi_{t, i}^k), k \in [K], \\
	\text{Clipping: } \x_{t, i}^{K+1} &= \x_{t, i}^{k} - \eta_L \text{clipping} (\sum_{k \in [K]} \nabla f_i(\x_{t, i}^k, \xi_{t, i}^k)),\\
	\Delta_{t, i} &= \sum_{k \in [K]} \nabla f_i(\x_{t, i}^k, \xi_{t, i}^k), \tilde{\Delta}_{t, i} = \text{clipping} (\sum_{k \in [K]} \nabla f_i(\x_{t, i}^k, \xi_{t, i}^k), \lambda), \\
	\Delta_{t} &= \frac{1}{m} \sum_{i \in [m]} \Delta_{t, i}, \tilde{\Delta}_{t} = \frac{1}{m} \sum_{i \in [m]} \tilde{\Delta}_{t, i}\\
	\x_{t+1} &= \x_t - \eta \eta_L \tilde{\Delta}_{t}.
\end{align*}

\begin{lem} [Bounded Variance of Stochastic Local Updates for \algprns] \label{lem: variance_clipping_updates_cpr}
	Assume $f_i(\x, \xi)$ satisfies the Bounded $\alpha-$Moment assumption~\ref{assum_moment}, then for \algpr we have:
	\begin{align*}
		\mathbb{E}[\| \tilde{\Delta}_{t} \|^2] &\leq K^2 G^\alpha \lambda^{2 - \alpha}, \\
		\mathbb{E} \| \tilde{\Delta}_{t} - \mathbb{E} [\tilde{\Delta}_{t}] \|^2 &\leq \frac{K^2}{m} G^\alpha \lambda^{2-\alpha}, \\
		\| \frac{1}{K} \mathbb{E}[\tilde{\Delta}_{t}] - \nabla f(\x_t) \|^2 &\leq L^2 \eta_L^2 K^2 G^2 + K^2 G^{2\alpha} \lambda^{-2(\alpha -1)} + L \eta_L K^2 G^{1+\alpha} \lambda^{1-\alpha}.
	\end{align*}
	Note here the expectation is on the random samples $\xi_{t, i}^k$.
\end{lem}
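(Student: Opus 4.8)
The plan is to bound each of the three quantities separately, using the clipping identity $\tilde\Delta_{t,i}=\min\{1,\lambda/\|\Delta_{t,i}\|\}\Delta_{t,i}$ together with Assumption~\ref{assum_moment}. The single key estimate I would establish first is a moment bound on each clipped client update: since $\|\tilde\Delta_{t,i}\|\le\lambda$ always, and also $\|\tilde\Delta_{t,i}\|\le\|\Delta_{t,i}\|$, we get $\|\tilde\Delta_{t,i}\|^2\le\lambda^{2-\alpha}\|\tilde\Delta_{t,i}\|^\alpha\le\lambda^{2-\alpha}\|\Delta_{t,i}\|^\alpha$. Now $\Delta_{t,i}=\sum_{k\in[K]}\nabla f_i(\x_{t,i}^k,\xi_{t,i}^k)$ is a sum of $K$ terms, so by the triangle inequality and convexity of $x\mapsto x^\alpha$ (or power-mean / Jensen), $\|\Delta_{t,i}\|^\alpha\le K^{\alpha-1}\sum_{k\in[K]}\|\nabla f_i(\x_{t,i}^k,\xi_{t,i}^k)\|^\alpha$, whence $\mathbb{E}\|\Delta_{t,i}\|^\alpha\le K^{\alpha-1}\cdot K\cdot G^\alpha=K^\alpha G^\alpha$. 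Combining, $\mathbb{E}\|\tilde\Delta_{t,i}\|^2\le K^\alpha G^\alpha\lambda^{2-\alpha}$. Hmm — this gives $K^\alpha$, but the lemma claims $K^2$; since $\alpha\le 2$ and (presumably) $\lambda\ge 1$ in the intended regime, one can absorb the gap, but more likely the cleaner route is: $\|\tilde\Delta_{t,i}\|^2\le\lambda^2$ wastes nothing when $\|\Delta\|>\lambda$, and when $\|\Delta\|\le\lambda$ bound $\|\tilde\Delta_{t,i}\|^2=\|\Delta_{t,i}\|^2\le K\sum_k\|\nabla f_i\|^2$; to keep only $\alpha$-moments one instead writes $\|\tilde\Delta_{t,i}\|^2\le \lambda^{2-\alpha}\|\Delta_{t,i}\|^\alpha$ on the clipped event and $\le\|\Delta_{t,i}\|^{2-\alpha}\|\Delta_{t,i}\|^\alpha\le\lambda^{2-\alpha}\|\Delta_{t,i}\|^\alpha$ on the unclipped event too — so uniformly $\mathbb{E}\|\tilde\Delta_{t,i}\|^2\le\lambda^{2-\alpha}\mathbb{E}\|\Delta_{t,i}\|^\alpha\le \lambda^{2-\alpha}K^\alpha G^\alpha$, and I would then just state the bound with whatever $K$-power the authors use, treating $K^\alpha\le K^2$.

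For the first displayed bound, from $\tilde\Delta_t=\frac1m\sum_i\tilde\Delta_{t,i}$ and $\|\tilde\Delta_{t,i}\|\le\lambda$ I would use $\|\tilde\Delta_t\|\le\frac1m\sum_i\|\tilde\Delta_{t,i}\|\le\lambda$, but to get the $G^\alpha\lambda^{2-\alpha}$ form rather than $\lambda^2$, apply Jensen over the $m$ clients: $\|\tilde\Delta_t\|^2\le\frac1m\sum_i\|\tilde\Delta_{t,i}\|^2$, so $\mathbb{E}\|\tilde\Delta_t\|^2\le\frac1m\sum_i\mathbb{E}\|\tilde\Delta_{t,i}\|^2\le K^2 G^\alpha\lambda^{2-\alpha}$, as claimed. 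For the second (variance) bound, note the clients are independent, so $\tilde\Delta_t-\mathbb{E}\tilde\Delta_t=\frac1m\sum_i(\tilde\Delta_{t,i}-\mathbb{E}\tilde\Delta_{t,i})$ is a sum of independent mean-zero terms; hence $\mathbb{E}\|\tilde\Delta_t-\mathbb{E}\tilde\Delta_t\|^2=\frac1{m^2}\sum_i\mathbb{E}\|\tilde\Delta_{t,i}-\mathbb{E}\tilde\Delta_{t,i}\|^2\le\frac1{m^2}\sum_i\mathbb{E}\|\tilde\Delta_{t,i}\|^2\le\frac{K^2}{m}G^\alpha\lambda^{2-\alpha}$, using $\mathbb{E}\|Z-\mathbb{E}Z\|^2\le\mathbb{E}\|Z\|^2$.

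The third bound is the main obstacle, as it combines three distinct sources of error: (i) the bias introduced by clipping, (ii) the client drift from taking $K$ local steps with stale iterates, and (iii) smoothness. I would decompose $\frac1K\mathbb{E}[\tilde\Delta_t]-\nabla f(\x_t)$ as (bias of clipped vs. unclipped pseudo-gradient) $+$ ($\frac1K\mathbb{E}[\Delta_t]-\frac1K\sum_k\nabla f(\x_t)$, i.e. the drift term) $+0$. The clipping bias is controlled by the standard inequality $\|\mathbb{E}[\mathrm{clip}(Z)]-\mathbb{E}[Z]\|\le\mathbb{E}\|Z\|\mathbf{1}\{\|Z\|>\lambda\}\le\mathbb{E}\|Z\|^\alpha/\lambda^{\alpha-1}$ applied to each $\Delta_{t,i}/$ per-coordinate-type argument, giving the $G^{2\alpha}\lambda^{-2(\alpha-1)}$ and $G^{1+\alpha}\lambda^{1-\alpha}$ terms after using $\mathbb{E}\|\Delta_{t,i}\|^\alpha\le K^\alpha G^\alpha$ and squaring; the drift term is handled by $L$-smoothness (Assumption~\ref{assum_smooth}): $\|\nabla f_i(\x_{t,i}^k)-\nabla f_i(\x_t)\|\le L\|\x_{t,i}^k-\x_t\|\le L\eta_L\sum_{j<k}\|\nabla f_i(\x_{t,i}^j,\xi_{t,i}^j)\|$, and bounding the accumulated displacement over $K$ steps produces the $L^2\eta_L^2 K^2 G^2$ term. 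The delicate point is juggling the $\alpha$-moment (rather than second-moment) bounds throughout the smoothness argument — one must use $\mathbb{E}\|\nabla f_i(\x,\xi)\|^2$ is \emph{not} assumed finite, so the drift estimate should be routed through $\mathbb{E}\|\cdot\|^\alpha$ and Hölder wherever a square appears, which is exactly why the final bound carries the mixed exponents $G^2$, $G^{2\alpha}$, $G^{1+\alpha}$. I would assemble the three pieces via $\|a+b+c\|^2\le 3(\|a\|^2+\|b\|^2+\|c\|^2)$ and absorb constants.
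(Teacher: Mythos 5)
Your proposal matches the paper's proof essentially step for step: the clipping inequality $\|\tilde{\Delta}_{t,i}\|^2\le\lambda^{2-\alpha}\|\tilde{\Delta}_{t,i}\|^\alpha\le\lambda^{2-\alpha}\|\Delta_{t,i}\|^\alpha$ combined with the $\alpha$-power-mean bound over the $K$ local steps (the paper is even looser, using $K$ in place of $K^{\alpha-1}$, so your $K^\alpha\le K^2$ slack is exactly the slack already in the paper), independence across clients plus $\mathbb{E}\|Z-\mathbb{E}Z\|^2\le\mathbb{E}\|Z\|^2$ for the variance term, and the drift-plus-clipping-bias decomposition for the third bound, with the drift handled through smoothness and first moments bounded by $G$ (Jensen, since $\alpha\ge 1$) and the bias through $\mathbb{E}[\|\Delta_{t,i}\|\mathbf{1}_{\{\|\Delta_{t,i}\|\ge\lambda\}}]\le\mathbb{E}[\|\Delta_{t,i}\|^\alpha]\lambda^{1-\alpha}$. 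The only cosmetic difference is the final assembly: the paper bounds the norm by $L\eta_L KG+KG^\alpha\lambda^{1-\alpha}$ and then squares, so the $G^{1+\alpha}\lambda^{1-\alpha}$ term is the drift--bias cross term (not a consequence of the bias alone, as your sketch suggests), and whether one keeps that cross term or uses $\|a+b\|^2\le 2\|a\|^2+2\|b\|^2$ only changes absolute constants, about which the paper itself is already careless.
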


\begin{proof}
	\begin{align*}
		\mathbb{E}[\| \tilde{\Delta}_{t} \|^2] &= \max_{i \in [m]} \mathbb{E}[\| \tilde{\Delta}_{t, i} \|^2] \\
		&\leq \mathbb{E}[\| \tilde{\Delta}_{t, j} \|^\alpha] \lambda^{2 - \alpha} \\
		&\leq K \sum_{k \in K} \mathbb{E}[\| \nabla f(\x_{t, j}^k, \xi_{t, j}^k) \|^\alpha] \lambda^{2 - \alpha} \\
		&\leq K^2 G^\alpha \lambda^{2 - \alpha},
	\end{align*}
	where $j = argmax_{i \in [m]} \mathbb{E}[\| \tilde{\Delta}_{t, i} \|^2]$, and the first inequality is due to the clipping, i.e., $\| \tilde{\Delta}_{t, i} \| \leq \lambda$.

	\begin{align*}
		\mathbb{E} \| \tilde{\Delta}_{t} - \mathbb{E} [\tilde{\Delta}_{t}] \|^2 &= \mathbb{E} \left\| \frac{1}{m} \sum_{i \in [m]} \left( \tilde{\Delta}_{t, i} - \mathbb{E} [\tilde{\Delta}_{t, i}] \right) \right\|^2 \\
		&\leq \frac{1}{m^2} \sum_{i \in [m]} \mathbb{E} \| \tilde{\Delta}_{t, i} - \mathbb{E} [\tilde{\Delta}_{t, i}] \|^2 \\
		&\leq \frac{1}{m^2} \sum_{i \in [m]} \mathbb{E} \| \tilde{\Delta}_{t, i} \|^2 \\
		&\leq \frac{K^2}{m} G^\alpha \lambda^{2-\alpha}.
	\end{align*}

	\begin{align*}
		&\| \frac{1}{K} \mathbb{E}[\tilde{\Delta}_{t}] - \nabla f(\x_t) \| \leq \left\| \nabla f(\x_t)- \frac{1}{K} \mathbb{E}[\Delta_t] \right\| + \frac{1}{K} \left\| \mathbb{E}[\Delta_t]- \mathbb{E}[\tilde{\Delta}_t] \right\| \\
		&\leq \frac{1}{m K} \sum_{i \in [m]} \sum_{k \in [K]} \mathbb{E} \left\| \nabla f_i(\x_t)- \nabla f_i (\x_{t, i}^k) \right\| + \frac{1}{m K} \sum_{i \in [m]} \left\| \mathbb{E}[\Delta_{t,i} - \tilde{\Delta}_{t,i}] \right\| \\
		&\leq \frac{ L \eta_L}{m K} \sum_{i \in [m]} \sum_{k \in [K]} \mathbb{E}\left\| \sum_{j \in [k]} \nabla f_i (\x_{t, i}^j, \xi_{t, i}^j) \right\| + \frac{1}{m K} \sum_{i \in [m]} \mathbb{E} [ \left\| \Delta_{t,i} \right\| \mathbf{1}_{ \{ \| \Delta_{t, i} \| \geq \lambda \} }] \\
		&\leq L \eta_L K G + K G^{\alpha} \lambda^{1 - \alpha}
	\end{align*}
	where $\mathbf{1}_{ \{ \cdot \}}$ is the indicator function,
	the last inequality follows from the fact that $\Delta_{t,i} = \tilde{\Delta}_{t,i}$ if $\| \Delta_{t, i} \| \leq \lambda$ and $\mathbb{E} [ \left\| \Delta_{t,i} \right\| \mathbf{1}_{ \{ \| \Delta_{t, i} \| \geq \lambda \} }] \leq \mathbb{E} [ \| \Delta_{t, i} \|^\alpha ] \lambda^{1 - \alpha} \leq K^2 G^{\alpha} \lambda^{1 - \alpha}$;
	the second last inequality is due to L-smoothness, Jenson's inequality (i.e.,$\mathbb{E}[\Delta_{t,i} - \tilde{\Delta}_{t,i}] \| \leq \mathbb{E} \| [\Delta_{t,i} - \tilde{\Delta}_{t,i}] \|$) and the clipping step.
	Then, we have 

	\begin{align*}
		&\| \frac{1}{K} \mathbb{E}[\tilde{\Delta}_{t}] - \nabla f(x) \|^2
		\leq L^2 \eta_L^2 K^2 G^2 + K^2 G^{2\alpha} \lambda^{-2(\alpha -1)} + L \eta_L K^2 G^{1+\alpha} \lambda^{1-\alpha}
	\end{align*}
\end{proof}

\ClippingPerRoundBoundedConvex*

\begin{proof}

\begin{align*}
	&\mathbb{E} [\| \x_{t+1} - x_* \|^2] = \mathbb{E} [\| \x_{t} - \eta \eta_L \tilde{\Delta}_{t}  - x_* \|^2] \nonumber\\
	&= \| \x_{t} - x_* \|^2 + \eta^2 \eta_L^2 \mathbb{E} [\| \tilde{\Delta}_{t} \|^2] - 2 \left< \x_{t} - x_*, \eta \eta_L \left( \mathbb{E}[\tilde{\Delta}_{t}] - K \nabla f(\x_t) + K \nabla f(\x_t) \right)\right> \nonumber\\
	&\leq (1 - \mu \eta \eta_L K )\| \x_{t} - x_* \|^2 + \eta^2 \eta_L^2 \mathbb{E} [\| \tilde{\Delta}_{t} \|^2] - 2 \eta \eta_L K \left< \x_{t} - x_*, \left( \frac{1}{K} \mathbb{E}[ \tilde{\Delta}_{t}] - \nabla f(\x_t) \right)\right> \nonumber \\
	&\quad - 2\eta \eta_L K \left(f(\x_t) - f(\x_*) \right) \nonumber\\
	&\leq (1 - \frac{1}{2} \mu \eta \eta_L K )\| \x_{t} - x_* \|^2 + \eta^2 \eta_L^2 \mathbb{E} [\| \tilde{\Delta}_{t} \|^2] + \frac{8 \eta \eta_L K}{\mu} \| \frac{1}{K} \mathbb{E}[\tilde{\Delta}_{t}] - \nabla f(\x_t) \|^2 \nonumber \\
	&\quad - 2\eta \eta_L K \left(f(\x_t) - f(\x_*) \right).
\end{align*}

The first inequality follows from the strongly-convex property, i.e., $- \left< \x_{t} - x_*, \nabla f(\x_t) \right> \leq - (f(\x_t) - f(\x_*) + \frac{\mu}{2} \| x_t - \x_* \|^2)$, and the last inequality is due to Young's inequality.
Then we have
\begin{align*}
	f(\x_t) - f(\x_*) &\leq \frac{1}{2\eta \eta_L K} \left[- \mathbb{E} [\| \x_{t+1} - x_* \|^2] + (1 - \frac{1}{2} \mu \eta \eta_L K )\| \x_{t} - x_* \|^2 \right] \\
	&\quad + \frac{\eta \eta_L}{2 K} \mathbb{E} [\| \tilde{\Delta}_{t} \|^2] + \frac{4}{\mu} \| \mathbb{E}[\frac{1}{K} \tilde{\Delta}_{t} - \nabla f(\x_t) \|^2 \\
	&\leq \frac{1}{2\eta \eta_L K} \left[- \mathbb{E} [\| \x_{t+1} - x_* \|^2] + (1 - \frac{1}{2} \mu \eta \eta_L K )\| \x_{t} - x_* \|^2 \right] \\
	&\quad + \frac{\eta \eta_L}{2 K} K^2 G^\alpha \lambda^{2-\alpha} + \frac{4}{\mu} [L^2 \eta_L^2 K^2 G^2 + K^2 G^{2\alpha} \lambda^{-2(\alpha -1)} + L \eta_L K^2 G^{1+\alpha} \lambda^{1-\alpha}],
\end{align*}
where the last inequality is due to Lemma~\ref{lem: variance_clipping_updates_cpr}.

Let $w_t = (1 - \frac{1}{2} \mu \eta \eta_L K)^{1 - t} $, $\bar{\x}_T = \x_t$ with probability $\frac{w_t}{\sum_{j \in [T]} w_j}$.
\begin{align*}
	f(\bar{x}_T) - f(\x_*)
	&\leq \frac{1}{\sum_{j \in [T]} w_j} \sum_{t \in [T]} \left( \frac{w_t}{2\eta \eta_L K} \left[- \| \x_{t+1} - x_* \|^2 + (1 - \frac{1}{2} \mu \eta \eta_L K )\| \x_{t} - x_* \|^2 \right] \right) \\
	&\quad + \frac{\eta \eta_L K}{2} G^\alpha \lambda^{2-\alpha} + \frac{4}{\mu} [L^2 \eta_L^2 K^2 G^2 + K^2 G^{2\alpha} \lambda^{-2(\alpha -1)} + L \eta_L K^2 G^{1+\alpha} \lambda^{1-\alpha}] \\
	&\leq \frac{1}{\sum_{j \in [T]} w_j} \sum_{t \in [T]} \left( \frac{1}{2\eta \eta_L K} \left[- w_t \| \x_{t+1} - x_* \|^2 + w_{t-1} \| \x_{t} - x_* \|^2 \right] \right) \\
	&\quad + \frac{\eta \eta_L K}{2} G^\alpha \lambda^{2-\alpha} + \frac{4}{\mu} [L^2 \eta_L^2 K^2 G^2 + K^2 G^{2\alpha} \lambda^{-2(\alpha -1)} + L \eta_L K^2 G^{1+\alpha} \lambda^{1-\alpha}] \\
	&\leq \frac{1}{\sum_{j \in [T]} w_j} \frac{1}{2\eta \eta_L K} \| \x_{1} - x_* \|^2 \\
	&\quad + \frac{\eta \eta_L K}{2} G^\alpha \lambda^{2-\alpha} + \frac{4}{\mu} [L^2 \eta_L^2 K^2 G^2 + K^2 G^{2\alpha} \lambda^{-2(\alpha -1)} + L \eta_L K^2 G^{1+\alpha} \lambda^{1-\alpha}],
\end{align*}
where the second inequality follows from $w_t \leq w_{t-1}$.

\begin{align*}
	2 \eta \eta_L K \sum_{t \in [T]} w_t &= 2 \eta \eta_L K \left(1 - \frac{1}{2} \mu \eta \eta_L K \right)^{-T} \sum_{t \in [T]} \left(1 - \frac{1}{2} \mu \eta \eta_L K \right)^t \\
	&= \frac{4}{\mu} \left(1 - \frac{1}{2} \mu \eta \eta_L K \right)^{-T} \left[1 - \left(1 - \frac{1}{2} \mu \eta \eta_L K \right)^{T}\right] \\
	&\geq \frac{4}{\mu} \left(1 - \frac{1}{2} \mu \eta \eta_L K \right)^{-T} \left[1 - \exp{\left( - \frac{1}{2} \mu \eta \eta_L K T \right)}\right] \\
	&\geq \frac{2}{\mu} \left(1 - \frac{1}{2} \mu \eta \eta_L K \right)^{-T},
\end{align*}

where the last inequality follows from that $\eta \eta_L K \geq \frac{2}{\mu T}$, the second last inequality is due to $\left(1 - \frac{1}{2} \mu \eta \eta_L K \right)^{T} \leq \exp{\left( - \frac{1}{2} \mu \eta \eta_L K T \right)}$.

\begin{align*}
	f(\bar{x}_T) - f(\x_*)
	&\leq \frac{\mu}{2} \left(1 - \frac{1}{2} \mu \eta \eta_L K\right)^T + \frac{\eta \eta_L K}{2} G^\alpha \lambda^{2-\alpha} \\
	&\quad + \frac{4}{\mu} [L^2 \eta_L^2 K^2 G^2 + K^2 G^{2\alpha} \lambda^{-2(\alpha -1)} + L \eta_L K^2 G^{1+\alpha} \lambda^{1-\alpha}] \\
	&\leq \frac{\mu}{2} \exp{\left( - \frac{1}{2} \mu \eta \eta_L K T \right)} + \frac{\eta \eta_L K}{2} G^\alpha \lambda^{2-\alpha} \\
	&\quad + \frac{4}{\mu} [L^2 \eta_L^2 K^2 G^2 + K^2 G^{2\alpha} \lambda^{-2(\alpha -1)} + L \eta_L K^2 G^{1+\alpha} \lambda^{1-\alpha}].
\end{align*}

Let $\eta \eta_L K = \frac{2c}{\mu} \frac{\ln(T)}{mKT}$ ($c \geq 1$ is a constant and $T^{-c - \frac{2-2\alpha}{\alpha}} \leq m^{\frac{2-2\alpha}{\alpha}} K^{\frac{2}{\alpha}}$), $\lambda = (mKT)^{\frac{1}{\alpha}}$, and $\eta_L \leq (mKT)^{\frac{1-\alpha}{\alpha}}$,
\begin{align*}
	f(\bar{x}_T) - f(\x_*)
	&\leq \frac{1}{T^{c}} + (mKT)^{\frac{2 - 2 \alpha}{\alpha}} \ln(T) + (mT)^{\frac{2-2\alpha}{\alpha}} K^{\frac{2}{\alpha}} = \mathcal{O}((mT)^{\frac{2-2\alpha}{\alpha}} K^{\frac{2}{\alpha}}).
\end{align*}
\end{proof}


\ClippingPerRoundBoundedNonconvex*
\begin{proof}
Due to the smoothness in Assumption \ref{assum_smooth}, taking expectation of $f(\x_{t+1})$ over the randomness at communication round $t$, we have:
\begin{align}
	&\mathbb{E} [f(\x_{t+1})] - f(\x_t) \leq \big< \nabla f(\x_t), \mathbb{E} [\x_{t+1} - \x_t] \big> + \frac{L}{2} \mathbb{E} [\| \x_{t+1} - \x_t \|^2] \nonumber \\
	&= - \eta \eta_L \big< \nabla f(\x_t), \mathbb{E} [\tilde{\Delta}_{t}] \big> + \frac{L}{2} \eta^2 \eta_L^2 \mathbb{E} [\| \tilde{\Delta}_{t} \|^2] \nonumber\\
	&= - \frac{\eta \eta_L K}{2} \| \nabla f(\x_t) \|^2 - \frac{\eta \eta_L}{2K} \| \mathbb{E}[\tilde{\Delta}_t] \|^2 + \frac{\eta \eta_L K}{2}  \| \nabla f(\x_t)- \frac{1}{K} \mathbb{E}[\tilde{\Delta}_t] \|^2 + \frac{L \eta^2 \eta_L^2}{2} \mathbb{E} [ \| \tilde{\Delta}_t \|^2 ] \nonumber \\
	&= - \frac{\eta \eta_L K}{2} \| \nabla f(\x_t) \|^2 + \left(- \frac{\eta \eta_L}{2K} + \frac{L \eta^2 \eta_L^2}{2} \right) \| \mathbb{E}[\tilde{\Delta}_t] \|^2 + \frac{\eta \eta_L K}{2}  \| \nabla f(\x_t)- \frac{1}{K} \mathbb{E}[\tilde{\Delta}_t] \|^2 \nonumber \\
	&\quad + \frac{L \eta^2 \eta_L^2}{2} \mathbb{E} [ \| \tilde{\Delta}_t - \mathbb{E} [\tilde{\Delta}_t] \|^2 ] \nonumber \\
	&\leq - \frac{\eta \eta_L K}{2} \| \nabla f(\x_t) \|^2 + \frac{\eta \eta_L K}{2} \underbrace{\| \nabla f(\x_t)- \frac{1}{K} \mathbb{E}[\tilde{\Delta}_t] \|^2}_{A_1} + \frac{L \eta^2 \eta_L^2}{2} \underbrace{\mathbb{E} [ \| \tilde{\Delta}_t - \mathbb{E} [\tilde{\Delta}_t] \|^2 ]}_{A_2}, \label{ineq_smooth_cpr}
\end{align}
where the last inequality follows from $\left(- \frac{\eta \eta_L}{2K} + \frac{L \eta^2 \eta_L^2}{2} \right) \leq 0$ if $\eta \eta_L K L \leq 1$.

From Lemma~\ref{lem: variance_clipping_updates_cpr}, we have the bound of $A_1$ and $A_2$ in (\ref{ineq_smooth_cpr}).
By rearranging and telescoping, we have:
\begin{align*}
	\frac{1}{T} \sum_{t \in [T]} \mathbb{E} \| \nabla f(\x_t) \|^2 &\leq \frac{2 \left( f(\x_1) - f(x_T) \right)}{\eta \eta_L K T} + \left( L^2 \eta_L^2 K^2 G^2 + K^2 G^{2\alpha} \lambda^{-2(\alpha -1)} + L \eta_L K^2 G^{1+\alpha} \lambda^{1-\alpha} \right) \\
	&\quad + \frac{L \eta \eta_L}{m} \left(K G^\alpha \lambda^{2-\alpha} \right).
\end{align*}

Suppose $\eta \eta_L = m^{\frac{2 \alpha - 2}{3 \alpha - 2}} K^{\frac{- \alpha - 2}{3 \alpha - 2}} T^{\frac{-\alpha}{3 \alpha - 2}}, \eta_L \leq (mT)^{\frac{1 - \alpha}{3\alpha - 2}} K^{\frac{4 - 4 \alpha}{3 \alpha - 2}}$, and $\lambda = (mK^4T)^{\frac{1}{3 \alpha - 2}}$,
\begin{align*}
	\min_{t \in [T]} \mathbb{E} \| \nabla f(\x_t) \|^2 \leq \mathcal{O}((mT)^{\frac{2 - 2 \alpha}{3 \alpha - 2}} K^{\frac{4 - 2 \alpha}{3 \alpha - 2}}).
\end{align*}

\end{proof}

\subsection{Proof of \algpins} \label{proof_cpi}
For \algpins, we have the following notions:
\begin{align*}
	\tilde{\nabla} f_i(\x_{t, i}^k, \xi_{t, i}^k) &= \min \{1, \frac{\lambda_t}{\| \nabla f_i(\x_{t, i}^k, \xi_{t, i}^k) \| } \} \nabla f_i(\x_{t, i}^k, \xi_{t, i}^k), \tilde{\nabla} f(\x_{t,i}^k) = \mathbb{E} [\tilde{\nabla} f_i(\x_{t, i}^k, \xi_{t, i}^k)];\\
	\text{Local steps: } \x_{t, i}^{k+1} &= \x_{t, i}^{k} - \eta_L \tilde{\nabla} f_i(\x_{t, i}^k, \xi_{t, i}^k), k \in [K];\\
	\Delta_{t, i} &= \sum_{k \in [K]} \nabla f_i(\x_{t, i}^k, \xi_{t, i}^k), \tilde{\Delta}_{t, i} = \sum_{k \in [K]} \tilde{\nabla} f_i(\x_{t, i}^k, \xi_{t, i}^k) \\
	\Delta_{t} &= \frac{1}{m} \sum_{i \in [m]} \Delta_{t, i}, \tilde{\Delta}_{t} = \frac{1}{m} \sum_{i \in [m]} \tilde{\Delta}_{t, i}\\
	\x_{t+1} &= \x_t - \eta \eta_L \frac{1}{m} \sum_{i \in [m]} \sum_{k \in [K]} \tilde{\nabla} f_i(\x_{t, i}^k, \xi_{t, i}^k) = \x_t - \eta \eta_L \tilde{\Delta}_{t}.
\end{align*}

\begin{lem} [Bounded Variance of Stochastic Local Updates for \algpins] \label{lem: variance_clipping_updates_cpi}
	Assume $f_i(\x, \xi)$ satisfies the Bounded $\alpha-$Moment assumption~\ref{assum_moment}, then we have:
	\begin{align*}
		\mathbb{E} [\| \tilde{\Delta}_{t} \|^2] &\leq K^2 G^\alpha \lambda^{2-\alpha}, \\
		\mathbb{E} \| \tilde{\Delta}_{t} - \mathbb{E} [\tilde{\Delta}_{t}] \|^2 &\leq \frac{K}{m} G^\alpha \lambda^{2-\alpha}, \\
		\| \frac{1}{K} \mathbb{E}[\tilde{\Delta}_{t}] - \nabla f(x) \|^2 &\leq 2 G^{2 \alpha} \lambda^{-2(\alpha - 1)} + 2 L^2  \eta_L^2 K^2 G^\alpha \lambda^{2 - \alpha}.
	\end{align*}
\end{lem}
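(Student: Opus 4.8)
The plan is to establish the three bounds as the per‑iteration analogues of Lemma~\ref{lem: variance_clipping_updates_cpr} (the \algprns version), the only genuinely new ingredient being the second bound, where per‑iteration clipping buys a factor‑$K$ improvement ($\frac{K}{m}G^\alpha\lambda^{2-\alpha}$ instead of $\frac{K^2}{m}G^\alpha\lambda^{2-\alpha}$) — which is what ultimately delivers the linear speedup in $K$. I would first isolate two elementary estimates used everywhere. (i) Since clipping is the non‑expansive projection onto the ball of radius $\lambda$, $\|\tilde\nabla f_i(\x,\xi)\|\le\lambda$, hence $\|\tilde\nabla f_i(\x,\xi)\|^2 \le \lambda^{2-\alpha}\|\tilde\nabla f_i(\x,\xi)\|^\alpha \le \lambda^{2-\alpha}\|\nabla f_i(\x,\xi)\|^\alpha$, so $\mathbb{E}\|\tilde\nabla f_i(\x,\xi)\|^2 \le \lambda^{2-\alpha}G^\alpha$ by Assumption~\ref{assum_moment}. (ii) $\|\mathrm{clip}_\lambda(v)-v\| = (\|v\|-\lambda)_+ \le \|v\|\,\mathbf{1}_{\{\|v\|>\lambda\}} \le \|v\|^\alpha\lambda^{1-\alpha}$, so that (using Assumption~\ref{assum_unbias}) $\big\|\mathbb{E}_\xi[\tilde\nabla f_i(\x,\xi)] - \nabla f_i(\x)\big\| \le G^\alpha\lambda^{1-\alpha}$.

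For the first bound, I would use convexity of $\|\cdot\|^2$ to pass from the client average to the worst client, $\mathbb{E}\|\tilde\Delta_t\|^2 \le \frac1m\sum_i\mathbb{E}\|\tilde\Delta_{t,i}\|^2 \le \max_i\mathbb{E}\|\tilde\Delta_{t,i}\|^2$, then Cauchy--Schwarz on the inner sum over local steps, $\|\tilde\Delta_{t,i}\|^2 = \big\|\sum_{k\in[K]}\tilde\nabla f_i(\x_{t,i}^k,\xi_{t,i}^k)\big\|^2 \le K\sum_{k\in[K]}\|\tilde\nabla f_i(\x_{t,i}^k,\xi_{t,i}^k)\|^2$, and finally estimate (i) to get $\mathbb{E}\|\tilde\Delta_{t,i}\|^2 \le K\cdot K\cdot\lambda^{2-\alpha}G^\alpha$.

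For the second bound, independence across clients removes all cross terms, $\mathbb{E}\|\tilde\Delta_t - \mathbb{E}\tilde\Delta_t\|^2 = \frac{1}{m^2}\sum_i\mathbb{E}\|\tilde\Delta_{t,i}-\mathbb{E}\tilde\Delta_{t,i}\|^2$, so it suffices to show the per‑client variance is at most $KG^\alpha\lambda^{2-\alpha}$ — a factor $K$ better than the trivial bound $\mathbb{E}\|\tilde\Delta_{t,i}\|^2$ used for \algprns. The crucial point, and the reason per‑iteration clipping helps, is that with the filtration $\mathcal{F}^k = \sigma(\xi_{t,i}^1,\dots,\xi_{t,i}^{k-1})$ the increments $\tilde\nabla f_i(\x_{t,i}^k,\xi_{t,i}^k) - \mathbb{E}[\tilde\nabla f_i(\x_{t,i}^k,\xi_{t,i}^k)\mid\mathcal{F}^k]$ form a martingale difference sequence, each with conditional (hence unconditional) second moment $\le \lambda^{2-\alpha}G^\alpha$ by estimate (i); orthogonality of the increments then makes the squared norm of their sum over $k\in[K]$ add to at most $K\lambda^{2-\alpha}G^\alpha$, i.e.\ linear rather than quadratic in $K$. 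The subtlety — and where this genuinely departs from the \algprns proof — is that $\mathbb{E}\tilde\Delta_{t,i}$ is the \emph{unconditional} mean, whereas the martingale is centered by the \emph{running} conditional means $\mathbb{E}[\tilde\nabla f_i(\x_{t,i}^k,\xi_{t,i}^k)\mid\mathcal{F}^k]$, which themselves drift with the iterates; one must argue that this drift contributes only lower‑order (or is otherwise absorbed into the same estimate), and I expect this to be the main obstacle in the lemma.

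For the third bound, I would write $\frac1K\mathbb{E}[\tilde\Delta_t] - \nabla f(\x_t) = \frac{1}{mK}\sum_i\sum_k\big(\mathbb{E}[\tilde\nabla f_i(\x_{t,i}^k)] - \nabla f_i(\x_t)\big)$, with $\tilde\nabla f_i(\y):=\mathbb{E}_\xi[\tilde\nabla f_i(\y,\xi)]$, pull the expectation outside the norm by Jensen, and split each summand as $\big(\tilde\nabla f_i(\x_{t,i}^k)-\nabla f_i(\x_{t,i}^k)\big) + \big(\nabla f_i(\x_{t,i}^k)-\nabla f_i(\x_t)\big)$. Estimate (ii) bounds the first part by $G^\alpha\lambda^{1-\alpha}$; for the second, $\|\nabla f_i(\x_{t,i}^k)-\nabla f_i(\x_t)\| \le L\eta_L\big\|\sum_{j<k}\tilde\nabla f_i(\x_{t,i}^j,\xi_{t,i}^j)\big\|$ by Assumption~\ref{assum_smooth} and the update rule, and by Jensen together with the first‑bound computation its expectation is at most $L\eta_L\sqrt{K^2 G^\alpha\lambda^{2-\alpha}} = L\eta_L K G^{\alpha/2}\lambda^{1-\alpha/2}$. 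Adding the two parts and applying $(a+b)^2\le 2a^2+2b^2$ gives exactly $2G^{2\alpha}\lambda^{-2(\alpha-1)} + 2L^2\eta_L^2 K^2 G^\alpha\lambda^{2-\alpha}$. Bounds one and three are mechanical once (i) and (ii) are in hand; the careful treatment of the iterate drift inside the martingale argument for bound two is where the real work lies.
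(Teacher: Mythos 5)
Your proposal is correct and follows essentially the same route as the paper: the same estimate $\mathbb{E}\|\tilde{\nabla} f_i(\x,\xi)\|^2\le \lambda^{2-\alpha}\mathbb{E}\|\nabla f_i(\x,\xi)\|^{\alpha}\le G^\alpha\lambda^{2-\alpha}$ with Jensen/Cauchy--Schwarz over the $K$ local steps for the first bound, the martingale-difference orthogonality across local steps (plus independence across clients) for the second, and the same bias-plus-drift split with $L$-smoothness and $(a+b)^2\le 2a^2+2b^2$ for the third. The ``main obstacle'' you flag for the second bound is not actually resolved in the paper either: it defines $\tilde{\nabla} f_i(\x_{t,i}^k) := \mathbb{E}[\tilde{\nabla} f_i(\x_{t,i}^k,\xi_{t,i}^k)]$ and simply identifies $\mathbb{E}[\tilde{\Delta}_t]$ with $\frac{1}{m}\sum_{i}\sum_{k}\tilde{\nabla} f_i(\x_{t,i}^k)$, i.e.\ it centers at the running conditional means throughout and then invokes the martingale-difference lemma, so no separate argument for the drift between conditional and unconditional means is given.
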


\begin{proof}
\begin{align*}
	\mathbb{E} [\| \tilde{\Delta}_{t} \|^2] &= \frac{1}{m} \sum_{i \in [m]} \mathbb{E} [\| \tilde{\Delta}_{t, i} \|^2] \\
	&\leq \frac{1}{m} \sum_{i \in [m]} \mathbb{E} [\| \sum_{j \in [K]} \tilde{\nabla} f(\x_{t, i}^j, \xi_{t, i}^j) \|^2] \\
	&\leq \frac{K}{m} \sum_{i \in [m]} \sum_{j \in [K]} \mathbb{E} [\| \tilde{\nabla} f(\x_{t, i}^j, \xi_{t, i}^j) \|^2] \\
	&\leq K^2 G^\alpha \lambda^{2-\alpha},
\end{align*}
where the last inequality follows from the fact that $\mathbb{E} \| \tilde{\nabla} f_i(\x_{t,i}^k, \xi_{t,i}^k) \|^2 \leq \mathbb{E} \| \tilde{\nabla} f_i(\x_{t,i}^k, \xi_{t,i}^k) \|^\alpha \lambda^{2-\alpha} \leq G^\alpha \lambda^{2-\alpha}$ (see Lemma 9 in~\cite{zhang2020adaptive}).

	\begin{align*}
		&\mathbb{E} \| \tilde{\Delta}_{t} - \mathbb{E} [\tilde{\Delta}_{t}] \|^2 = \mathbb{E} \left\| \frac{1}{m} \sum_{i \in [m]} \sum_{k \in [K]} \tilde{\nabla} f_i(\x_{t,i}^k, \xi_{t,i}^k) - \frac{1}{m} \sum_{i \in [m]} \sum_{k \in [K]} \tilde{\nabla} f_i(\x_{t,i}^k) \right\|^2 \\
		&\leq \frac{1}{m^2} \sum_{i \in [m]} \sum_{k \in [K]} \mathbb{E} \| \tilde{\nabla} f_i(\x_{t,i}^k, \xi_{t,i}^k) - \tilde{\nabla} f_i(\x_{t,i}^k) \|^2 \\
		&\leq \frac{1}{m^2} \sum_{i \in [m]} \sum_{k \in [K]} \mathbb{E} \| \tilde{\nabla} f_i(\x_{t,i}^k, \xi_{t,i}^k) \|^2 \\
		&\leq \frac{K}{m} G^\alpha \lambda^{2-\alpha},
	\end{align*}
	where the first inequality follows from the fact that $\{\tilde{\nabla} f_i(\x_{t,i}^k, \xi_{t,i}^k) - \tilde{\nabla} f_i(\x_{t,i}^k)\}$ form a martingale difference sequence (Lemma 4 in ~\cite{Karimireddy2020SCAFFOLD}), the second inequalities is due to $\mathbb{E} [\| X - \mathbb{E}[X] \|^2] \leq \mathbb{E} [\| X \|^2]$, and the third inequality follows from the fact that $\mathbb{E} \| \tilde{\nabla} f_i(\x_{t,i}^k, \xi_{t,i}^k) \|^2 \leq \mathbb{E} \| \tilde{\nabla} f_i(\x_{t,i}^k, \xi_{t,i}^k) \|^\alpha \lambda^{2-\alpha} \leq G^\alpha \lambda^{2-\alpha}$ (see Lemma 9 in~\cite{zhang2020adaptive}).

	\begin{align*}
		&\| \frac{1}{K} \mathbb{E}[\tilde{\Delta}_{t}] - \nabla f(x) \|^2 \leq \left\| \frac{1}{mK} \sum_{i \in [m]} \sum{k \in [K]} \left( \tilde{\nabla} f_i(\x_{t,i}^k) - f_i(\x_t) \right) \right\|^2 \\
		&\leq \frac{1}{mK} \sum_{i \in [m]} \sum_{k \in [K]} \left\| \tilde{\nabla} f_i(\x_{t,i}^k) - f_i(\x_t) \right\|^2 \\
		&\leq \frac{1}{mK} \sum_{i \in [m]} \sum_{k \in [K]} \left(2 \left\| \tilde{\nabla} f_i(\x_{t,i}^k) - \nabla f_i(\x_{t, i}^k) \right\|^2 + 2 \left\| \nabla f_i(\x_{t, i}^k) - f_i(\x_t) \right\|^2 \right) \\
		&\leq 2 G^{2 \alpha} \lambda^{-2(\alpha - 1)} + 2 L^2 \frac{1}{mK} \sum_{i \in [m]} \sum_{k \in [K]} \| \x_{t, i}^k - \x_t \|^2 \\
		&\leq 2 G^{2 \alpha} \lambda^{-2(\alpha - 1)} + 2 L^2 \eta_L^2 \frac{1}{mK} \sum_{i \in [m]} \sum_{k \in [K]} \left\| \sum_{j \in [K]} \nabla f(x_{t, i}^j, \xi_{t, i}^j) \right\|^2 \\
		&\leq 2 G^{2 \alpha} \lambda^{-2(\alpha - 1)} + 2 L^2  \eta_L^2 K^2 G^\alpha \lambda^{2 - \alpha},
	\end{align*}
	where the forth inequality is due to $\| \tilde{\nabla} f_i(\x) - \nabla f_i(\x) \|^2 \leq G^{2 \alpha} \lambda^{-2(\alpha - 1)}$ (see Lemma 9 in~\cite{zhang2020adaptive}), and
	the last inequality follows from the fact that $\mathbb{E} \| \tilde{\nabla} f_i(\x_{t,i}^k, \xi_{t,i}^k) \|^2 \leq G^\alpha \lambda^{2-\alpha}$.
\end{proof}

\ClippingPerIterationBoundedConvex*

\begin{proof}


Similarly, we have the following one step iteration:
\begin{align*}
	f(\x_t) - f(\x_*) &\leq \frac{1}{2\eta \eta_L K} \left[- \mathbb{E} [\| \x_{t+1} - x_* \|^2] + (1 - \frac{1}{2} \mu \eta \eta_L K )\| \x_{t} - x_* \|^2 \right] \\
	&\quad + \frac{\eta \eta_L}{2 K} \mathbb{E} [\| \tilde{\Delta}_{t} \|^2] + \frac{4}{\mu} \| \mathbb{E}[\frac{1}{K} \tilde{\Delta}_{t} - \nabla f(\x_t) \|^2 \\
	&\leq \frac{1}{2\eta \eta_L K} \left[- \mathbb{E} [\| \x_{t+1} - x_* \|^2] + (1 - \frac{1}{2} \mu \eta \eta_L K )\| \x_{t} - x_* \|^2 \right] \\
	&\quad + \frac{\eta \eta_L}{2 K} K^2 G^\alpha \lambda^{2-\alpha} + \frac{4}{\mu} [2 G^{2 \alpha} \lambda^{-2(\alpha - 1)} + 2 L^2 \eta_L^2 K^2 G^{\alpha} \lambda^{2 - \alpha}],
\end{align*}
where the last inequality is due to Lemma~\ref{lem: variance_clipping_updates_cpi}.


Let $w_t = (1 - \frac{1}{2} \mu \eta \eta_L K)^{1 - t} $, $\bar{\x}_T = \x_t$ with probability $\frac{w_t}{\sum_{j \in [T]} w_j}$.
\begin{align*}
	f(\bar{x}_T) - f(\x_*)
	&\leq \frac{1}{\sum_{j \in [T]} w_j} \sum_{t \in [T]} \left( \frac{w_t}{2\eta \eta_L K} \left[- \| \x_{t+1} - x_* \|^2 + (1 - \frac{1}{2} \mu \eta \eta_L K )\| \x_{t} - x_* \|^2 \right] \right) \\
	&\quad + \frac{\eta \eta_L K}{2} G^\alpha \lambda^{2-\alpha} + \frac{4}{\mu} [2 G^{2 \alpha} \lambda^{-2(\alpha - 1)} + 2 L^2 \eta_L^2 K^2 G^{\alpha} \lambda^{2 - \alpha}] \\
	&\leq \frac{1}{\sum_{j \in [T]} w_j} \sum_{t \in [T]} \left( \frac{1}{2\eta \eta_L K} \left[- w_t \| \x_{t+1} - x_* \|^2 + w_{t-1} \| \x_{t} - x_* \|^2 \right] \right) \\
	&\quad + \frac{\eta \eta_L K}{2} G^\alpha \lambda^{2-\alpha} + \frac{4}{\mu} [2 G^{2 \alpha} \lambda^{-2(\alpha - 1)} + 2 L^2 \eta_L^2 K^2 G^{\alpha} \lambda^{2 - \alpha}] \\
	&\leq \frac{1}{\sum_{j \in [T]} w_j} \frac{1}{2\eta \eta_L K} \| \x_{1} - x_* \|^2 \\
	&\quad + \frac{\eta \eta_L K}{2} G^\alpha \lambda^{2-\alpha} + \frac{4}{\mu} [2 G^{2 \alpha} \lambda^{-2(\alpha - 1)} + 2 L^2 \eta_L^2 K^2 G^{\alpha} \lambda^{2 - \alpha}].
\end{align*}

\begin{align*}
	2 \eta \eta_L K \sum_{t \in [T]} w_t &= 2 \eta \eta_L K \left(1 - \frac{1}{2} \mu \eta \eta_L K \right)^{-T} \sum_{t \in [T]} \left(1 - \frac{1}{2} \mu \eta \eta_L K \right)^t \\
	&= \frac{4}{\mu} \left(1 - \frac{1}{2} \mu \eta \eta_L K \right)^{-T} \left[1 - \left(1 - \frac{1}{2} \mu \eta \eta_L K \right)^{T}\right] \\
	&\geq \frac{4}{\mu} \left(1 - \frac{1}{2} \mu \eta \eta_L K \right)^{-T} \left[1 - \exp{\left( - \frac{1}{2} \mu \eta \eta_L K T \right)}\right] \\
	&\geq \frac{2}{\mu} \left(1 - \frac{1}{2} \mu \eta \eta_L K \right)^{-T},
\end{align*}

where the last inequality follows from that $\eta \eta_L K \geq \frac{2}{\mu T}$, teh second last inequality is due to $\left(1 - \frac{1}{2} \mu \eta \eta_L K \right)^{T} \leq \exp{\left( - \frac{1}{2} \mu \eta \eta_L K T \right)}$.

\begin{align*}
	f(\bar{x}_T) - f(\x_*)
	&\leq \frac{\mu}{2} \left(1 - \frac{1}{2} \mu \eta \eta_L K\right)^T + \frac{\eta \eta_L K}{2} G^\alpha \lambda^{2-\alpha} + \frac{4}{\mu} [2 G^{2 \alpha} \lambda^{-2(\alpha - 1)} + 2 L^2 \eta_L^2 K^2 G^{\alpha} \lambda^{2 - \alpha}] \\
	&\leq \frac{\mu}{2} \exp{\left( - \frac{1}{2} \mu \eta \eta_L K T \right)} + \frac{\eta \eta_L K}{2} G^\alpha \lambda^{2-\alpha} + \frac{4}{\mu} [2 G^{2 \alpha} \lambda^{-2(\alpha - 1)} + 2 L^2 \eta_L^2 K^2 G^{\alpha} \lambda^{2 - \alpha}].
\end{align*}

Let $\eta \eta_L K = \frac{2c}{\mu} \frac{\ln(T)}{mKT}$ ($c \geq 1$ is a constant and $T^{-c - \frac{2-2\alpha}{\alpha}} \leq (mK)^{\frac{2-2\alpha}{\alpha}}$), $\lambda = (mKT)^{\frac{1}{\alpha}}$, and $\eta_L \leq (mT)^{-\frac{1}{2}} K^{- \frac{3}{2}}$,
\begin{align*}
	f(\bar{x}_T) - f(\x_*)
	&\leq \frac{1}{T^{c}} + (mKT)^{\frac{2 - 2 \alpha}{\alpha}} \ln(T) = \tilde{\mathcal{O}}((mKT)^{\frac{2 - 2 \alpha}{\alpha}}).
\end{align*}
\end{proof}

\ClippingPerIterationBoundedNonconvex*

\begin{proof}
Due to the smoothness in Assumption \ref{assum_smooth}, taking expectation of $f(\x_{t+1})$ over the randomness at communication round $t$, we have:
\begin{align}
	&\mathbb{E} [f(\x_{t+1})] - f(\x_t) \leq \big< \nabla f(\x_t), \mathbb{E} [\x_{t+1} - \x_t] \big> + \frac{L}{2} \mathbb{E} [\| \x_{t+1} - \x_t \|^2] \nonumber\\
	&= - \eta \eta_L \big< \nabla f(\x_t), \mathbb{E} [\tilde{\Delta}_{t}] \big> + \frac{L}{2} \eta^2 \eta_L^2 \mathbb{E} [\| \tilde{\Delta}_{t} \|^2] \nonumber\\
	&= - \frac{\eta \eta_L K}{2} \| \nabla f(\x_t) \|^2 - \frac{\eta \eta_L}{2K} \| \mathbb{E}[\tilde{\Delta}_t] \|^2 + \frac{\eta \eta_L K}{2}  \| \nabla f(\x_t)- \frac{1}{K} \mathbb{E}[\tilde{\Delta}_t] \|^2 + \frac{L \eta^2 \eta_L^2}{2} \mathbb{E} [ \| \tilde{\Delta}_t \|^2 ] \nonumber\\
	&= - \frac{\eta \eta_L K}{2} \| \nabla f(\x_t) \|^2 + \left(- \frac{\eta \eta_L}{2K} + \frac{L \eta^2 \eta_L^2}{2} \right) \| \mathbb{E}[\tilde{\Delta}_t] \|^2 + \frac{\eta \eta_L K}{2}  \| \nabla f(\x_t)- \frac{1}{K} \mathbb{E}[\tilde{\Delta}_t] \|^2 + \frac{L \eta^2 \eta_L^2}{2} \mathbb{E} [ \| \tilde{\Delta}_t - \mathbb{E} [\tilde{\Delta}_t] \|^2 ] \nonumber\\
	&\leq - \frac{\eta \eta_L K}{2} \| \nabla f(\x_t) \|^2 + \frac{\eta \eta_L K}{2} \underbrace{\| \nabla f(\x_t)- \frac{1}{K} \mathbb{E}[\tilde{\Delta}_t] \|^2}_{A_1} + \frac{L \eta^2 \eta_L^2}{2} \underbrace{\mathbb{E} [ \| \tilde{\Delta}_t - \mathbb{E} [\tilde{\Delta}_t] \|^2 ]}_{A_2}, \label{ineq_smooth_cpi}
\end{align}
where the last inequality follows from $\left(- \frac{\eta \eta_L}{2K} + \frac{L \eta^2 \eta_L^2}{2} \right) \leq 0$ if $\eta \eta_L K L \leq 1$.

From Lemma~\ref{lem: variance_clipping_updates_cpi}, we have the bound of $A_1$ and $A_2$ in (\ref{ineq_smooth_cpi}).
By rearranging and telescoping, we have:
\begin{align*}
	\frac{1}{T} \sum_{t \in [T]} \mathbb{E} \| \nabla f(\x_t) \|^2 \leq \frac{2 \left( f(\x_1) - f(x_T) \right)}{\eta \eta_L K T} + \left( 2 G^{2 \alpha} \lambda^{-2(\alpha - 1)} + 2 L^2  \eta_L^2 K^2 G^\alpha \lambda^{2 - \alpha} \right) + \frac{L \eta \eta_L}{m} \left(G^\alpha \lambda^{2-\alpha} \right).
\end{align*}

Suppose $\eta \eta_L = m^{\frac{2 \alpha - 2}{3 \alpha - 2}} (KT)^{\frac{-\alpha}{3 \alpha - 2}}, \eta_L \leq (mKT)^{\frac{- \alpha}{6 \alpha - 4}}$, and $\lambda = (mKT)^{\frac{1}{3 \alpha - 2}}$,
\begin{align*}
	\min_{t \in [T]} \mathbb{E} \| \nabla f(\x_t) \|^2 \leq \mathcal{O}((mKT)^{\frac{2 - 2 \alpha}{3 \alpha - 2}}).
\end{align*}
\end{proof}

\allowdisplaybreaks

\subsection{Proof of \algpr in Gaussian Noise} \label{subsec:GaussianNoise}
In this subsection, we utilize the classic bounded variance and bounded gradient assumption.
\begin{assum}(Bounded Stochastic Gradient Variance) \label{assum_variance}
	There exists a constant $\sigma > 0$, such that  the variance of each local gradient estimator is bounded by
	$\mathbb{E} [\| \nabla f_i(\x, \xi) -  \nabla f_i(\x) ||^2] \leq \sigma^2$, $\forall i \in [m]$.
\end{assum}
\begin{assum}(Bounded Gradient ) \label{assum_bounded_gradient}
	There exists a constant $G \geq 0$, such that gradient is bounded by
	$\| \nabla f_i(\x) ||^2 \leq G^2$, $\forall i \in [m]$.
\end{assum}

\begin{lem} [Lemma F.5~\cite{gorbunov2020stochastic}] \label{lem: aux_variance}
	Suppose there exists a constant $\sigma$ such that the variance of the stochastic gradient of $F$ has bounded variance, i.e., $\mathbb{E} [\| \nabla F(\x, \xi) -  \nabla F(\x) \|^2] \leq \sigma^2$, and $\| \nabla F(\x) \|^2 \leq \frac{\lambda}{2}$, then we have the following inequalities for the clipping $\tilde{\nabla} F(\x_t) = \mathbb{E} [\tilde{\nabla} F(\x, \xi)] = \mathbb{E}[\min \{1, \frac{\lambda}{\| \nabla F(\x, \xi) \| } \} \nabla F(\x, \xi)]$:
	\begin{align*}
		&\| \mathbb{E} [\tilde{\nabla} F(\x, \xi)] - \nabla F(\x) \|^2 \leq  \frac{16 \sigma^4}{\lambda^2}, \\
		&\mathbb{E} \| \tilde{\nabla} F(\x, \xi) - \nabla F(\x) \|^2 \leq 18 \sigma^2, \\
		&\mathbb{E} \| \tilde{\nabla} F(\x, \xi) - \mathbb{E} [\tilde{\nabla} F(\x, \xi)] \|^2 \leq 18 \sigma^2.
	\end{align*}
\end{lem}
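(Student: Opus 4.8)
The plan is to reduce everything to a single fixed point $\x$ and the three vectors $g := \nabla F(\x,\xi)$, $\bar g := \nabla F(\x) = \mathbb{E}[g]$, and the clipped stochastic gradient $\tilde g := \min\{1,\lambda/\|g\|\}\,g$, and to control each of the three quantities by splitting on the clipping event. On $\{\|g\|\le\lambda\}$ one has $\tilde g = g$, so the error terms vanish; on $\{\|g\|>\lambda\}$ one has $\tilde g = (\lambda/\|g\|)g$, hence $\|\tilde g - g\| = \|g\|-\lambda$, and in particular $\|\tilde g - g\|\le\|g\|$ and $\|\tilde g - g\|^2\le\|g\|^2$ there. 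The one nontrivial structural observation is that, thanks to the gradient bound $\|\nabla F(\x)\|\le\lambda/2$, the size of $g$ is controlled by the noise precisely on the clipping event: if $\|g\|>\lambda$ then $\|\bar g\|\le\lambda/2<\tfrac12\|g\|$, so $\|g-\bar g\|\ge\|g\|-\|\bar g\|\ge\tfrac12\|g\|$, i.e. $\|g\|\le 2\|g-\bar g\|$ on $\{\|g\|>\lambda\}$.

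For the bias bound (first inequality) I would use Jensen, $\|\mathbb{E}[\tilde g]-\bar g\| = \|\mathbb{E}[\tilde g - g]\|\le\mathbb{E}\|\tilde g - g\|\le\mathbb{E}[\|g\|\,\mathbf{1}\{\|g\|>\lambda\}]$, then Cauchy--Schwarz together with the estimate above, $\mathbb{E}[\|g\|\,\mathbf{1}\{\|g\|>\lambda\}]\le 2\sqrt{\mathbb{E}\|g-\bar g\|^2}\,\sqrt{\mathbb{P}(\|g\|>\lambda)}\le 2\sigma\sqrt{\mathbb{P}(\|g\|>\lambda)}$, and finally Chebyshev, using $\{\|g\|>\lambda\}\subseteq\{\|g-\bar g\|>\lambda/2\}$, to get $\mathbb{P}(\|g\|>\lambda)\le 4\sigma^2/\lambda^2$. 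Chaining gives $\|\mathbb{E}[\tilde g]-\bar g\|\le 4\sigma^2/\lambda$, and squaring yields $16\sigma^4/\lambda^2$.

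For the second-moment bound (second inequality) I would split $\tilde g - \bar g = (\tilde g - g)+(g-\bar g)$ and apply $\|a+b\|^2\le 2\|a\|^2+2\|b\|^2$, reducing to $\mathbb{E}\|\tilde g - g\|^2$ and $\mathbb{E}\|g-\bar g\|^2\le\sigma^2$; then $\mathbb{E}\|\tilde g - g\|^2\le\mathbb{E}[\|g\|^2\,\mathbf{1}\{\|g\|>\lambda\}]\le 4\,\mathbb{E}[\|g-\bar g\|^2\,\mathbf{1}\{\|g\|>\lambda\}]\le 4\sigma^2$, again via the ``$\|g\|\le 2\|g-\bar g\|$ on the clipping event'' estimate. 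This yields $\mathbb{E}\|\tilde g-\bar g\|^2\le 10\sigma^2$, comfortably inside the stated $18\sigma^2$, so the displayed constant is not tight and no sharper bookkeeping is required. The third inequality is then immediate, since $\mathbb{E}[\tilde g]$ minimizes $a\mapsto\mathbb{E}\|\tilde g - a\|^2$: $\mathbb{E}\|\tilde g-\mathbb{E}[\tilde g]\|^2 = \mathbb{E}\|\tilde g-\bar g\|^2 - \|\mathbb{E}[\tilde g]-\bar g\|^2\le\mathbb{E}\|\tilde g-\bar g\|^2\le 18\sigma^2$.

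The main obstacle — and essentially the only step that is not routine — is bounding the truncated moments $\mathbb{E}[\|g\|^p\,\mathbf{1}\{\|g\|>\lambda\}]$ for $p=1,2$, since these are precisely the quantities through which the clipping bias and the extra variance enter. The key point is that the gradient bound $\|\nabla F(\x)\|\le\lambda/2$ is exactly what makes $\|g\|$ comparable to the noise magnitude $\|g-\bar g\|$ on the event where clipping is active, which converts these truncated moments into a tail probability and a plain second moment of the noise, both controlled by $\sigma^2$ through Chebyshev and Cauchy--Schwarz. (If the hypothesis is meant literally as $\|\nabla F(\x)\|^2\le\lambda/2$ rather than $\|\nabla F(\x)\|\le\lambda/2$, the same argument goes through after rescaling $\lambda$ and absorbing the change into the constants; when this lemma is invoked in the main text at a random iterate, all expectations above are read conditionally on that iterate.)
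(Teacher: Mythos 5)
Your proof is correct, and it is worth noting that the paper itself never proves this lemma: it simply imports Lemma F.5 of Gorbunov et al.\ and remarks that the original proof applies verbatim, so your self-contained argument is a genuine addition rather than a retracing of the paper. Your route is the standard one underlying the cited result — split on the clipping event $\{\|\nabla F(\x,\xi)\|>\lambda\}$, use the hypothesis $\|\nabla F(\x)\|\le \lambda/2$ to get $\|\nabla F(\x,\xi)\|\le 2\|\nabla F(\x,\xi)-\nabla F(\x)\|$ on that event, and then control the truncated first and second moments via Cauchy--Schwarz and the Chebyshev-type bound $\mathbb{P}(\|\nabla F(\x,\xi)\|>\lambda)\le 4\sigma^2/\lambda^2$ — and each step checks out: the bias chain gives $4\sigma^2/\lambda$, hence $16\sigma^4/\lambda^2$ after squaring, matching the stated constant, while your bookkeeping for the second moment yields $10\sigma^2$, which is in fact sharper than the stated $18\sigma^2$ (the looser constant in the original comes from a different event decomposition), and the third bound follows correctly from the bias--variance identity. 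Two small caveats you already half-address: the argument needs unbiasedness of the stochastic gradient ($\mathbb{E}[\nabla F(\x,\xi)]=\nabla F(\x)$), which is not written in the lemma statement but is supplied by Assumption~2 of the paper; and the hypothesis ``$\|\nabla F(\x)\|^2\le\lambda/2$'' is indeed a typo for $\|\nabla F(\x)\|\le\lambda/2$, as your parenthetical correctly assumes — state both explicitly if you write this up.
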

We remark that for any stochastic estimator satisfies the above conditions, the above inequalities hold. The proof is the exactly same as that in original proof~\cite{gorbunov2020stochastic}.

\begin{lem} [Bounded Variance of Clipping Stochastic Local Updates in \algprns] \label{lem: variance_clipping_updates_gaussian}
	Assume $f_i$ satisfies the bounded variance assumption, then we have:
	\begin{align*}
		\mathbb{E} [\| \Delta_{t, i} - \mathbb{E} [\Delta_{t, i}] \|^2] &\leq K \sigma^2.
	\end{align*}

	In addition, assume there exists a constant $G$ such that gradient is bounded $\| \nabla f_i(\x) \|^2 \leq G^2 $, if we set clipping parameter as $\lambda^2 \geq 2 K^2 G^2$, i.e., $\| \nabla f_i(\x) \| \leq \frac{\lambda}{2}$, then we have:
	\begin{align*}
		\left\| \mathbb{E}[\Delta_{t,i}]- \mathbb{E}[\tilde{\Delta}_{t,i}] \right\|^2 &\leq \frac{16 K \sigma^4}{\lambda^2}, \\
		\mathbb{E} \| \tilde{\Delta}_{t} - \mathbb{E} [\tilde{\Delta}_{t}] \|^2
		&\leq \frac{18K}{m} \sigma^4.
	\end{align*}.
\end{lem}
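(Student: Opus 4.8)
The plan is to prove the three inequalities in order, with the last two obtained by regarding the aggregated pseudo-gradient $\Delta_{t,i}=\sum_{k\in[K]}\nabla f_i(\x_{t,i}^k,\xi_{t,i}^k)$ as an ``effective'' stochastic oracle: one whose mean is $\sum_{k}\nabla f_i(\x_{t,i}^k)$ and whose variance (about that mean) is at most $K\sigma^2$, so that the generic clipping estimates of Lemma~\ref{lem: aux_variance} apply verbatim after rescaling $\sigma$ to $\sqrt{K}\,\sigma$.

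First I would establish $\mathbb{E}\|\Delta_{t,i}-\mathbb{E}[\Delta_{t,i}]\|^2\le K\sigma^2$. In \algprns{} the inner loop uses \emph{unclipped} gradients, so each local iterate $\x_{t,i}^k$ is measurable with respect to $\mathcal{F}_{k-1}:=\sigma(\xi_{t,i}^1,\dots,\xi_{t,i}^{k-1})$. Hence the one-step noises $e_k:=\nabla f_i(\x_{t,i}^k,\xi_{t,i}^k)-\nabla f_i(\x_{t,i}^k)$ satisfy $\mathbb{E}[e_k\mid\mathcal{F}_{k-1}]=0$ by Assumption~\ref{assum_unbias} and $\mathbb{E}[\|e_k\|^2\mid\mathcal{F}_{k-1}]\le\sigma^2$ by Assumption~\ref{assum_variance}, so $\{e_k\}$ is a martingale difference sequence. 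The cross terms vanish under iterated expectation, giving $\mathbb{E}\|\sum_k e_k\|^2=\sum_k\mathbb{E}\|e_k\|^2\le K\sigma^2$; since $\sum_k e_k=\Delta_{t,i}-\sum_k\nabla f_i(\x_{t,i}^k)$, this is exactly the asserted bound on the variance of $\Delta_{t,i}$ about its (trajectory-conditioned) mean.

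Next, for the clipping bias and the clipped variance I would invoke Lemma~\ref{lem: aux_variance} with ``gradient'' $\mathbb{E}[\Delta_{t,i}]$, ``stochastic gradient'' $\Delta_{t,i}$, and variance parameter $K\sigma^2$ (from the previous step). Its hypothesis requires the mean to be small relative to the clipping radius: by Assumption~\ref{assum_bounded_gradient} and the triangle inequality, $\|\mathbb{E}[\Delta_{t,i}]\|\le\sum_k\|\nabla f_i(\x_{t,i}^k)\|\le KG$, and the standing condition $\lambda^2\ge 2K^2G^2$ makes this at most $\lambda/2$ (the stray $\sqrt 2$ absorbed into constants). Lemma~\ref{lem: aux_variance} then yields the bias bound $\|\mathbb{E}[\Delta_{t,i}]-\mathbb{E}[\tilde\Delta_{t,i}]\|^2=\mathcal{O}(K^2\sigma^4/\lambda^2)$ and the per-client clipped variance $\mathbb{E}\|\tilde\Delta_{t,i}-\mathbb{E}[\tilde\Delta_{t,i}]\|^2\le 18K\sigma^2$; I would re-check the exact powers of $K$ and $\sigma$ against the stated $16K\sigma^4/\lambda^2$ and $18(K/m)\sigma^4$. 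Finally, since the $m$ clients sample independently, the centered clipped updates $\tilde\Delta_{t,i}-\mathbb{E}[\tilde\Delta_{t,i}]$ are independent and zero-mean across $i$, so $\mathbb{E}\|\tilde\Delta_t-\mathbb{E}[\tilde\Delta_t]\|^2=\frac{1}{m^2}\sum_{i\in[m]}\mathbb{E}\|\tilde\Delta_{t,i}-\mathbb{E}[\tilde\Delta_{t,i}]\|^2\le\frac{18K\sigma^2}{m}$, the last inequality.

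The main obstacle is conceptual rather than computational: one must justify treating $\Delta_{t,i}$ — a sum of stochastic gradients along a \emph{random} trajectory, and hence not the stochastic gradient of any fixed function — as a bounded-mean, bounded-variance estimator so that Lemma~\ref{lem: aux_variance} is applicable at all. This rests entirely on the martingale/conditional-independence structure of the one-step noises exploited in the first step, which in turn is valid only because \algpr clips the \emph{final} sum rather than the per-step gradients (unlike \algpins, where the inner iterates depend on the clipped quantities and a different decomposition is required). A minor bookkeeping point is verifying that the clipping-radius regime $\lambda\gtrsim KG$ needed here is compatible with the choices of $\lambda$ in the \algpr convergence theorems.
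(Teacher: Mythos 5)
Your proof takes essentially the same route as the paper's: a martingale-difference argument for $\mathbb{E}\|\Delta_{t,i}-\mathbb{E}[\Delta_{t,i}]\|^2\le K\sigma^2$, then an application of Lemma~\ref{lem: aux_variance} to the whole round update $\Delta_{t,i}$ viewed as a single stochastic estimator with variance parameter $K\sigma^2$ and mean norm at most $KG\le\lambda/2$, and finally the $1/m^2$ sum over the $m$ clients' independent (martingale-difference) centered clipped updates to get the $1/m$ factor. The mismatches you flagged are in the paper rather than in your derivation: substituting $\sigma^2\mapsto K\sigma^2$ into Lemma~\ref{lem: aux_variance} indeed yields $16K^2\sigma^4/\lambda^2$ for the bias and $18K\sigma^2/m$ for the averaged clipped variance, so the lemma's printed $16K\sigma^4/\lambda^2$ and $18K\sigma^4/m$ (the latter is even invoked as $18K\sigma^2/m$ in the proof of Theorem~\ref{thm:ClippingPerRoundBoundedGaussian}) are best read as typos, as is the $\sqrt{2}$-versus-$2$ slack between $\lambda^2\ge 2K^2G^2$ and the condition $\|\mathbb{E}[\Delta_{t,i}]\|\le\lambda/2$, and your reading of the first bound as being about the trajectory-conditioned center is exactly what the paper's own proof establishes.
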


\begin{proof}
	\begin{align*}
		\mathbb{E} [\| \Delta_{t, i} - \mathbb{E} [\Delta_{t, i}] \|^2] &= \mathbb{E} [\| \nabla f(\x_{t, i}^j, \xi_{t, i}^j) - \mathbb{E} [\nabla f(\x_{t, i}^j)] \|^2] \\
		&\leq K \sigma^2,
	\end{align*}
	where $\{\nabla f(\x_{t, i}^j, \xi_{t, i}^j) - \mathbb{E} [\nabla f(\x_{t, i}^j)\}$ forms martingale difference sequence (Lemma 4 in ~\cite{Karimireddy2020SCAFFOLD}).

	Then by applying Lemma~\ref{lem: aux_variance}, we have the bound of $\left\| \mathbb{E}[\Delta_{t,i}]- \mathbb{E}[\tilde{\Delta}_{t,i}] \right\|^2$.

	\begin{align*}
		\mathbb{E} \| \tilde{\Delta}_{t} - \mathbb{E} [\tilde{\Delta}_{t}] \|^2 &= \mathbb{E} \left\| \frac{1}{m} \sum_{i \in [m]} \tilde{\Delta}_{t, i} - \frac{1}{m} \sum_{i \in [m]} \mathbb{E} [\tilde{\Delta}_{t, i}] \right\|^2 \\
		&\leq \frac{18K}{m} \sigma^4.
	\end{align*}
	where the last inequality follows from the fact that $\mathbb{E} [\| \Delta_{t, i} - \mathbb{E} [\Delta_{t, i}] \|^2] \leq K \sigma^2$, $\{\Delta_{t, i} - \mathbb{E} [\Delta_{t, i}]\}$ forms martingale difference sequence and Lemma~\ref{lem: aux_variance}.
\end{proof}

\begin{restatable} {theorem}{ClippingPerRoundBoundedGaussian}
	\label{thm:ClippingPerRoundBoundedGaussian}
	Suppose $f$ is non-convex function, under Assumptions~\ref{assum_smooth}, ~\ref{assum_unbias}, ~\ref{assum_variance}, and ~\ref{assum_bounded_gradient}, if $\eta \eta_L K L \leq 1$,
	then the sequence of outputs $\{ \x_k \}$ generated by Algorithm \algpr satisfies:
	\begin{align*}
		\frac{1}{T} \sum_{t \in [T]} \mathbb{E} \| \nabla f(\x_t) \|^2 &\leq \frac{2 \left( f(\x_0) - f(\x_T) \right)}{\eta \eta_L K T} + \frac{1}{T} \sum_{t \in [T]} \left(2 L^2 \eta_L^2 K^2 (\sigma^2 + G^2) + \frac{32 \sigma^4}{K^2 \lambda_t^2} \right) + \left( \frac{18 L \eta \eta_L}{m} \sigma^2 \right).
	\end{align*}
	Choosing learning rates and clipping parameter as $\eta \eta_L = \frac{m^{1/2}}{(KT)^{1/2}}, \eta_L \leq \frac{1}{(mT)^{1/2} K^{5/2}}$, and $\lambda_t \geq (mT)^{1/4} K^{-3/4}$,
	\begin{align*}
		\min_{t \in [T]} \mathbb{E} \| \nabla f(\x_t) \|^2 \leq \mathcal{O}((mKT)^{-\frac{1}{2}}).
	\end{align*}
	\end{restatable}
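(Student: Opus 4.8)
The argument mirrors the proof of Theorem~\ref{thm:ClippingPerRoundBoundedNonconvex}, replacing the bounded-$\alpha$-moment control of the clipped pseudo-gradient by its bounded-variance counterpart (Lemma~\ref{lem: variance_clipping_updates_gaussian}). I would begin from $L$-smoothness of $f$ applied to the server step $\x_{t+1} = \x_t - \eta \eta_L \tilde{\Delta}_t$, take the conditional expectation over the round-$t$ randomness, and split the resulting inner product using $-\langle \nabla f(\x_t), \mathbb{E}[\tilde{\Delta}_t] \rangle = -\frac{K}{2}\|\nabla f(\x_t)\|^2 - \frac{1}{2K}\|\mathbb{E}[\tilde{\Delta}_t]\|^2 + \frac{K}{2}\|\nabla f(\x_t) - \frac{1}{K}\mathbb{E}[\tilde{\Delta}_t]\|^2$, together with $\mathbb{E}\|\tilde{\Delta}_t\|^2 = \|\mathbb{E}[\tilde{\Delta}_t]\|^2 + \mathbb{E}\|\tilde{\Delta}_t - \mathbb{E}[\tilde{\Delta}_t]\|^2$. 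Exactly as in~(\ref{ineq_smooth_cpr}), the coefficient $-\frac{\eta \eta_L}{2K} + \frac{L \eta^2 \eta_L^2}{2}$ of $\|\mathbb{E}[\tilde{\Delta}_t]\|^2$ is nonpositive whenever $\eta \eta_L K L \le 1$ and is discarded, leaving a descent inequality governed by the bias term $A_1 := \|\nabla f(\x_t) - \frac{1}{K}\mathbb{E}[\tilde{\Delta}_t]\|^2$ and the variance term $A_2 := \mathbb{E}\|\tilde{\Delta}_t - \mathbb{E}[\tilde{\Delta}_t]\|^2$.

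The crux is the estimate of $A_1$ and $A_2$ supplied by Lemma~\ref{lem: variance_clipping_updates_gaussian}, namely $A_1 \le 2 L^2 \eta_L^2 K^2 (\sigma^2 + G^2) + \frac{32 \sigma^4}{K^2 \lambda_t^2}$ and $A_2 \le \frac{18 K}{m}\sigma^2$. For $A_1$ one writes $K \nabla f(\x_t) - \mathbb{E}[\tilde{\Delta}_t] = (K \nabla f(\x_t) - \mathbb{E}[\Delta_t]) + (\mathbb{E}[\Delta_t] - \mathbb{E}[\tilde{\Delta}_t])$: the first (client-drift) piece is bounded via $L$-smoothness and $\mathbb{E}\|\x_{t,i}^k - \x_t\|^2 \le \eta_L^2 K^2 (\sigma^2 + G^2)$, which uses the martingale structure of the local stochastic gradients (Lemma~4 of~\cite{Karimireddy2020SCAFFOLD}) and $\mathbb{E}\|\nabla f_i(\x,\xi)\|^2 \le \sigma^2 + G^2$ from Assumptions~\ref{assum_variance}--\ref{assum_bounded_gradient}; the second (clipping-bias) piece is controlled by applying Lemma~\ref{lem: aux_variance} (Lemma~F.5 of~\cite{gorbunov2020stochastic}) to the pseudo-gradient $\Delta_{t,i}$, whose conditional variance is at most $K \sigma^2$ and whose mean has norm at most $K G \le \lambda_t / 2$ once $\lambda_t^2 \ge 2 K^2 G^2$. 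The bound on $A_2$ follows from the same application of Lemma~\ref{lem: aux_variance} to each $\Delta_{t,i}$ combined with independence across the $m$ clients.

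With the per-round inequality in hand, I would isolate $\frac{\eta \eta_L K}{2}\|\nabla f(\x_t)\|^2$, divide by $\frac{\eta \eta_L K}{2}$, take total expectation, sum over $t = 1, \dots, T$, and telescope the $f(\x_t) - \mathbb{E}[f(\x_{t+1})]$ terms to obtain the first displayed inequality. Plugging in the stated choices then finishes it: $\eta \eta_L = m^{1/2}(KT)^{-1/2}$ makes both $\frac{2(f(\x_0) - f(\x_T))}{\eta \eta_L K T}$ and $\frac{18 L \eta \eta_L \sigma^2}{m}$ of order $(mKT)^{-1/2}$; $\eta_L \le (mT)^{-1/2} K^{-5/2}$ makes $2 L^2 \eta_L^2 K^2 (\sigma^2 + G^2) = \mathcal{O}((mKT)^{-1})$; and $\lambda_t \ge (mT)^{1/4} K^{-3/4}$ makes $\frac{32 \sigma^4}{K^2 \lambda_t^2} = \mathcal{O}((mKT)^{-1/2})$. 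Since $\min_{t}\mathbb{E}\|\nabla f(\x_t)\|^2$ is at most the $t$-average, this yields the claimed $\mathcal{O}((mKT)^{-1/2})$ rate, matching the standard (non-fat-tailed) FedAvg nonconvex rate.

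The main obstacle is careful bookkeeping of the powers of $K$ when transplanting the single-sample clipping estimates of Lemma~\ref{lem: aux_variance} to the aggregated pseudo-gradient $\Delta_{t,i} = \sum_{k \in [K]} \nabla f_i(\x_{t,i}^k, \xi_{t,i}^k)$, which has conditional variance $\Theta(K \sigma^2)$ and mean norm $\Theta(K G)$ rather than $\Theta(\sigma^2)$ and $\Theta(G)$; getting these factors right is what produces the clipping-bias term $\frac{32\sigma^4}{K^2\lambda_t^2}$ and pins down the admissible ranges of $\eta_L$ and $\lambda_t$. A minor point is that Lemma~\ref{lem: aux_variance} also requires $\lambda_t^2 \ge 2 K^2 G^2$, so one takes $\lambda_t$ to be at least the maximum of $(mT)^{1/4} K^{-3/4}$ and $\sqrt{2}\, K G$; this is harmless because the $\lambda_t$-dependent error term is decreasing in $\lambda_t$, so enlarging $\lambda_t$ only improves the bound --- indeed under bounded variance clipping is unnecessary, and the theorem merely certifies that it does no harm.
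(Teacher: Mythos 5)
Your proposal is correct and follows essentially the same route as the paper's proof: the same smoothness-based descent inequality with the $\|\mathbb{E}[\tilde{\Delta}_t]\|^2$ term discarded under $\eta\eta_L K L \le 1$, the same split of the bias $A_1$ into a client-drift piece (bounded via $L$-smoothness and $\mathbb{E}\|\nabla f_i(\x,\xi)\|^2 \le \sigma^2+G^2$) and a clipping-bias piece controlled by Lemma~\ref{lem: aux_variance} applied to $\Delta_{t,i}$, the same variance bound $A_2 \le \frac{18K}{m}\sigma^2$, and the same telescoping and parameter choices. Your explicit remark that $\lambda_t$ must also satisfy $\lambda_t \ge \sqrt{2}\,K G$ for Lemma~\ref{lem: aux_variance} to apply is a sensible clarification of a condition the paper leaves implicit in Lemma~\ref{lem: variance_clipping_updates_gaussian}, not a departure from its argument.
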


	\begin{proof}
		
	
	Due to the smoothness in Assumption \ref{assum_smooth}, taking expectation of $f(\x_{t+1})$ over the randomness at communication round $t$, we have the same inequality:

	\begin{align}
		&\mathbb{E} [f(\x_{t+1})] - f(\x_t)
		\leq - \frac{\eta \eta_L K}{2} \| \nabla f(\x_t) \|^2 + \frac{\eta \eta_L K}{2} \underbrace{\| \nabla f(\x_t)- \frac{1}{K} \mathbb{E}[\tilde{\Delta}_t] \|^2}_{A_1} + \frac{L \eta^2 \eta_L^2}{2} \underbrace{\mathbb{E} [ \| \tilde{\Delta}_t - \mathbb{E} [\tilde{\Delta}_t] \|^2 ]}_{A_2}, \label{ineq_smooth_cprb}
	\end{align}
	where it requires $\eta \eta_L K L \leq 1$.
	
	Note that the term $A_1$ in (\ref{ineq_smooth_cprb}) can be bounded as follows:
	\begin{align*}
		A_1 &= \| \nabla f(\x_t)- \frac{1}{K} \mathbb{E}[\tilde{\Delta}_t] \|^2 \\
		&= 2 \left\| \nabla f(\x_t)- \frac{1}{K} \mathbb{E}[\Delta_t] \right\|^2 + \frac{2}{K^2} \left\| \mathbb{E}[\Delta_t]- \mathbb{E}[\tilde{\Delta}_t] \right\|^2 \\
		&\leq \frac{2}{m K} \sum_{i \in [m]} \sum_{k \in [K]} \left\| \nabla f_i(\x_t)- \nabla f_i (\x_{t, i}^k) \right\|^2 + \frac{2}{m K^2} \sum_{i \in [m]} \left\| \mathbb{E}[\Delta_{t,i}]- \mathbb{E}[\tilde{\Delta}_{t,i}] \right\|^2 \\
		&\leq \frac{2 L^2 \eta_L^2}{m K} \sum_{i \in [m]} \sum_{k \in [K]} \mathbb{E}\left\| \sum_{j \in [k]} \nabla f_i (\x_{t, i}^j, \xi_{t, i}^j) \right\|^2 + \frac{2}{m K^2} \sum_{i \in [m]} \left\| \mathbb{E}[\Delta_{t,i}]- \mathbb{E}[\tilde{\Delta}_{t,i}] \right\|^2 \\
		&\leq 2 L^2 \eta_L^2 K^2 \left( \mathbb{E}\left\| \nabla f_i (\x_{t, i}^k, \xi_{t, i}^k) - \nabla f_i (\x_{t, i}^k) \right\|^2 + \left\| \nabla f_i (\x_{t, i}^k) \right\|^2\right) + \frac{32 \sigma^4}{K^2 \lambda_t^2}  \\
		&\leq 2 L^2 \eta_L^2 K^2 (\sigma^2 + G^2) + \frac{32 \sigma^4}{K^2 \lambda_t^2}
	\end{align*}
	where the second inequality is due to smoothness assumption~\ref{assum_smooth},
	the third inequality is due to Lemma~\ref{lem: variance_clipping_updates_gaussian}, and the last inequality follows from bounded variance assumption~\ref{assum_variance} and bounded gradient assumption~\ref{assum_bounded_gradient}.
	
	From Lemma~\ref{lem: variance_clipping_updates_gaussian}, the term $A_2$ in (\ref{ineq_smooth_cprb}) can be bounded as follows:
	\begin{align*}
		A_2 &\leq \frac{18 K \sigma^2}{m}.
	\end{align*}
	
	Putting pieces together, we can have the one communication round descent in expectation:
	\begin{align*}
		&\mathbb{E} [f(\x_{t+1})] - f(\x_t) \leq  - \frac{\eta \eta_L K}{2} \| \nabla f(\x_t) \|^2 + \frac{\eta \eta_L K}{2} \underbrace{\| \nabla f(\x_t)- \frac{1}{K} \mathbb{E}[\tilde{\Delta}_t] \|^2}_{A_1} + \frac{L \eta^2 \eta_L^2}{2} \underbrace{\mathbb{E} [ \| \tilde{\Delta}_t - \mathbb{E} [\tilde{\Delta}_t] \|^2 ]}_{A_2}\\
		&\leq - \frac{\eta \eta_L K}{2} \| \nabla f(\x_t) \|^2 + \frac{\eta \eta_L K}{2} \left(2 L^2 \eta_L^2 K^2 (\sigma^2 + G^2) + \frac{32 \sigma^4}{K^2 \lambda_t^2} \right) + \frac{18 L K \eta^2 \eta_L^2}{2m} \sigma^2.
	\end{align*}
	
	Rearranging and telescoping, we have the final convergence result:
	\begin{align*}
		\frac{1}{T} \sum_{t \in [T]} \mathbb{E} \| \nabla f(\x_t) \|^2 &\leq \frac{2 \left( f(\x_0) - f(\x_T) \right)}{\eta \eta_L K T} + \frac{1}{T} \sum_{t \in [T]} \left(2 L^2 \eta_L^2 K^2 (\sigma^2 + G^2) + \frac{32 \sigma^2}{K^2 \lambda_t^2} \right) + \left( \frac{18 L \eta \eta_L}{m} \sigma^2 \right).
	\end{align*}
	
	Suppose $\eta \eta_L = \frac{m^{1/2}}{(KT)^{1/2}}, \eta_L \leq \frac{1}{(mT)^{1/2} K^{5/2}}$, and $\lambda_t \geq (mT)^{1/4} K^{-3/4}$,
	\begin{align*}
		\min_{t \in [T]} \mathbb{E} \| \nabla f(\x_t) \|^2 \leq \mathcal{O}((mKT)^{-\frac{1}{2}}).
	\end{align*}
	
	\end{proof}

	\begin{restatable} {theorem}{ClippingPerRoundBoundedConvexGaussian}
		\label{thm:ClippingPerRoundBoundedConvexGaussian}
		Suppose f is $\mu$-strongly convex function, under Assumptions~\ref{assum_smooth}--\ref{assum_moment}, if $\eta \eta_L K \geq \frac{2}{\mu T}$,
		then the outputs $\bar{\x}_T $ in Algorithm 2 (\algprns) by $\bar{\x}_T = \x_t$ with probability $\frac{w_t}{\sum_{j \in [T]} w_j}$ where $w_t = (1 - \frac{1}{2} \mu \eta \eta_L K)^{1 - t} $ satisfies:
		\begin{align*}
			f(\bar{x}_T) - f(\x_*)
			&\leq \frac{\mu}{2} \exp{\left( - \frac{1}{2} \mu \eta \eta_L K T \right)} + \frac{\eta \eta_L K}{2} G^2 + \frac{4}{\mu} [2 L^2 \eta_L^2 K^2 (G^2 + \sigma^2) + \frac{32 \sigma^4}{\lambda^2}].
		\end{align*}
		Suppose $\eta \eta_L K = \frac{2c}{\mu} \frac{\ln(T)}{mKT}$ ($c > 0$ is a constant and $T^{-c + 1} \leq (mK)^{-1} $), $\lambda \geq (mKT)^{\frac{1}{2}}$, and $\eta_L \leq (mT)^{-\frac{1}{2}} K^{-\frac{3}{2}}$,
		\begin{align*}
			f(\bar{x}_T) - f(\x_*)
			&= \tilde{\mathcal{O}}((mKT)^{-1}).
		\end{align*}
	\end{restatable}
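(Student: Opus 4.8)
\textbf{Proof proposal for Theorem~\ref{thm:ClippingPerRoundBoundedConvexGaussian}.}
The plan is to run the same probabilistic-iterate-averaging argument used for the fat-tailed strongly convex case (Theorem~\ref{thm:ClippingPerRoundBoundedConvex}), but feeding in the finite-variance clipping bounds of Lemma~\ref{lem: variance_clipping_updates_gaussian} (which itself rests on Lemma~\ref{lem: aux_variance}) in place of the $\alpha$-moment bounds of Lemma~\ref{lem: variance_clipping_updates_cpr}. First I would write the server step as $\x_{t+1} = \x_t - \eta\eta_L\tilde{\Delta}_t$, expand $\mathbb{E}\|\x_{t+1}-\x_*\|^2$, insert $\pm K\nabla f(\x_t)$ into the inner-product term, apply $\mu$-strong convexity to $-\langle\x_t-\x_*,\nabla f(\x_t)\rangle$ and Young's inequality to the residual $\langle\x_t-\x_*,\tfrac1K\mathbb{E}[\tilde{\Delta}_t]-\nabla f(\x_t)\rangle$. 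This gives, verbatim as in the fat-tailed proof, the one-step bound
\begin{align*}
	f(\x_t) - f(\x_*) &\le \frac{1}{2\eta\eta_L K}\Bigl[-\mathbb{E}\|\x_{t+1}-\x_*\|^2 + \bigl(1-\tfrac12\mu\eta\eta_L K\bigr)\|\x_t-\x_*\|^2\Bigr] \\
	&\quad + \frac{\eta\eta_L}{2K}\,\mathbb{E}\|\tilde{\Delta}_t\|^2 + \frac{4}{\mu}\,\bigl\|\tfrac1K\mathbb{E}[\tilde{\Delta}_t] - \nabla f(\x_t)\bigr\|^2 .
\end{align*}

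Next I would bound the two noise terms. Since clipping is non-expansive, $\|\tilde{\Delta}_{t,i}\|\le\|\Delta_{t,i}\|$, so by Jensen and the martingale-difference decomposition under Assumptions~\ref{assum_variance}--\ref{assum_bounded_gradient} one gets $\mathbb{E}\|\tilde{\Delta}_t\|^2 \le K^2(G^2+\sigma^2)$; the crucial point, and the source of the improvement over the fat-tailed case, is that this bound does \emph{not} grow with the clipping radius $\lambda$, unlike the $K^2 G^\alpha\lambda^{2-\alpha}$ of Lemma~\ref{lem: variance_clipping_updates_cpr}. For the bias term I would split $\|\tfrac1K\mathbb{E}[\tilde{\Delta}_t]-\nabla f(\x_t)\|^2 \le 2\|\nabla f(\x_t)-\tfrac1K\mathbb{E}[\Delta_t]\|^2 + \tfrac{2}{K^2}\|\mathbb{E}[\Delta_t]-\mathbb{E}[\tilde{\Delta}_t]\|^2$: the first piece is the usual client-drift term, handled by $L$-smoothness and the local-update recursion to give $2L^2\eta_L^2K^2(G^2+\sigma^2)$; the second is the clipping bias, handled by Lemma~\ref{lem: aux_variance}/Lemma~\ref{lem: variance_clipping_updates_gaussian} to give $O(\sigma^4/\lambda^2)$, provided the threshold obeys $\lambda \ge 2KG$ so that $\|\mathbb{E}[\Delta_{t,i}]\| = \|\sum_k\nabla f_i(\x_{t,i}^k)\| \le KG \le \lambda/2$. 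This last inequality is exactly where Assumption~\ref{assum_bounded_gradient} is used: it makes the clipping ``inactive in expectation,'' so the per-round aggregate behaves like an unclipped estimator of variance $O(K\sigma^2)$ up to an $O(\sigma^4/\lambda^2)$ bias.

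With these substitutions the one-step inequality matches the stated per-iterate bound, and the rest is the weighted telescoping exactly as in Theorem~\ref{thm:ClippingPerRoundBoundedConvex}: take $w_t = (1-\tfrac12\mu\eta\eta_L K)^{1-t}$, multiply by $w_t$, sum over $t\in[T]$, use $w_{t-1} = (1-\tfrac12\mu\eta\eta_L K)w_t$ so the distance terms telescope to $w_0\|\x_1-\x_*\|^2$, divide by $\sum_{t\in[T]}w_t$, and lower-bound $2\eta\eta_L K\sum_{t\in[T]} w_t \ge \tfrac2\mu(1-\tfrac12\mu\eta\eta_L K)^{-T}$ via the geometric-series identity together with $\eta\eta_L K\ge\tfrac2{\mu T}$ and $(1-\tfrac12\mu\eta\eta_L K)^T\le\exp(-\tfrac12\mu\eta\eta_L KT)$. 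The left side is $\mathbb{E}[f(\bar{\x}_T)] - f(\x_*)$ by the definition of the randomized output, and the right side is the claimed three-term bound (with $\|\x_1-\x_*\|^2$ absorbed into the leading constant). Finally, plugging in $\eta\eta_L K = \tfrac{2c}{\mu}\tfrac{\ln T}{mKT}$, $\lambda\ge(mKT)^{1/2}$, and $\eta_L\le(mT)^{-1/2}K^{-3/2}$ drives the exponential term to $O(T^{-c})$, the $\tfrac{\eta\eta_L K}{2}G^2$ term to $\tilde{\mathcal O}((mKT)^{-1})$, the drift term $O(\eta_L^2K^2)$ to $O((mKT)^{-1})$, and the bias term $O(\sigma^4/\lambda^2)$ to $O((mKT)^{-1})$, yielding $\tilde{\mathcal O}((mKT)^{-1})$.

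I expect the only genuinely delicate point to be the clipping-bias control: verifying that the hypotheses actually force $\|\mathbb{E}[\Delta_{t,i}]\|\le\lambda/2$ so that Lemma~\ref{lem: aux_variance} applies to the per-round aggregate (not merely to a single stochastic gradient), and tracking that the resulting bias scales as $\sigma^4/\lambda^2$ rather than $\sigma^2$. Everything else is the standard strongly-convex FedAvg descent plus weighted averaging. The conceptual takeaway worth flagging is that, because the noise here has finite variance, $\lambda$ may be chosen polynomially large in $mKT$ with no penalty on $\mathbb{E}\|\tilde{\Delta}_t\|^2$, so the clipping bias is pushed below the target order ``for free,'' which is precisely why the full $\tilde{\mathcal O}((mKT)^{-1})$ rate of FedAvg is recovered in this regime.
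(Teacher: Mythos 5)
Your proposal follows essentially the same route as the paper's proof: the identical strongly-convex one-step bound with Young's inequality, the same drift-plus-clipping-bias decomposition with Lemma~\ref{lem: variance_clipping_updates_gaussian} (via Lemma~\ref{lem: aux_variance}, requiring $\lambda \gtrsim KG$ so the clipping bias is $O(\sigma^4/\lambda^2)$), and the same weighted telescoping with $w_t = (1-\tfrac12\mu\eta\eta_L K)^{1-t}$ followed by the stated parameter choices. The only deviations are harmless constant-level ones (e.g., bounding $\mathbb{E}\|\tilde{\Delta}_t\|^2$ by $K^2(G^2+\sigma^2)$ rather than the paper's $K^2G^2$, and a $K$ vs.\ $K^2$ factor in the bias split), which do not affect the $\tilde{\mathcal{O}}((mKT)^{-1})$ rate.
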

	
	\begin{proof}

		\begin{align*}
			&\| \frac{1}{K} \mathbb{E}[\tilde{\Delta}_{t}] - \nabla f(x) \|^2 \leq 2 \left\| \nabla f(\x_t)- \frac{1}{K} \mathbb{E}[\Delta_t] \right\|^2 + \frac{2}{K} \left\| \mathbb{E}[\Delta_t]- \mathbb{E}[\tilde{\Delta}_t] \right\|^2 \\
			&\leq \frac{2}{m K} \sum_{i \in [m]} \sum_{k \in [K]} \mathbb{E} \left\| \nabla f_i(\x_t)- \nabla f_i (\x_{t, i}^k) \right\|^2 + \frac{2}{m K} \sum_{i \in [m]} \left\| \mathbb{E}[\Delta_{t,i}] - \mathbb{E}[\tilde{\Delta}_{t,i}] \right\|^2 \\
			&\leq \frac{ 2 L^2 \eta_L^2}{m K} \sum_{i \in [m]} \sum_{k \in [K]} \mathbb{E}\left\| \sum_{j \in [k]} \nabla f_i (\x_{t, i}^j, \xi_{t, i}^j) \right\|^2 + \frac{32 \sigma^4}{\lambda^2} \\
			&\leq 2 L^2 \eta_L^2 K^2 (G^2 + \sigma^2) + \frac{32 \sigma^4}{\lambda^2}.
		\end{align*}

		Similarly, we have 
		\begin{align*}
			f(\x_t) - f(\x_*) &\leq \frac{1}{2\eta \eta_L K} \left[- \mathbb{E} [\| \x_{t+1} - x_* \|^2] + (1 - \frac{1}{2} \mu \eta \eta_L K )\| \x_{t} - x_* \|^2 \right] \\
			&\quad + \frac{\eta \eta_L}{2 K} \mathbb{E} [\| \tilde{\Delta}_{t} \|^2] + \frac{4}{\mu} \| \mathbb{E}[\frac{1}{K} \tilde{\Delta}_{t} - \nabla f(\x_t) \|^2 \\
			&\leq \frac{1}{2\eta \eta_L K} \left[- \mathbb{E} [\| \x_{t+1} - x_* \|^2] + (1 - \frac{1}{2} \mu \eta \eta_L K )\| \x_{t} - x_* \|^2 \right] \\
			&\quad + \frac{\eta \eta_L K}{2} G^2 + \frac{4}{\mu} [2 L^2 \eta_L^2 K^2 (G^2 + \sigma^2) + \frac{32 \sigma^4}{\lambda^2}],
		\end{align*}

		Let $w_t = (1 - \frac{1}{2} \mu \eta \eta_L K)^{1 - t} $, $\bar{\x}_T = \x_t$ with probability $\frac{w_t}{\sum_{j \in [T]} w_j}$.
		\begin{align*}
			f(\bar{x}_T) - f(\x_*)
			&\leq \frac{1}{\sum_{j \in [T]} w_j} \sum_{t \in [T]} \left( \frac{w_t}{2\eta \eta_L K} \left[- \| \x_{t+1} - x_* \|^2 + (1 - \frac{1}{2} \mu \eta \eta_L K )\| \x_{t} - x_* \|^2 \right] \right) \\
			&\quad + \frac{\eta \eta_L K}{2} G^2 + \frac{4}{\mu} [2 L^2 \eta_L^2 K^2 (G^2 + \sigma^2) + \frac{32 \sigma^4}{\lambda^2}] \\
			&\leq \frac{1}{\sum_{j \in [T]} w_j} \frac{1}{2\eta \eta_L K} \| \x_{1} - x_* \|^2 + \frac{\eta \eta_L K}{2} G^2 + \frac{4}{\mu} [2 L^2 \eta_L^2 K^2 (G^2 + \sigma^2) + \frac{32 \sigma^4}{\lambda^2}].
		\end{align*}

		Same as that in heavy-tailed noise case, we have the same bound for $2 \eta \eta_L K \sum_{t \in [T]} w_t$:
		\begin{align*}
			2 \eta \eta_L K \sum_{t \in [T]} w_t 
			&\geq \frac{2}{\mu} \left(1 - \frac{1}{2} \mu \eta \eta_L K \right)^{-T},
		\end{align*}
		where it requires $\eta \eta_L K \geq \frac{2}{\mu T}$.

		\begin{align*}
			f(\bar{x}_T) - f(\x_*)
			&\leq \frac{\mu}{2} \left(1 - \frac{1}{2} \mu \eta \eta_L K\right)^T + \frac{\eta \eta_L K}{2} G^2 + \frac{4}{\mu} [2 L^2 \eta_L^2 K^2 (G^2 + \sigma^2) + \frac{32 \sigma^4}{\lambda^2}] \\
			&\leq \frac{\mu}{2} \exp{\left( - \frac{1}{2} \mu \eta \eta_L K T \right)} + \frac{\eta \eta_L K}{2} G^2 + \frac{4}{\mu} [2 L^2 \eta_L^2 K^2 (G^2 + \sigma^2) + \frac{32 \sigma^4}{\lambda^2}].
		\end{align*}

		Let $\eta \eta_L K = \frac{2c}{\mu} \frac{\ln(T)}{mKT}$ ($c > 0$ is a constant and $T^{-c + 1} \leq (mK)^{-1} $), $\lambda \geq (mKT)^{\frac{1}{2}}$, and $\eta_L \leq (mT)^{-\frac{1}{2}} K^{-\frac{3}{2}}$,
		\begin{align*}
			f(\bar{x}_T) - f(\x_*)
			&= \tilde{\mathcal{O}}((mKT)^{-1}).
		\end{align*}
	\end{proof}






\section{Experiments in Section~\ref{sec:mot}}
In this section, we provide experimental details to demonstrate the fat-tailed noise phenomenon in federated learning. 
We conduct experiments with CNN on CIFAR-10 dataset as shown in Section~\ref{sec:mot}, and provide additional results of RNN model on Shakespeare dataset. 
Furthermore, we verify the accuracy of $\alpha$ estimation with logistic regression on MNIST dataset.

\subsection{CNN on CIFAR-10 Dataset}

\subsubsection{Experiment details}\label{CNN_CIFAR10}
We run a convolutional neural network (CNN) model on CIFAR-10 dataset using FedAvg. The CNN architecture is shown in Table~\ref{table:CNN_arch}.
To simulate data heterogeneity across clients, we manually distribute the the data to each client in a label-based partition. 
Specifically, we split the data according to the classes ($p$) of images that each client has. 
Then, we randomly distribute these partitioned data to $m=100$ clients such that each client has only $p$ classes of images in both training and test data, which causes the heterogeneity of data among different clients. 
For example, for $p=10$, each client contains training/test data samples with ten classes.
Since CIFAR-10 has $10$ classes of images, $p=10$ is the nearly i.i.d case. 
For the remaining $p$, each client contains data samples with class $p$. Therefore, the classes ($p$) of images in each client's local dataset can be used to represent the non-i.i.d. degree. 
The smaller the $p$-value, the more heterogeneous the data between clients.

In this experimental setting, we use the global learning rate $\frac{\eta\eta_{L}}{m}=1.0$ and the local learning rate $\eta_{L}=0.1$. 
The batch size is set to 500, and the communication round is $T=4000$.
We run this experiment in different cases, including singleSGD and different local epochs $ \left\{ 1,2,5 \right\}$ and non-iid index $p \in \left\{ 1,2,5,10 \right\}$. Single SGD means one local update step, which is equivalent to mini-batch SGD.

\begin{table}[t!]
\centering
\caption{CNN architecture for CIFAR-10.}
\label{table:CNN_arch}
\begin{tabular}{ll}
\hline
LAYER TYPE             & SIZE     \\ \hline
Convolution + ReLu     & $3\times 32\times 5$   \\
Max Pooling            & $2\times 2$      \\
Convolution + ReLu     & $32\times 64\times 5$  \\
Max Pooling            & $2\times 2$      \\
Fully Connected + ReLU & $1600\times 512$ \\
Fully Connected + ReLU & $512\times 128$  \\
Fully Connected        & $128\times 10$   \\ \hline
\end{tabular}
\end{table}
\FloatBarrier

\subsubsection{Additional experimental results}
We provide additional distributions of the norms of the pseudo-gradient noises in different cases as follows.
From Fig.~\ref{noise_cifar_singleSGD}-~\ref{noise_cifar_localepoch5}, the observation is that the gradient norm statistics are contracted together for more iid cases while dispersed uniformly for more non-iid cases.
This is 


\begin{figure*}[htbp]
\centering
\begin{subfigure}[t]{0.24\textwidth}
    \centering
    \includegraphics[width=1\linewidth]{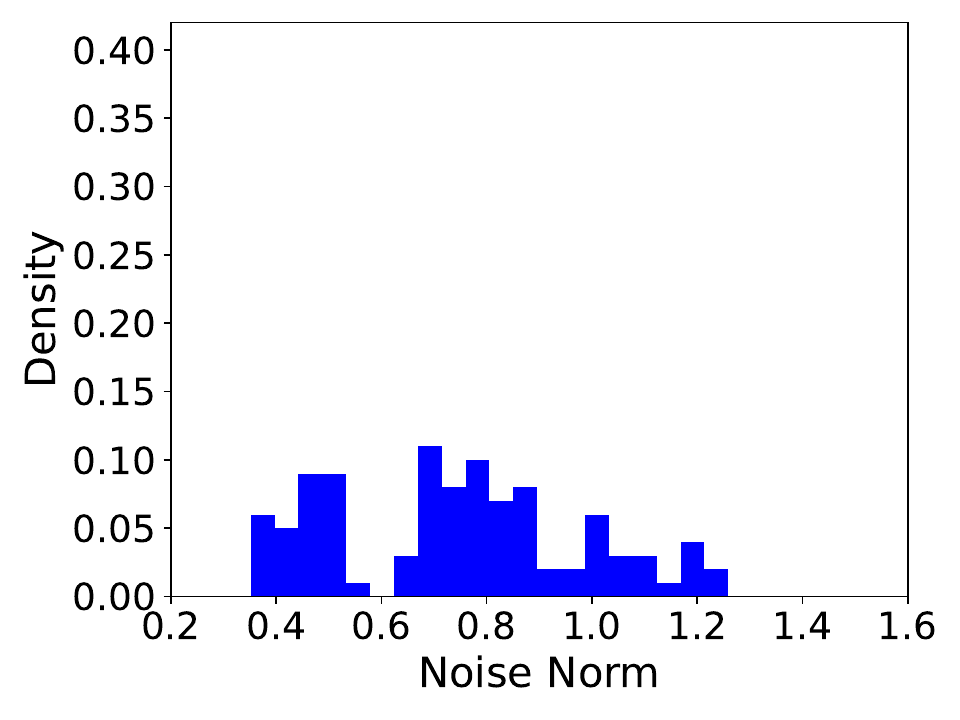}
    \caption{$p=1$}
\end{subfigure}
\begin{subfigure}[t]{0.24\textwidth}
    \centering
    \includegraphics[width=1\linewidth]{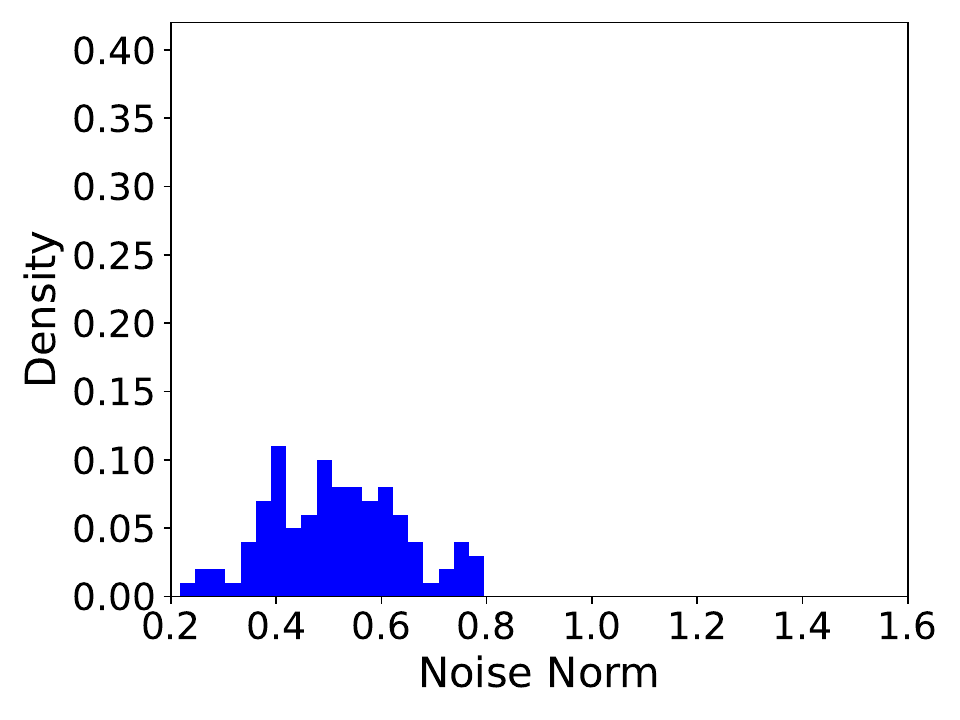}
    \caption{$p=2$}
\end{subfigure}
\begin{subfigure}[t]{0.24\textwidth}
    \centering
    \includegraphics[width=1\linewidth]{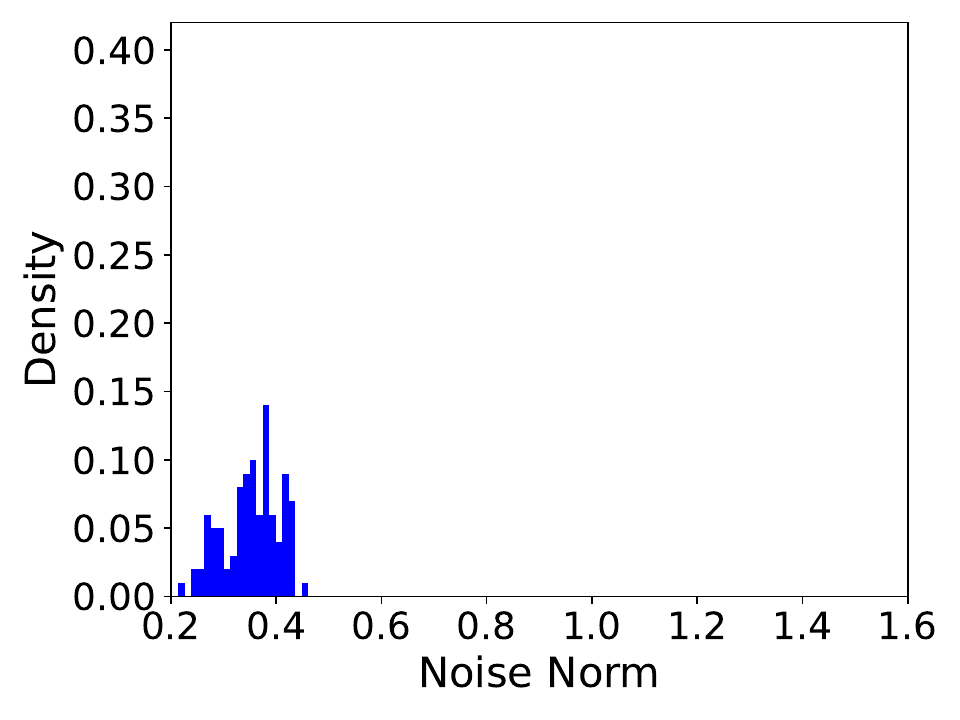}
    \caption{$p=5$}
\end{subfigure}
\begin{subfigure}[t]{0.24\textwidth}
    \centering
    \includegraphics[width=1\linewidth]{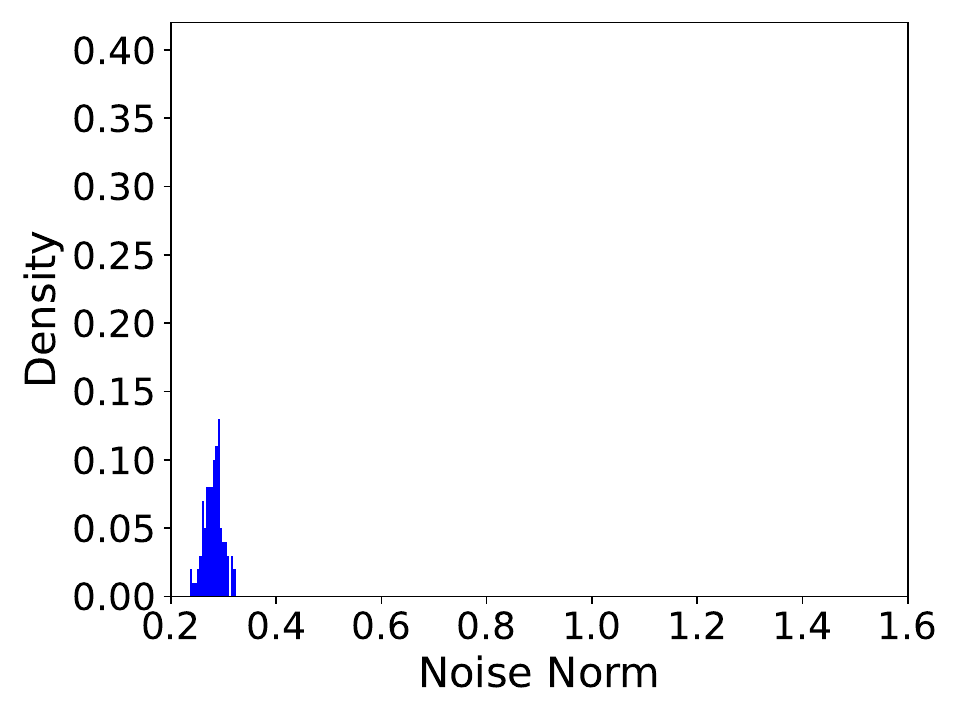}
    \caption{$p=10$}
\end{subfigure}
\caption{Distributions of the norms of the pseudo-gradient noises for CIFAR-10 dataset in the case of \textit{Single SGD}.}
\label{noise_cifar_singleSGD}
\vspace{-0.1in}
\end{figure*}

\begin{figure*}[htbp]
\centering
\begin{subfigure}[t]{0.24\textwidth}
    \centering
    \includegraphics[width=1\linewidth]{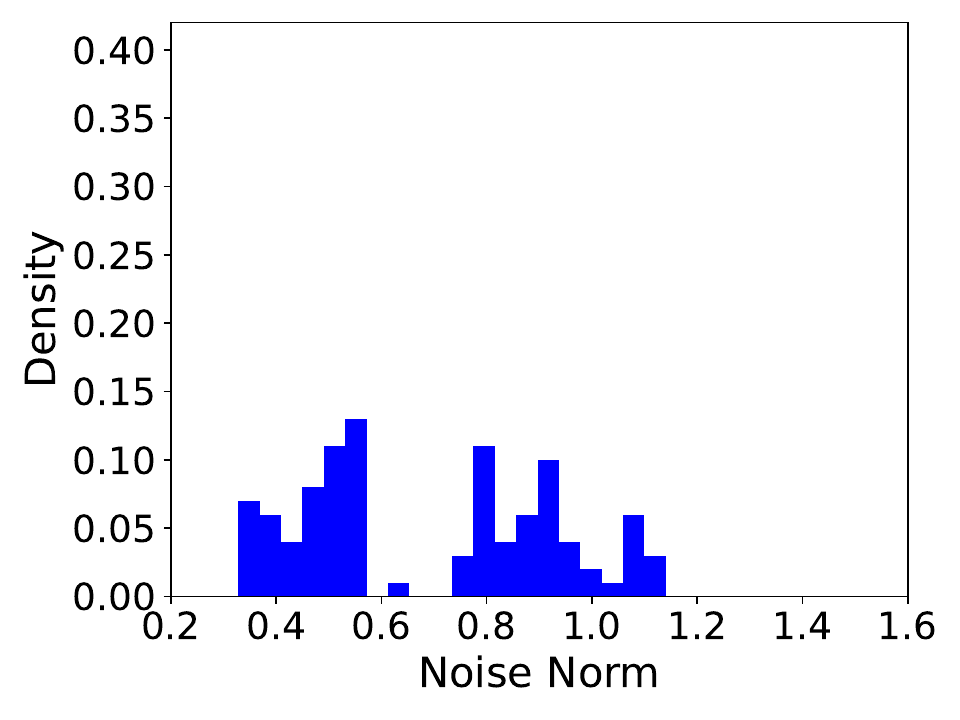}
    \caption{$p=1$}
\end{subfigure}
\begin{subfigure}[t]{0.24\textwidth}
    \centering
    \includegraphics[width=1\linewidth]{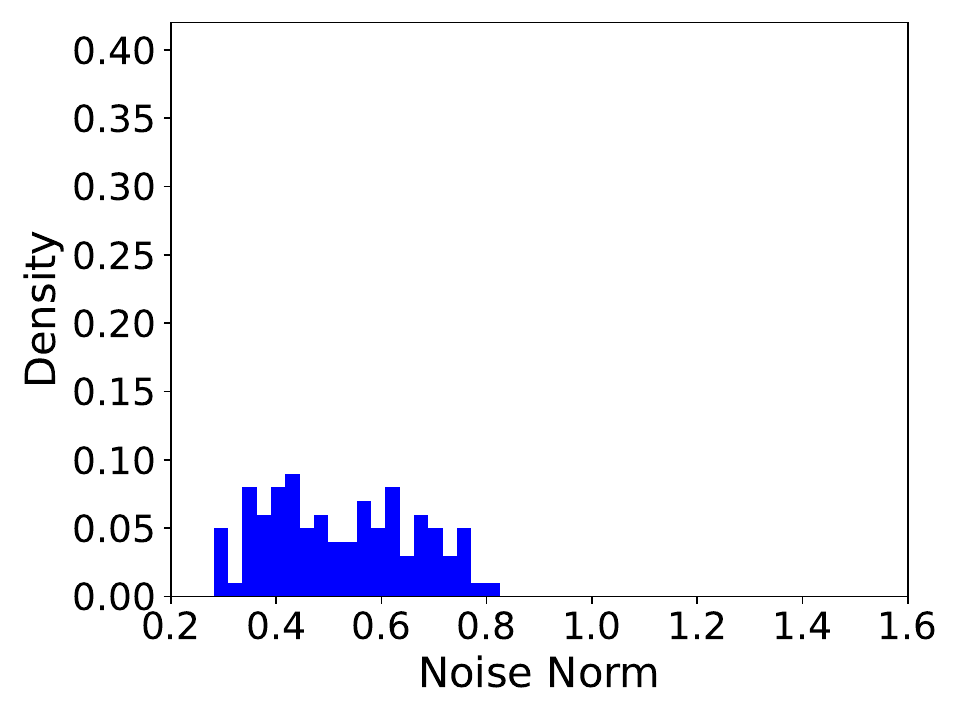}
    \caption{$p=2$}
\end{subfigure}
\begin{subfigure}[t]{0.24\textwidth}
    \centering
    \includegraphics[width=1\linewidth]{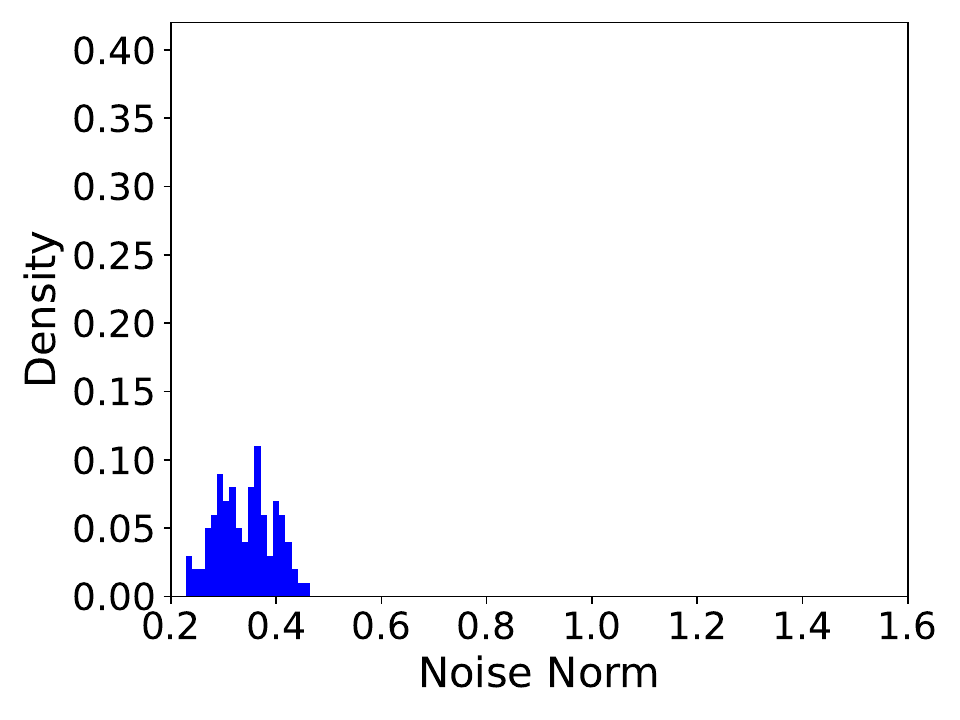}
    \caption{$p=5$}
\end{subfigure}
\begin{subfigure}[t]{0.24\textwidth}
    \centering
    \includegraphics[width=1\linewidth]{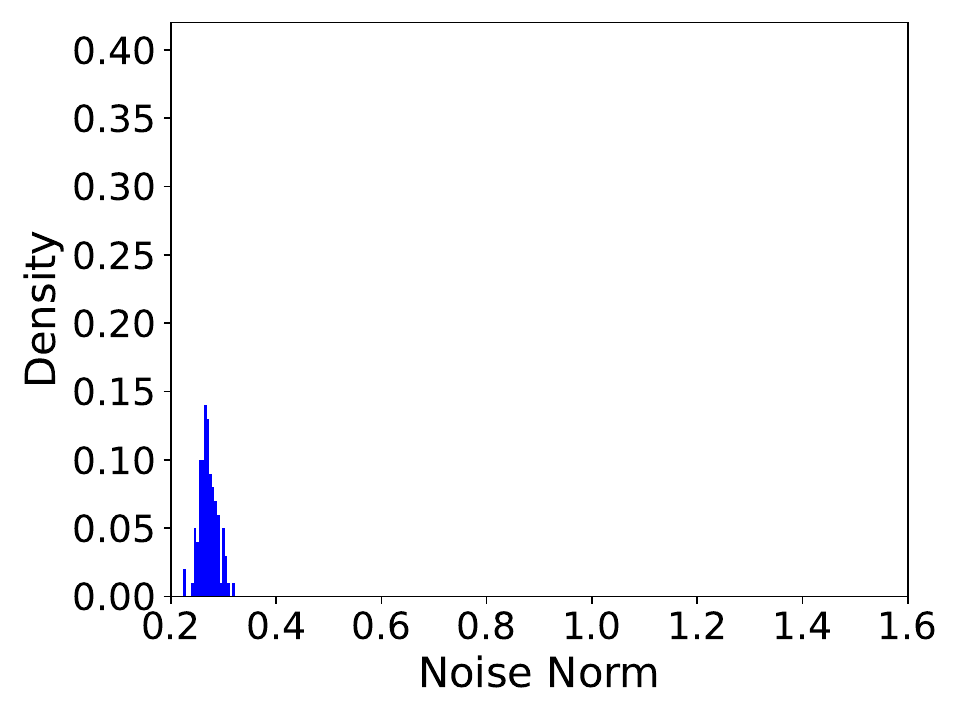}
    \caption{$p=10$}
\end{subfigure}
\caption{Distributions of the norms of the pseudo-gradient noises for CIFAR-10 dataset in the case of \textit{Local Epoch=1}.}
\vspace{-0.1in}
\end{figure*}

\begin{figure*}[htbp]
\centering
\begin{subfigure}[t]{0.24\textwidth}
    \centering
    \includegraphics[width=1\linewidth]{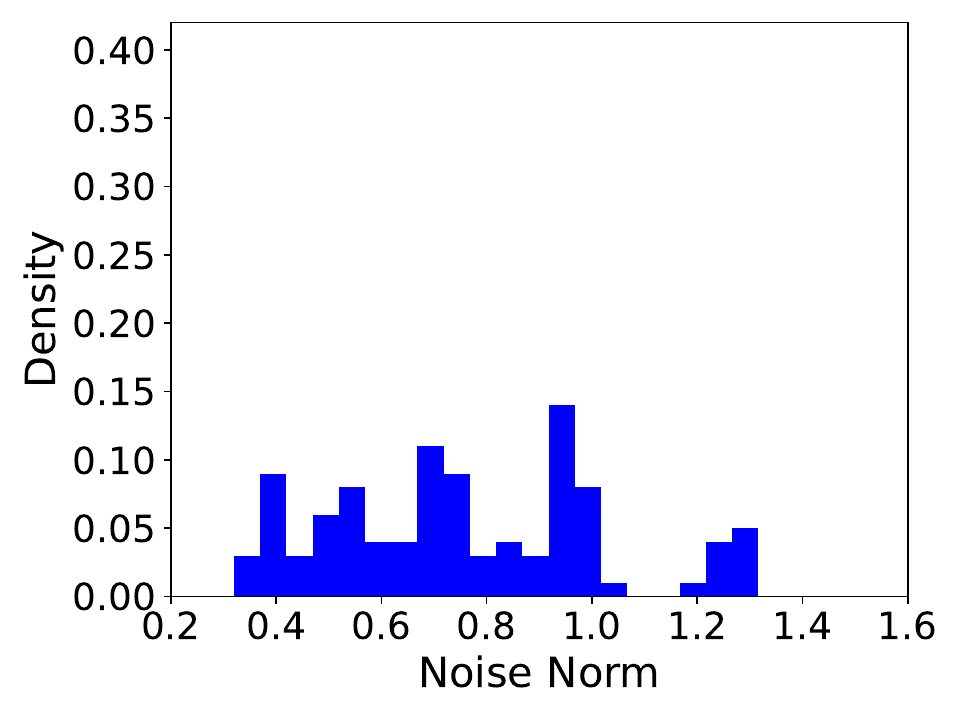}
    \caption{$p=1$}
\end{subfigure}
\begin{subfigure}[t]{0.24\textwidth}
    \centering
    \includegraphics[width=1\linewidth]{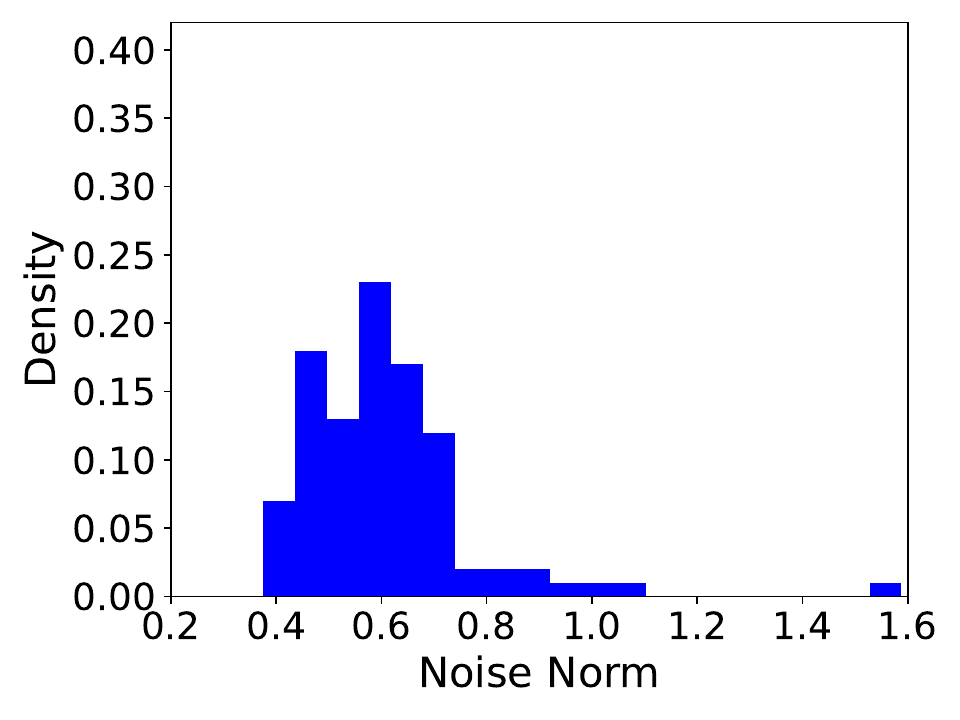}
    \caption{$p=2$}
\end{subfigure}
\begin{subfigure}[t]{0.24\textwidth}
    \centering
    \includegraphics[width=1\linewidth]{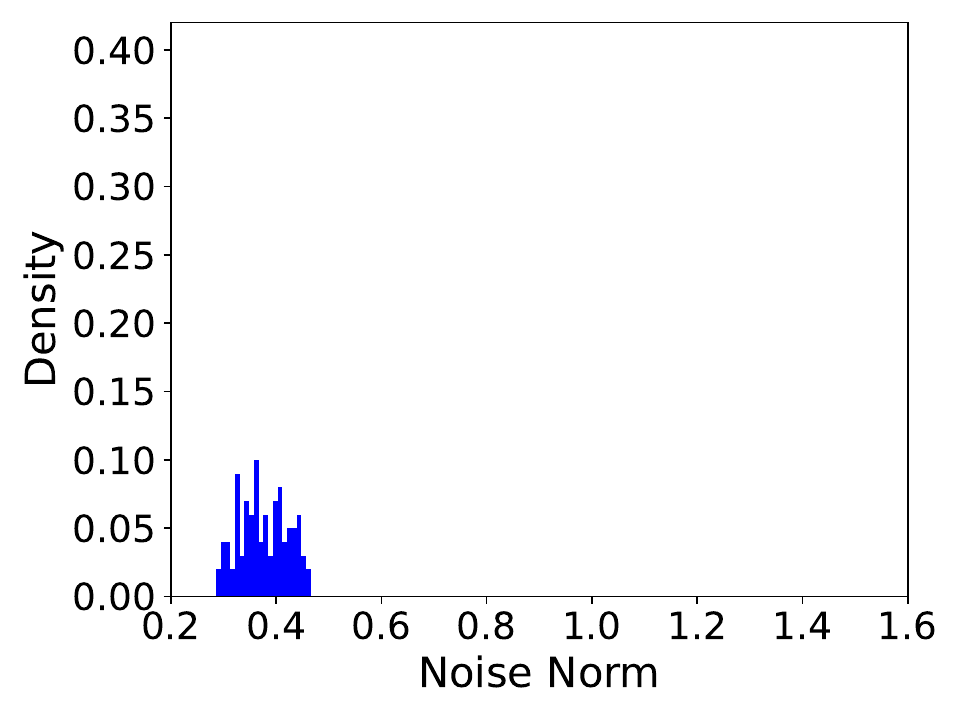}
    \caption{$p=5$}
\end{subfigure}
\begin{subfigure}[t]{0.24\textwidth}
    \centering
    \includegraphics[width=1\linewidth]{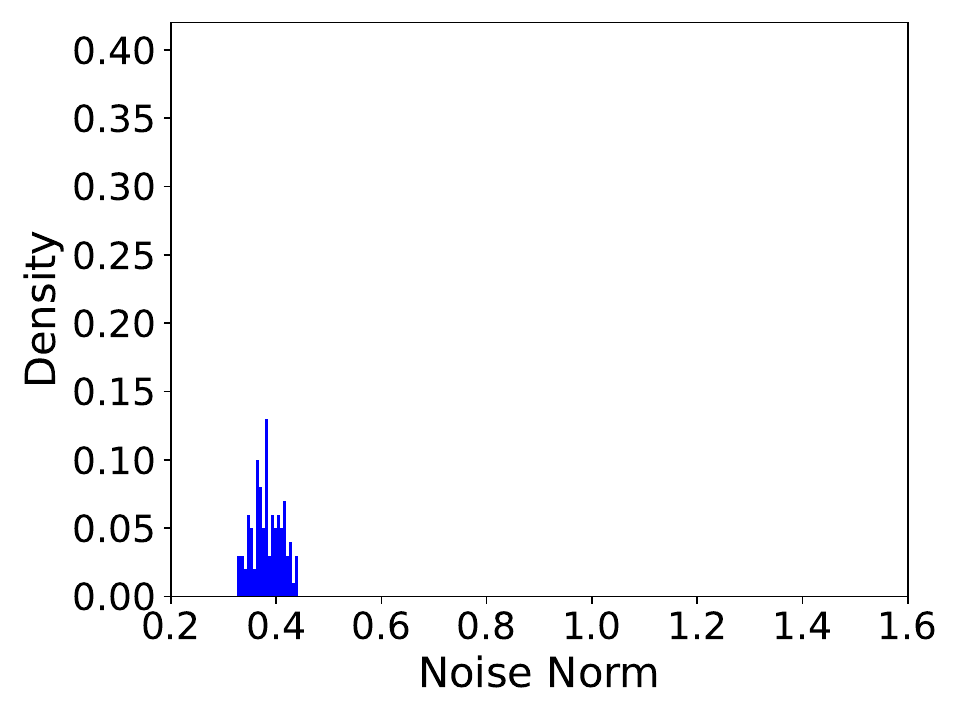}
    \caption{$p=10$}
\end{subfigure}
\caption{Distributions of the norms of the pseudo-gradient noises for CIFAR-10 dataset in the case of \textit{Local Epoch=2}.}
\vspace{-0.1in}
\end{figure*}

\begin{figure*}[htbp]
\centering
\begin{subfigure}[t]{0.24\textwidth}
    \centering
    \includegraphics[width=1\linewidth]{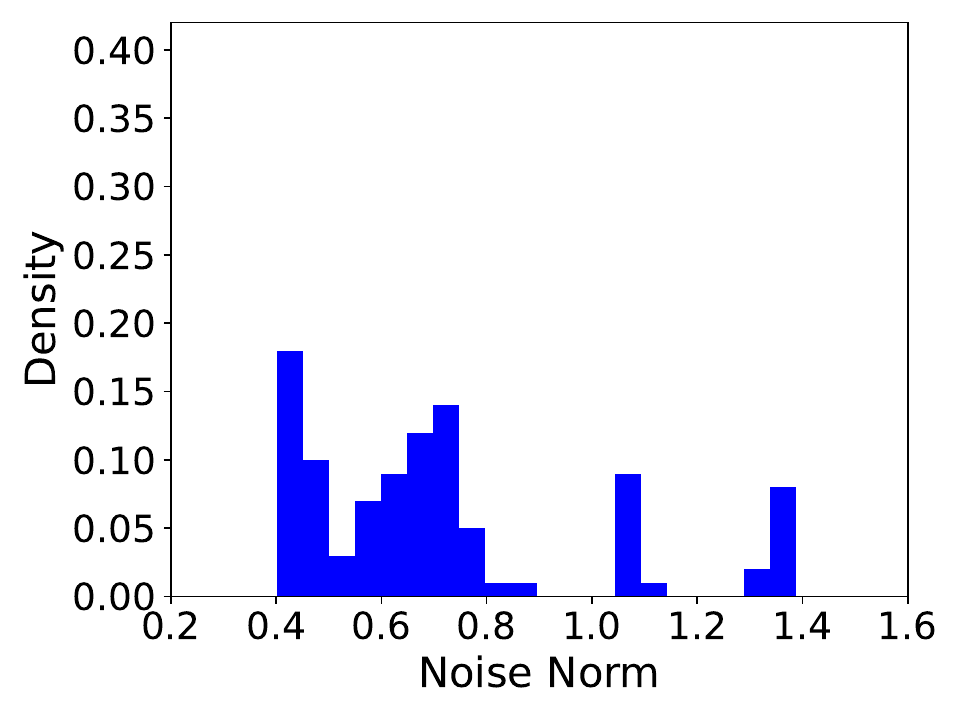}
    \caption{$p=1$}
\end{subfigure}
\begin{subfigure}[t]{0.24\textwidth}
    \centering
    \includegraphics[width=1\linewidth]{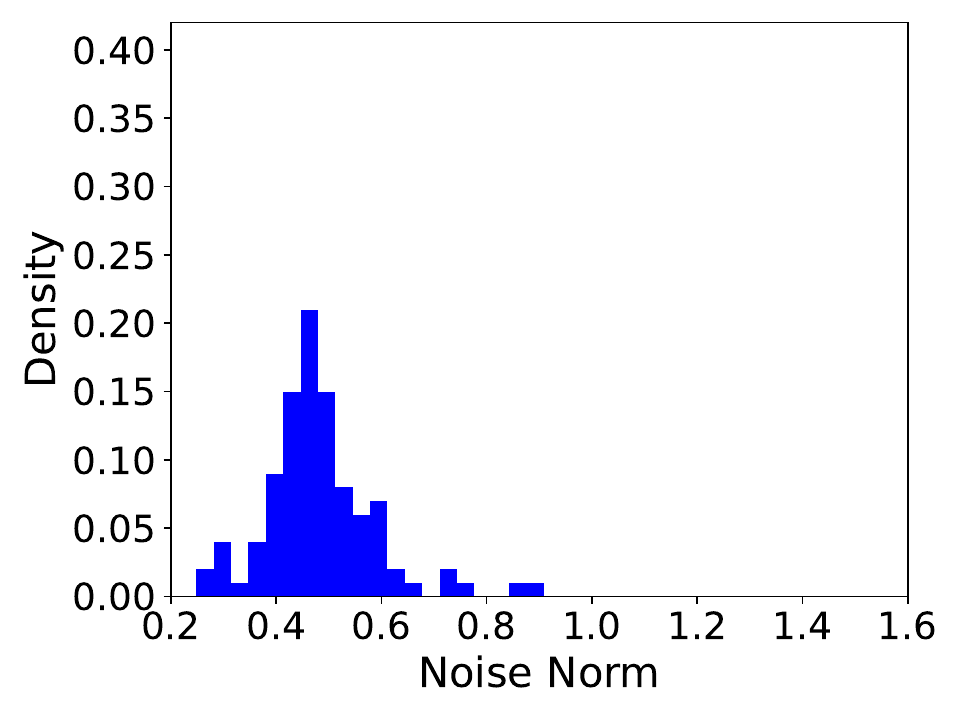}
    \caption{$p=2$}
\end{subfigure}
\begin{subfigure}[t]{0.24\textwidth}
    \centering
    \includegraphics[width=1\linewidth]{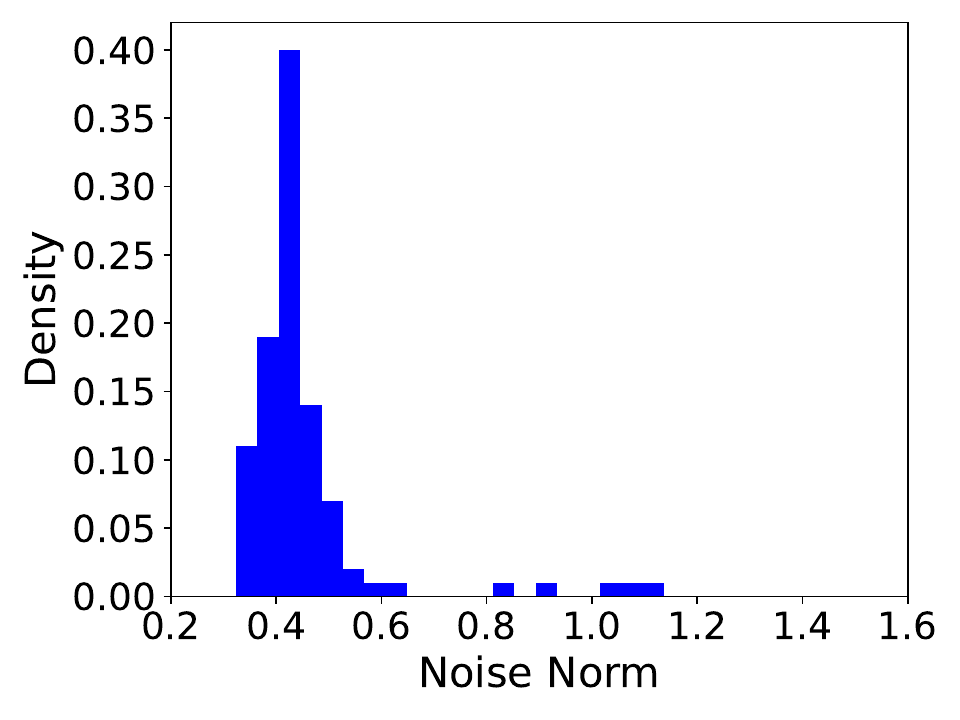}
    \caption{$p=5$}
\end{subfigure}
\begin{subfigure}[t]{0.24\textwidth}
    \centering
    \includegraphics[width=1\linewidth]{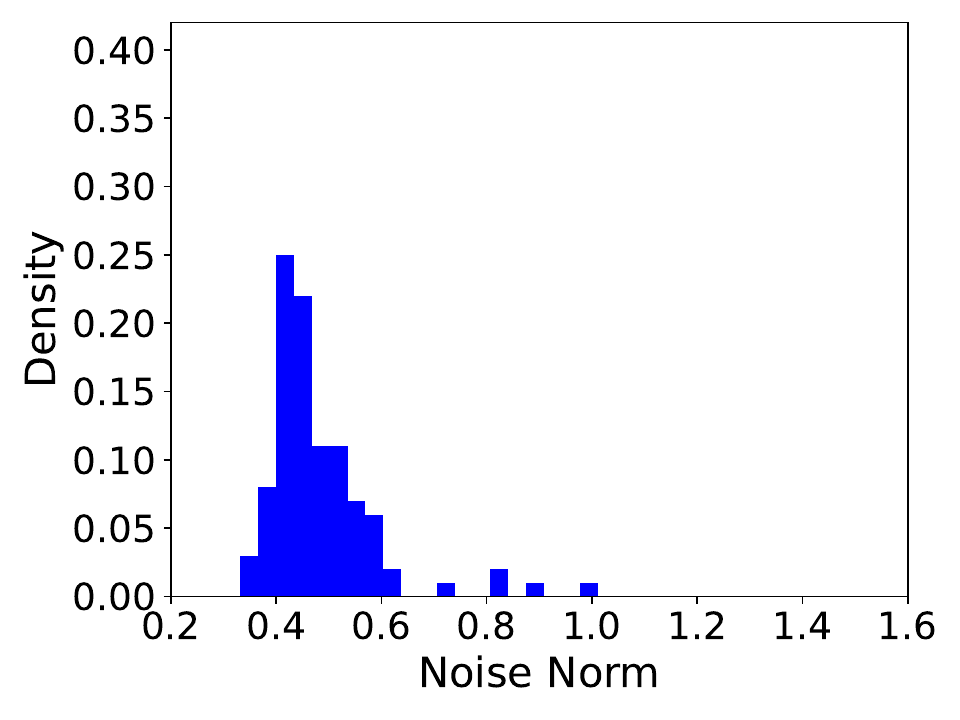}
    \caption{$p=10$}
\end{subfigure}
\caption{Distributions of the norms of the pseudo-gradient noises for CIFAR-10 dataset in the case of \textit{Local Epoch=5}.}
\label{noise_cifar_localepoch5}
\vspace{-0.1in}
\end{figure*}
\FloatBarrier

\subsection{RNN on Shakespeare Dataset}

\subsubsection{Experiment details}
To provide more evidences of the fat-tailed noise phenomenon, we further run a recurrent neural network (RNN) model on Shakespeare dataset.

Shakespeare dataset is a natural non-iid dataset, and it is built from \textit{The Complete Works of William Shakespeare}~\cite{mcmahan2017communication}. 
The learning task is to predict next character, and there are 80 classes of characters in total.
We use a two-layer LSTM classifier containing 100 hidden units with an 8-dimensional (8D) embedding layer.
The model inputs a sequence of 80 characters, embeds each of the characters into a learned 8D space, and then outputs one character per training sample after two LSTM layers and a densely-connected layer.
The dataset and model are taken from~\cite{li2020federated}.

There are $m=143$ clients participating in this experiment.
The global learning rate is chosen as $1.0$, and the local learning rate is chosen as $0.8$.
The batch size is set to $10$, and the communication round is $T=150$.

\subsubsection{Experimental results}
We show the results when local step is set to be one (Single SGD), and multiple local epochs $\{1, 2, 5 \}$. 
In Fig.~\ref{alpha_shakespeare}, we observe that the $\alpha$-value is smaller than 2, and it increases when the number of local epoch increases. This implies that the gradient noise is fat-tailed.
Fig.~\ref{gradient_norm_shakespeare} shows that the distributions of the norms of the pseudo-gradient noises are fat-tailed.

\begin{figure*}[htbp]
\centering
\includegraphics[width=0.5\linewidth]{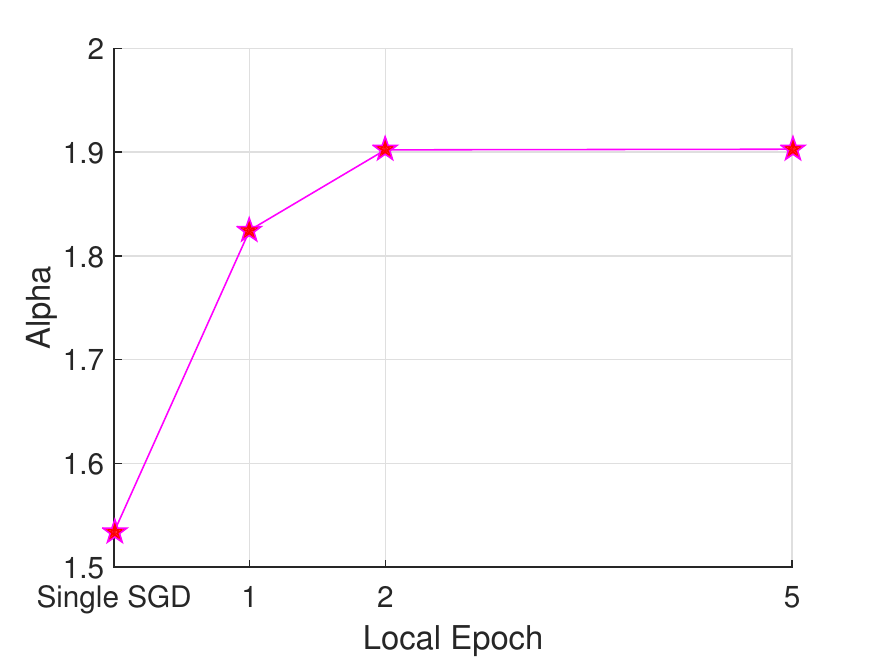}
\caption{Estimation of $\alpha$ for Shakespeare dataset.}
\label{alpha_shakespeare}
\vspace{-0.1in}
\end{figure*}

\begin{figure*}[htbp]
\centering
\begin{subfigure}[t]{0.24\textwidth}
    \centering
    \includegraphics[width=1\linewidth]{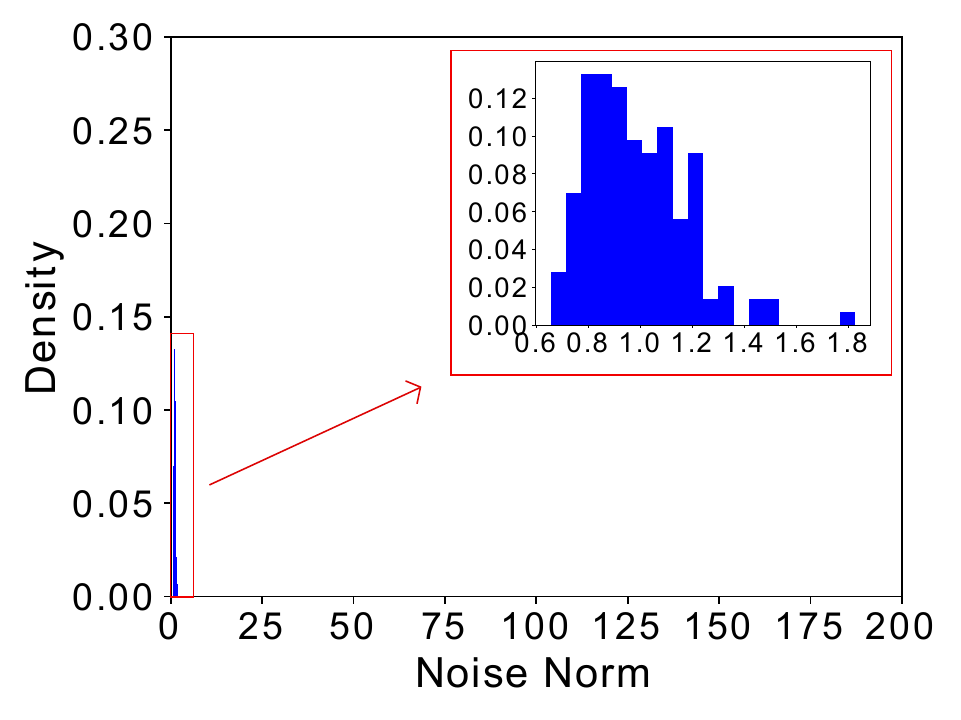}
    \caption{Single SGD}
\end{subfigure}
\begin{subfigure}[t]{0.24\textwidth}
    \centering
    \includegraphics[width=1\linewidth]{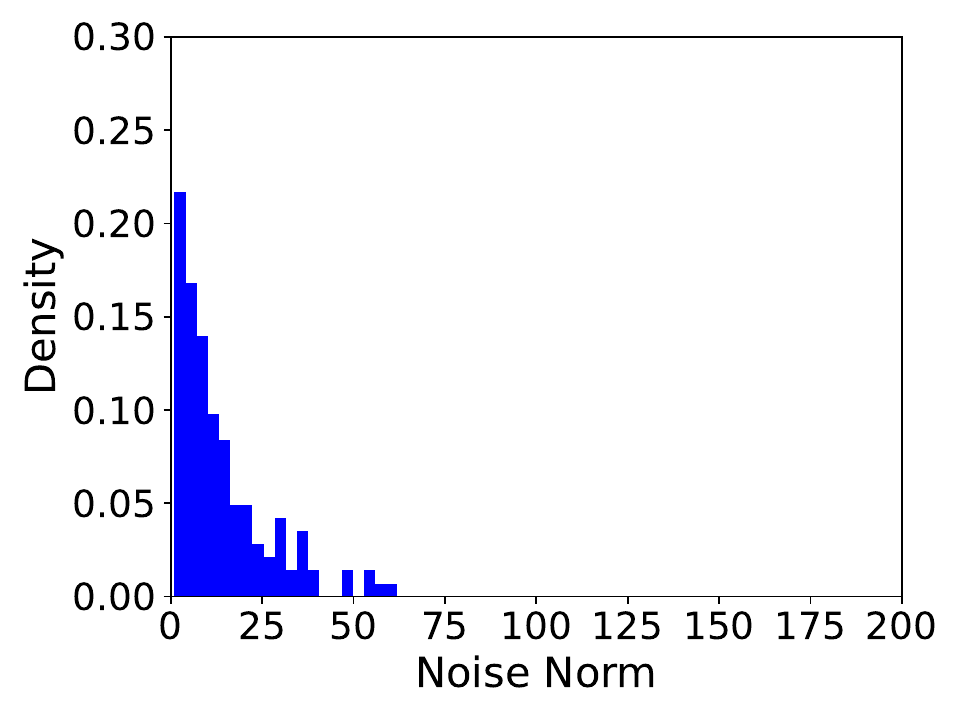}
    \caption{Local Epoch=1}
\end{subfigure}
\begin{subfigure}[t]{0.24\textwidth}
    \centering
    \includegraphics[width=1\linewidth]{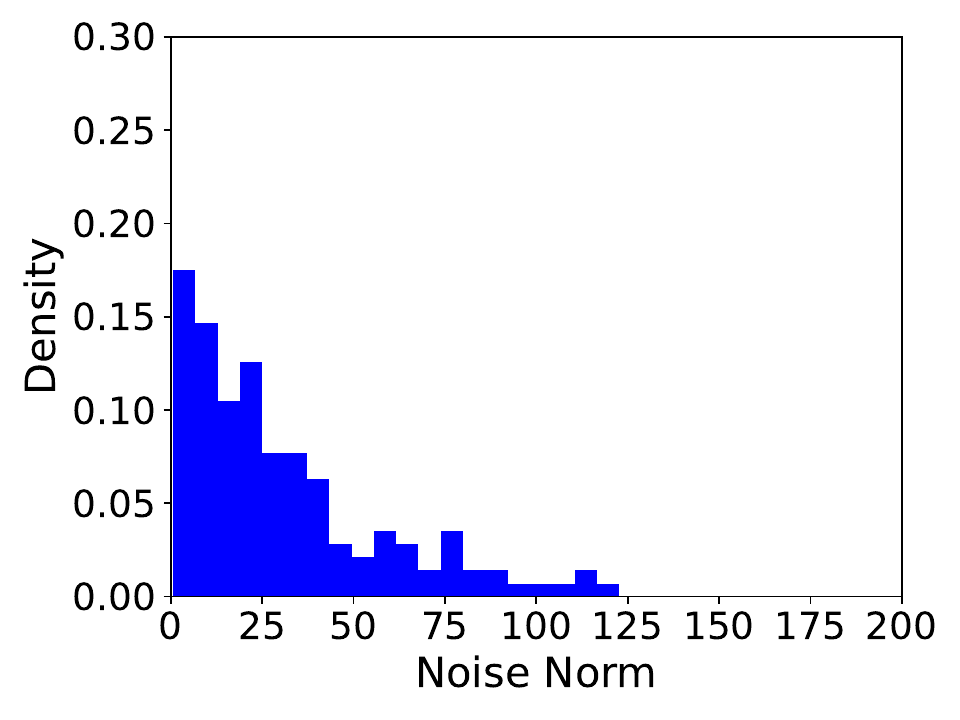}
    \caption{Local Epoch=2}
\end{subfigure}
\begin{subfigure}[t]{0.24\textwidth}
    \centering
    \includegraphics[width=1\linewidth]{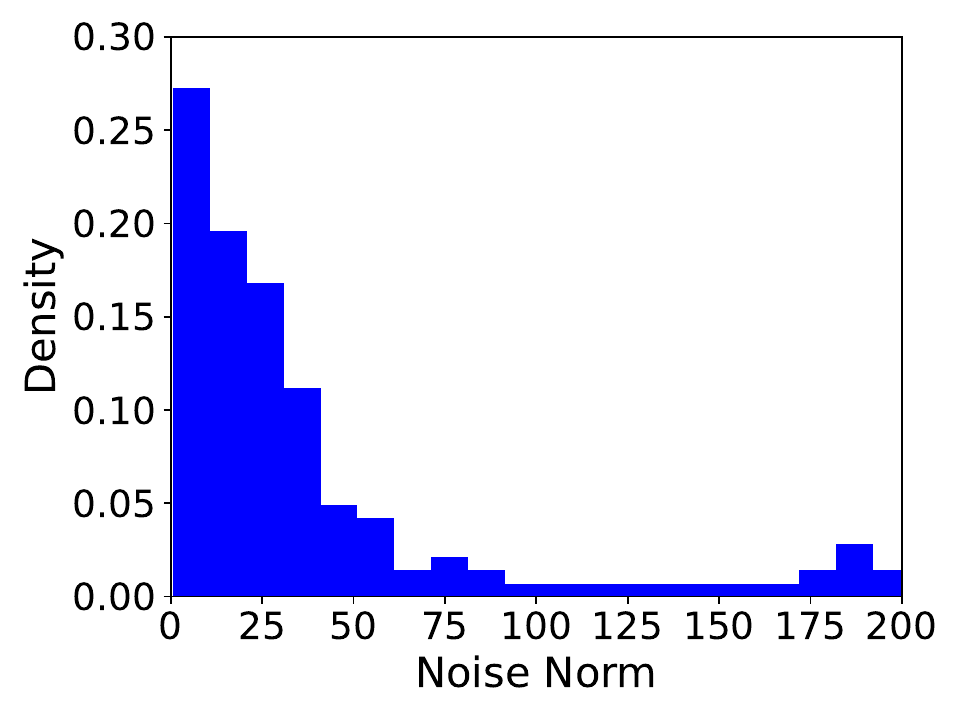}
    \caption{Local Epoch=5}
\end{subfigure}
\caption{Distributions of the norms of the pseudo-gradient noises for Shakespeare dataset.}
\label{gradient_norm_shakespeare}
\vspace{-0.1in}
\end{figure*}

\subsection{Accuracy of Alpha Estimation (Logistic Regression on MNIST Dataset)}

Accurate $\alpha$-value computation requires the full-gradient calculation, and we have to compute both full-gradient and stochastic gradient in each local step. 
This is computationally expensive.
Instead, we use an estimation to approximate the exact $\alpha$-value.
The full-gradient is replaced by the mean value of the stochastic gradients.
We verify the accuracy of this estimation method by running logistic regression on MNIST dataset~\cite{lecun1998gradient}.
The details and the results are described as follows.

\subsubsection{Experiment details}
MNIST dataset contains ten classes of images, and it is manually partitioned using the same method as to partition CIFAR-10 dataset (see details in Appendix~\ref{CNN_CIFAR10}). The number of classes ($p$) that each client has can be used to represent the non-iid level.

$m=100$ clients participate in the experiment. 
The communication round is $T=150$.
The global learning rate is set to $1.0$, and the local learning rate is set to $0.1$.
The batch size is chosen to be $64$.

\subsubsection{Experimental results}
Table~\ref{error_rate_alpha} shows the error rate of $\alpha$-value estimation in different cases, and this implies that the estimation of $\alpha$-value is within an acceptable margin of error.

\begin{table}[]
\centering
\caption{Error rate (\%) of $\alpha$-value estimation.}
\label{error_rate_alpha}
\begin{tabular}{ccccc}
\hline
\multirow{2}{*}{} & \multicolumn{4}{c}{NonIID Index (p)} \\ \cline{2-5} 
                  & 1       & 2       & 5       & 10     \\ \hline
Single SGD        & -2.82   & -1.09   & -0.12   & 3.12   \\
Local Epoch=1     & 1.19    & 0.37    & 1.4     & 2.08   \\
Local Epoch=2     & 1.8     & 1.4     & 1.43    & 1.74   \\
Local Epoch=5     & 1.86    & 0.23    & 0.56    & 0.25   \\ \hline
\end{tabular}
\end{table}


\section{Experiments in Section~\ref{sec:numerical}}

In this section, we describe the details of the  numerical experiments from Section~\ref{sec:numerical} and provide some extra experimental results.

\subsection{Experiment details}

\subsubsection{Strongly Convex Model with Synthetic Data}
In these experiments, we consider a strongly convex model for Problem~(\ref{objective}) as follows:
\begin{align*}
    & f_{i}\left( x \right)=\mathbb{E}_{\xi}\left[ f_{i}\left( x,\xi \right) \right]\\
    & f_{i}\left( x,\xi \right)=\frac{1}{2}\left\| x \right\|^{2}+\left\langle \xi,x \right\rangle,
\end{align*}
where $x\in \mathbb{R}^{3\times 1}$ and $\xi$ is a random vector. The optimal solution is $f\left( x^{*} \right)=0$ with $x^{*}=\left[ 0;0;0 \right]$.

To compare the performance of FedAvg, \algpi and \algpr, we consider the noise $\xi$ to be a Cauchy distribution($\alpha<2$, fat-tailed) with a location parameter of $0$ and a scale parameter of $2.1$.

To compare the performance of \algpi and \algpr under different scenarios, we consider the noise $\xi$ having different tail-indexes ($\alpha=0.5, 1.0, \text{ and }1.5$) with the same location parameters of $0$ and the same scale parameters of $1$.

For all the distributions of $\xi$ mentioned above, we use the same experimental setup. There are $m=5$ clients participating in the training. We choose the starting point $x_{0}=\left[ 2;1;1.5 \right]$. We set the global learning rate $\frac{\eta\eta_{L}}{m}=0.1$ and the local learning rate $\eta_{L}=0.1$. The local steps we use is $K=2$, and the communication round is $T=300$. The clipping parameter in \algpi we select is $\lambda=3$, and the clipping parameter in \algpr is $\lambda=5$.

\subsubsection{CNN (Non-convex Model) on the CIFAR-10}
To test the performance of \algpi and \algpr for non-convex function, we run a convolutional neural network (CNN) on CIFAR-10 dataset. We compare \algpi and \algpr with FedAvg under different data heterogeneity. 


In this experimental setting, we randomly select five clients from $m=10$ clients to participate in each round of the training. 
The local epoch we use is two. 
The clipping parameter in \algpi we select is $\lambda=50$, and the clipping parameter in \algpr is $\lambda=2$. 
All the remaining settings are the same as described in Appendix~\ref{CNN_CIFAR10}.


\subsection{Additional experimental Results}
We provide two additional results when applying FedAvg, \algpi and \algpr to the CNN model on CIFAR-10 dataset. In Fig.~\ref{fig:noniid}, we show the percentage of successful training over 5 trials in non-i.i.d. cases when the non-i.i.d. index $p=1$ and $p=5$. These results further support our finding that \alg methods and especially \algpi reduce catastrophic training failures compared to FedAvg.

\begin{figure}[t!]
     \centering
     \begin{subfigure}[b]{0.48\textwidth}
         \centering
         \includegraphics[width=\textwidth]{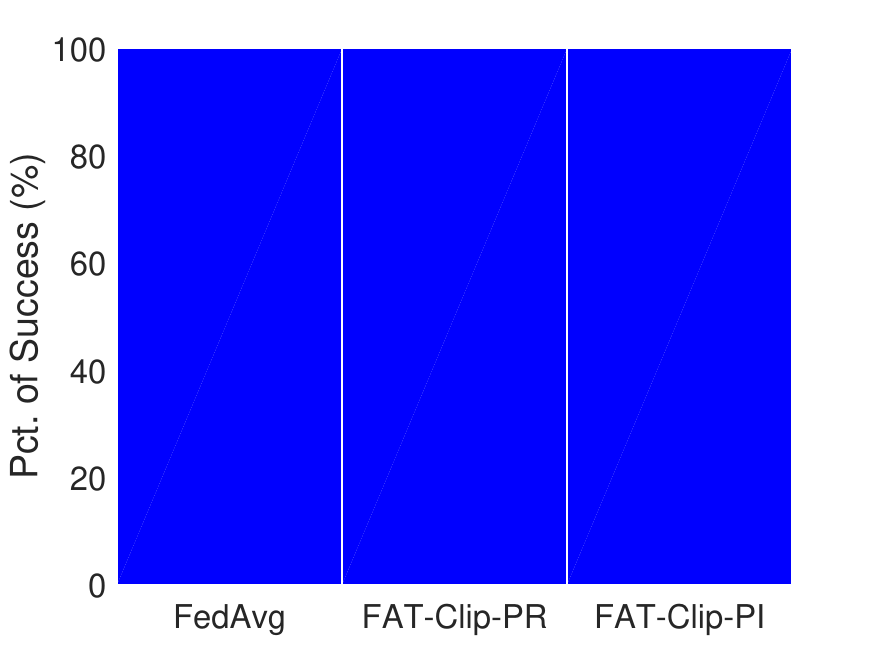}
         \caption{non-i.i.d. index $p=1$}
         \label{fig:noniid_1}
     \end{subfigure}
     \hfill
     \begin{subfigure}[b]{0.48\textwidth}
         \centering
         \includegraphics[width=\textwidth]{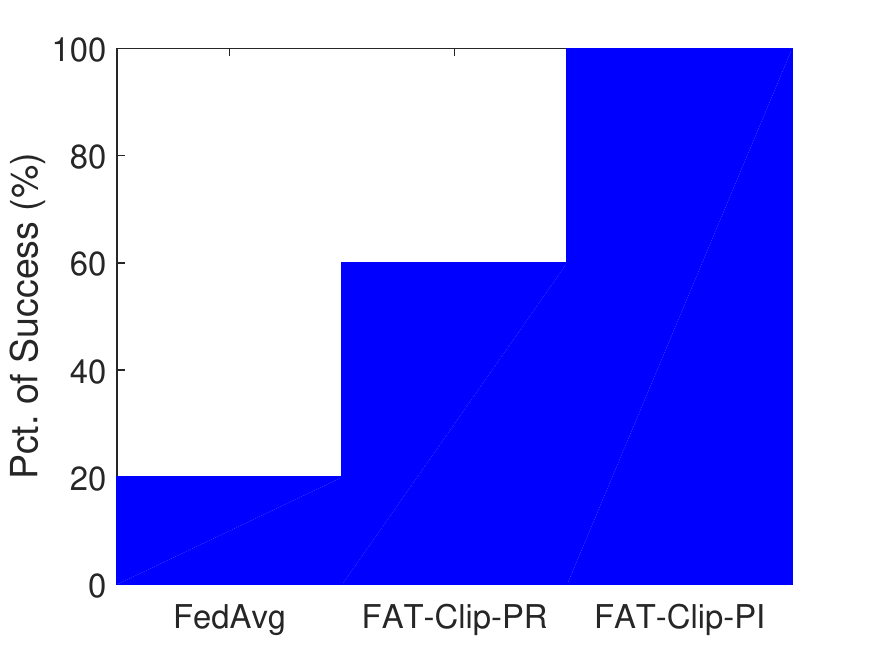}
         \caption{non-i.i.d. index $p=5$}
         \label{fig:noniid_5}
     \end{subfigure}
        \caption{Percentage of successful training over 5 trials when applying FedAvg, \algpr and \algpi to CIFAR-10 dataset in non-i.i.d. cases.}
        \label{fig:noniid}
\end{figure}

\end{document}